\theoremstyle{definition}
\newtheorem{thm}{Theorem}
\newtheorem{lem}{Lemma}
\newtheorem{defe}{Definition}
\newtheorem{cor}{Corollary}
\newtheorem{pro}{Proposition}
\newcommand{\RNum}[1]{\expandafter{\romannumeral #1\relax}}
\begin{document}

\title{Impact of Prior Knowledge and Data Correlation on Privacy Leakage: A Unified Analysis}

\author{\IEEEauthorblockN{Yanan Li, Xuebin Ren, Shusen Yang, and Xinyu Yang}
\IEEEcompsocitemizethanks{
\IEEEcompsocthanksitem This work is supported in part by National Natural Science Foundation of China under Grants 61572398, 61772410, 61802298 and U1811461; the Fundamental Research Funds for the Central Universities under Grant xjj2018237; China Postdoctoral Science Foundation under Grant 2017M623177; the China 1000 Young Talents Program; and the Young Talent Support Plan of Xi'an Jiaotong University.  (corresponding author: Shusen Yang).
\IEEEcompsocthanksitem Y. Li is with National Engineering Laboratory for Big Data Analytics (NEL-BDA), Xi'an Jiaotong University, Xi'an, Shaanxi 710049, China, and also with the School of Mathematics and Statistics, Xi'an Jiaotong University, Xi'an, Shaanxi 710049, China (e-mail: gogll2@stu.xjtu.edu.cn).
\IEEEcompsocthanksitem X. Ren, and X. Yang are with the School of Electronic and Information Engineering, Xi'an Jiaotong University, Xi'an, Shaanxi 710049, China, and also with National Engineering Laboratory for Big Data Analytics (NEL-BDA), Xi'an Jiaotong University, Xi'an, Shaanxi 710049, China, (e-mails: \{xuebinren, yxyphd\}@mail.xjtu.edu.cn).
\IEEEcompsocthanksitem S. Yang is with National Engineering Laboratory for Big Data Analytics (NEL-BDA), Xi'an Jiaotong University, Xi'an, Shaanxi 710049, China, and also with the Ministry of Education Key Lab for Intelligent Networks and Network Security (MOE KLINNS Lab), Xi'an Jiaotong University, Xi'an, Shaanxi 710049, China (e-mail: shusenyang@mail.xjtu.edu.cn).
}}


\maketitle

\begin{abstract}
It has been widely understood that differential privacy (DP) can guarantee rigorous privacy against adversaries with arbitrary prior knowledge. However, recent studies demonstrate that this may not be true for correlated data, and indicate that three factors could influence privacy leakage: the data correlation pattern, prior knowledge of adversaries, and sensitivity of the query function. This poses a fundamental problem: what is the mathematical relationship between the three factors and privacy leakage?
In this paper, we present a unified analysis of this problem.
A new privacy definition, named \textit{prior differential privacy (PDP)}, is proposed to evaluate privacy leakage considering the exact prior knowledge possessed by the adversary. We use two models, the weighted hierarchical graph (WHG) and the multivariate Gaussian model to analyze discrete and continuous data, respectively.
We demonstrate that positive, negative, and hybrid correlations have distinct impacts on privacy leakage. Considering general correlations, a closed-form expression of privacy leakage is derived for continuous data, and a chain rule is presented for discrete data. Our results are valid for general linear queries, including count, sum, mean, and histogram. Numerical experiments are presented to verify our theoretical analysis.
\end{abstract}

\begin{IEEEkeywords}
privacy leakage, correlated data, prior knowledge.
\end{IEEEkeywords}

\IEEEpeerreviewmaketitle

\section{Introduction}
\label{Sec:introduciton}

 Leakage of private information could lead to serious consequences (e.g., financial security and personal safety), and privacy protection has been extensively studied for several decades \cite{Dalenius1977Towards,Cox1980Suppression}. In today's big data era, privacy issues have been attracting increasing attention from both society and academia~\cite{Juels2006RFID,Fang2012Smart,yang2017survey,voigt2017eu}.
 Differential privacy (DP) \cite{dwork2014algorithmic,dwork2008differential,Dwork2006Calibrating} has become the defacto standard for privacy definitions because it can provide a rigorously mathematical proof of privacy guarantees.

In practice, adversaries may be able to acquire prior knowledge (i.e., partial data records), due to database attacks \cite{Guarnieri2016Strong}, privacy incidents \cite{Li2013Membership}, and obligations to release~\cite{Kifer2014Pufferfish}.
 It is commonly believed that differentially private algorithms are invulnerable to adversaries with arbitrary prior knowledge because any given privacy level can be guaranteed, even when the adversary has knowledge of all data records except certain ones (i.e., the adversary with the strongest prior knowledge).
 However, this is true only if all data records are independent.
 It has been shown that the adversary's prior knowledge can have significant impacts on privacy leakage when data records are correlated~\cite{Kifer2011No,Kifer:2012:RCF:2213556.2213571}.

 The following example demonstrates how privacy leakage can be affected by correlations and the adversaries' prior knowledge.


 \begin{figure}
  \centering
  \includegraphics[width=9cm]{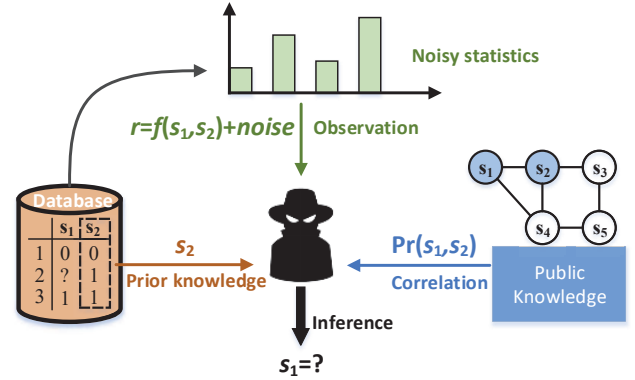}\\
  \caption{Illustration of Example 1: an adversary attempts to infer the information of $s_1$ based on the joint distribution of database $\mathbf{x}$, the published result $r$, and his prior knowledge about $s_2$.}\label{fig:inferencemodel}
\end{figure}

\begin{figure*}
  \centering
  \includegraphics[width=18cm]{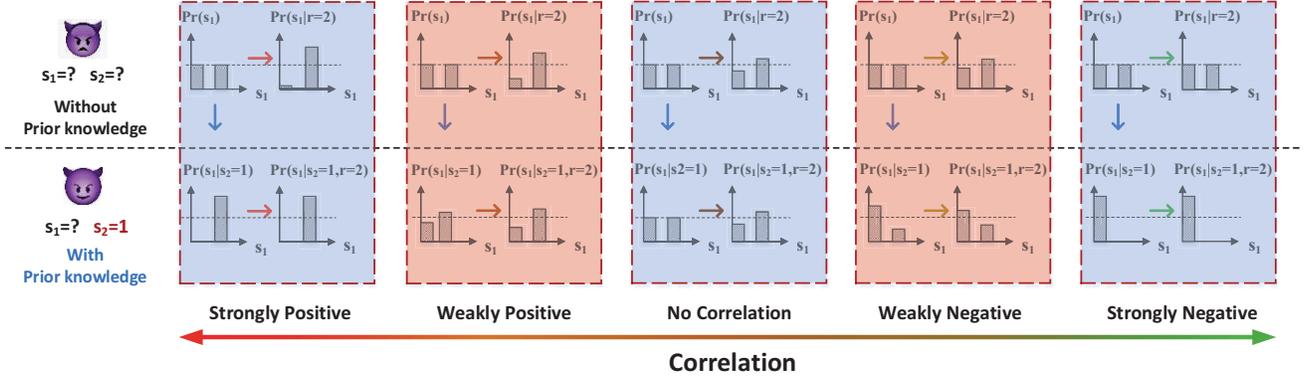}\\
  \caption{Illustration of Example 1: the inference results of two adversaries; the weak adversary knows nothing about $s_2$, the strong adversary knows $s_2=1$. Considering five correlations of $s_1$ and $s_2$, when the correlation is a perfectly positive correlation or perfectly negative correlation and independent, adversaries infer different information from the output result $r=2$. The problem is how to analyze the general impacts of the correlation and prior knowledge on privacy leakage.}\label{fig:inferenceresults}
\end{figure*}

 Example 1 \textit{Fig.~\ref{fig:inferencemodel} shows a scenario in which an adversary attempts to infer some sensitive information about a database. As shown, the database $\mathbf{x}$ consisting of two attributes, $s_1\in\{0,1\}$, and $s_2\in\{0,1\}$, publishes noisy (via a Laplace mechanism of differential privacy) statistics for privacy-preserving data mining. The adversary may acquire some prior knowledge about the database, i.e., the exact value of $s_1$ and the data correlations $Pr(s_1, s_2)$ from some public knowledge (e.g., the Internet). After observing the noisy statistics $r=f(s_1, s_2)+noise$, the adversary tries to infer the privacy of $s_1$ based on all available information. Assume a noisy statistic $r=s_1+s_2+noise=2$, the prior knowledge $s_2=1$, and the adversary's first impression about $s_1$ is $Pr(s_1=1)=Pr(s_1=0)=0.5$ before inference. The privacy information gain obtained by the adversary in the inference process is summarized in Fig.~\ref{fig:inferenceresults}. We use the following three special cases to show the impacts of the correlations and prior knowledge on privacy leakage.
 \begin{enumerate}
   \item Case 1 (Positive Correlation): $s_1$ and $s_2$ are perfectly positively correlated with coefficient $1$, i.e., $s_1=s_2$. Without prior knowledge, the adversary will infer $s_1+s_2=2$ from the observation with high confidence according to the characteristics of the Laplace mechanism. Combined with the correlation $s_1=s_2$, he will infer $s_1=1$ with high confidence. With the prior knowledge, e.g., $s_2=1$, the adversary can ascertain that $s_1=1$ from the correlation $s_1=s_2$. 
   \item Case 2 (Negative Correlation): $s_1$ and $s_2$ are perfectly negatively correlated with coefficient $-1$, i.e., $s_1+s_2=1$. Without prior knowledge, the adversary can infer no additional information about $s_1$ through $r=2$ due to the negative correlation. However, with the prior knowledge $s_2=1$, the adversary can claim that $s_1=0$. In addition, $r=2$ provides no additional information. 
   \item Case 3 (No Correlation): $s_1$ and $s_2$ are independent. Without prior knowledge, the adversary can infer that $s_1=1$ with relatively higher probability than $s_1=0$ from the observation $r=2$. However, with the additional prior knowledge of $s_2=1$, the adversary obtains no more confidence about $s_1$ because there is no correlation between $s_1$ and $s_2$, i.e., a stronger adversary with extra prior knowledge achieves no privacy gain compared with a weaker adversary.
 \end{enumerate}
The above special correlation cases show that an adversary with certain prior knowledge can obtain different privacy gains under different types of correlations.
For general correlation cases, i.e., when correlations are weakly positive or weakly negative (cases with red backgrounds in Fig.~\ref{fig:inferenceresults}), the adversary can also infer additional information through the published results.
Meanwhile, when correlations are perfectly positive or negative, adversaries with different prior knowledge can also gain different privacy information.}



As demonstrated in the above examples, prior knowledge can be utilized by adversaries to infer sensitive information, leading to serious threats to various privacy preserving scenarios, such as data publishing~\cite{zhu2017differentially,zhang2017privbayes,kulkarni2017marginal,zhang2018calm}, continuous data release \cite{LiyueLi-278,Cao2017Quantifying,Theodorakopoulos2014Prolonging,Nath2012MaskIt}, location based services \cite{7542581,Xiao2015Protecting,Liu2017Spatiotemporal}, and social networks \cite{Rastogi2009Relationship,Chen2014Correlated}.
 To achieve efficient privacy protection for correlated data, it is essential to conduct rigorous theoretical studies to understand the analytical relationship between prior knowledge and privacy leakage, which is the main goal of this paper.

There have been several research efforts to this fundamental problem.
The sequential composition theorem~\cite{dwork2014algorithmic} of DP states that correlated data causes linear incrementing of privacy leakage if simply treating the correlated data as a whole.
However, this does not utilize the correlation sufficiently and leads to a low utility for weakly correlated data.
Therefore, many works~\cite{zhang2017privbayes,LiyueLi-278,Chen2017Correlated,Liu2017Spatiotemporal,Liu2016Dependence} have focused on exploiting correlations to achieve high utility without sacrificing the privacy guarantee. However, these works do not consider adversaries with different prior knowledge, which has significant impacts on privacy leakage.
Specifically, it has been demonstrated that without assumption on the adversaries' prior knowledge, no privacy guarantee can be achieved \cite{Kifer2011No,Dwork2010On}.
To measure the impacts of prior knowledge, Pufferfish privacy \cite{Kifer2014Pufferfish} and Blowfish privacy \cite{He2014Blowfish} formally model prior knowledge in their mathematical privacy definitions. However, there are no analytical impacts of correlation and prior knowledge on privacy leakage provided in either work \cite{Kifer2014Pufferfish,He2014Blowfish}.

The state-of-the-art research, Bayesian differential privacy (BDP) \cite{Yang2015Bayesian}, explicitly describes the relationship of privacy leakage and prior knowledge for a special case, i.e., \textit{when data are positively correlated}. However, different types of correlations mean that the maximal influence of the query result caused by one tuple, i.e., the sensitivity, is different, and thus, leading to different privacy leakage. Therefore, it is necessary to discuss privacy leakage under all types of correlations, ranging from $-1$ to $1$ (including negative, independent, and positive correlations).
As BDP is based on a Laplacian matrix that can only model the positive correlations for sum queries, the analytical method and conclusions in \cite{Yang2015Bayesian} cannot be generalized to negative correlations or hybrid correlations (i.e., positive and negative coexist).

In summary, the analytical relationship between prior knowledge and privacy leakage under \textit{general correlations remains unclear}.
To address this problem, this paper presents \textit{the first unified analysis} that considers positive, negative, and hybrid data correlations. Our contributions are as follows:



 \begin{enumerate}
   \item We propose the definition of prior differential privacy (PDP) to measure privacy leakage caused by an adversary with any prior knowledge under general correlations. Based on PDP, we present a unified formulation (Theorem \ref{TH:privacy of two adjacent nodes}) to measure and discuss (Theorem \ref{TH:edge and correlation}) the impact of privacy leakage under varied prior knowledge and data correlations. Both the formulation and discussion can help us better understand the impact of prior knowledge and data correlation on privacy leakage.


   \item We analyze privacy leakage for both discrete and continuous data. For discrete data, we propose a graph model to present the structure of the adversaries' prior knowledge, and a chain rule (Theorem \ref{TH:chain rule}) to compute the privacy leakage. For continuous data, instead of a Markov random field, we adopt the multivariate Gaussian model to present general data correlations and derive a closed-form expression to compute privacy leakage (Theorem \ref{TH:privacy of Gaussian model}).
       Our analytic method is based on the theory of Bayesian inference.
       The analytical results can guide us in designing more efficient mechanisms with better utility-privacy tradeoffs. 

   \item We demonstrate that the analytic results can be applied to general linear queries, including count, sum, mean, and histogram. 
       Extensive numerical simulation results verify our theoretical analysis.

 \end{enumerate}

 The remainder of this paper is organized as follows. Section \ref{Sec:Relatedwork} introduces the related work. Section \ref{Sec:Preli} introduces the notations and presents some preliminary knowledge. In section \ref{Sec:prior differential privacy}, a new definition PDP is proposed to analyze the impacts of prior knowledge, and we illustrate that three factors can impact privacy leakage. Section \ref{Section:WHG for discrete data} and Section \ref{Section:continuous} present the theoretical analysis of privacy leakage for both discrete data and continuous data, respectively.
 Numerical experiments are presented in Section \ref{Sec:Experiments}, and we conclude this paper in Section \ref{Sec:Conclusion}.

\section{Related Work}
\label{Sec:Relatedwork}

\subsection{Data Correlation}

 Many studies \cite{Kifer2011No,Chen2017Correlated,Liu2017Spatiotemporal,Liu2016Dependence} have demonstrated that DP may not guarantee its expected privacy when data are correlated. There are two plausible solutions to protecting the privacy of correlated data records. One is to achieve DP on each data record independently. However, the composition theorem~\cite{dwork2014algorithmic} of DP has demonstrated that the privacy guarantee degrades with the number of correlated records.
 Another is to take the data records as a whole \cite{Dwork2006Differential,Chen2014Correlated,Dwork2011A}. However, when the number of records is large, or the correlation is weak, the utility will still be low.

 Therefore, it is crucial to accurately measure the data correlations to achieve more efficient privacy protection. Considerable work has been done from different perspectives. For general correlations, some work replaces the global sensitivity with new correlation-based parameters, such as correlated sensitivity \cite{Zhu2014Correlated} and correlated degree \cite{Cao2015Coupled}. For example, in \cite{Zhu2014Correlated}, a correlation coefficient matrix was utilized to describe the correlation of a series, and the correlation coefficient was considered as the weight to compute the global sensitivity. By utilizing inter- and intra-coupling, \cite{Cao2015Coupled} proposed behavior functions to model the degree of correlation. For temporal correlations, most of the research work has focused on saving the privacy budget consumption in time-series data \cite{dwork2010differential,LiyueLi-278,Xiao2015Protecting,Nath2012MaskIt}. For example, Dwork \cite{dwork2010differential} proposed a cascade buffer counter algorithm to adaptively update the output result on an $\{0,1\}$ data stream. Fan \cite{LiyueLi-278} adopted a PID controller-based sampling strategy to adaptively inject Laplace noise into time-series data to improve the utility. For spatial correlations, the main idea is to group and perturb the statistics over correlated regions to avoid noise overdose~\cite{7542581, Kellaris2013Practical}. As a typical example, Wang \cite{7542581} proposed dynamically grouping the sparse regions with similar trends and adding the same noise to reduce errors. In addition, for attribute correlations in multiattribute datasets, the fundamental idea is to reduce the dimensionality via identifying the attribute correlations \cite{Zhang2014PrivBayes,Ren2018}. For example, Zhang et al. \cite{Zhang2014PrivBayes} constructed a Bayesian network to model the attribute correlation in high-dimensional data and then synthesized a privacy-preserving dataset in an ad hoc way.
 However, all these works assumed that adversaries have fixed prior knowledge, and thus, may not achieve the optimal tradeoff against adversaries with prior knowledge. In this paper, we consider both data correlations and flexible prior knowledge.


\subsection{Prior Knowledge}

 Prior knowledge can influence privacy leakage when the data are correlated \cite{Chen2017Correlated,Yang2015Bayesian,wu2017extending}, which has been considered in different research in terms of privacy definition and the design of privacy-preserving mechanisms.
 For example, the Pufferfish framework \cite{Kifer2014Pufferfish}, aiming to help domain experts customize privacy definitions, theoretically has the potential to include all kinds of adversaries. The subsequent work of Blowfish privacy \cite{He2014Blowfish} developed mechanisms that permit more utility by specifying secrets about individuals and constraints about the data. In \cite{wang2016privacy}, a Wasserstein mechanism was proposed to fulfill the Pufferfish framework. In addition, \cite{wu2017extending} studied privacy leakage caused by the weakest adversary, and proposed the identity differential privacy (IDP) model. \cite{Chanyaswad:2018:MMD:3243734.3243750} exploited the structural characteristics of databases and the prior knowledge of domain experts to improve utility. However, no theoretical analysis on the relationship between the prior knowledge and privacy leakage has been formulated in all these work. In some research \cite{dimitrakakis2017differential,zhang2016differential}, privacy leakage was guaranteed by limiting the difference between prior knowledge and posterior knowledge. However, in these works, the adversaries' prior knowledge was limited to the probability distribution of the database and did not consider that partial data records may be compromised by specific adversaries. 
 Instead, \cite{Yang2015Bayesian,Chen2017Correlated} separated the adversary's specific prior knowledge of partial tuples from the public knowledge of data correlations, which are derived from data distributions. 
 Based on that, Yang et al. \cite{Yang2015Bayesian} adopted a Gaussian correlation model to study the impact of prior knowledge and demonstrated that the weakest adversary could cause the highest privacy leakage. Similar conclusions can be found in \cite{Chen2017Correlated}, which further identifies the maximally correlated group of data tuples to improve the utility.
 Nonetheless, the limitation is that their Laplacian matrix based Markov random field model can only be applied to analyze positive correlations on sum queries for continuous data or binary discrete data.

 In this paper, we formally derive a formulation to present a unified analysis of the impact of data correlation and prior knowledge on privacy leakage, considering general linear queries on both discrete and continuous data.

\section{Preliminaries}
\label{Sec:Preli}
 We describe notations and conceptions in Subsection \ref{Subsection:preliminary-notations}, and introduce some knowledge of DP that will be used in our analysis in Subsection \ref{Subsection:preliminary-DP}.
 \subsection{Notations}
 \label{Subsection:preliminary-notations}

 A database with $n$ tuples (attributes in a table or nodes in a graph), denoted as the set of indices $[n]=\{1,2,\cdots,n\}$, aims to release the result of a certain query function $s=f(\mathbf{x})$ on an instance of the database, $\mathbf{x}=\{x_1,x_2,\cdots,x_n\}$. It should be noted that, in accordance with \cite{Yang2015Bayesian,Liu2016Dependence,Chen2017Correlated}, we use the same term ``tuple'' to denote the attribute instead of the record in a database. To protect the privacy of all tuples of an instance, it will return the noisy answer $r=\mathcal{M}(f(\mathbf{x}))$ by adding random noise drawn from a distribution. Hence, all possible outputs $S$ constitute a probability distribution Pr$(\mathcal{M}(f(\mathbf{x}))\in S)$, or equivalently a conditional distribution Pr$(r\in S|f(\mathbf{x})=s)$. We use a set $\Theta$ to capture the adversary's beliefs on data correlation. We do not guarantee the privacy against adversaries out of $\Theta$,
 because there is no feasibility under arbitrary distributions \cite{Li2013Membership}. The main notations are listed in Table \ref{Tab:notations}.


 \begin{table}[!t]
 \caption{Notations and meanings}
 \label{Tab:notations}
 \begin{tabular}{|c|c|}
  \hline
  notations  & descriptions \\\hline\hline
  $\mathbf{x}$ & A database instance $\{x_1,x_2,\ldots,x_n\}$. \\\hline
  $\mathcal{U,K}$ & The indices set of unknown/known tuples. \\\hline
  $\mathbf{x}_{\mathcal{U}},\mathbf{x}_{\mathcal{K}}$ & The instances of unknown/known tuples. \\\hline
  $s,s_{\mathcal{K}},s_{\mathcal{U}}$ & The sum of instance $\mathbf{x}$, $\mathbf{x}_{\mathcal{K}}$ and $\mathbf{x}_{\mathcal{U}}$. \\\hline
  $x_i,x_i'$ & Two different values of tuple $i$.  \\\hline
  $\mathbf{x}_{-i}$ & The database $\mathbf{x}$ with $x_i$ eliminated. \\\hline
  $\mathbf{x}'$ & The database $\mathbf{x}$ with $x_i$ replaced with $x_i'$. \\\hline
  $\mathcal{A}_{i,\mathcal{K}}$ & An adversary with prior knowledge $\mathbf{x}_{\mathcal{K}}$ to attack $x_i$. \\\hline
  $l_{\mathcal{A}_{i,\mathcal{K}}}$ & The privacy leakage caused by the adversary $\mathcal{A}_{i,\mathcal{K}}$. \\\hline
  $r\in \mathcal{R}$ & The random request generated by $\mathcal{M}$. \\\hline
  $\mathcal{M}$ & A randomized mechanism over $\mathbf{x}$. \\\hline
  $\theta\in\mathbf{\Theta}$ & All possible distributions of $\mathbf{x}$. \\\hline
  $LS_i(f)$ & The local sensitivity of a query function on tuple $i$. \\\hline
  $GS(f)$ & The global sensitivity of a query function on $\mathbf{x}$. \\\hline
 \end{tabular}
 \end{table}

\subsubsection{Adversary and Prior Knowledge}

 We denote $\mathcal{A}_{i,\mathcal{K}}$ as an adversary who attempts to infer the information of tuple $x_i$, under the assumption that he knows the values of $\mathbf{x}_{\mathcal{K}}$. We call $x_i$ the \textbf{attack object}, $\mathbf{x}_{\mathcal{K}}$ is the \textbf{prior knowledge}, $\mathcal{K} \subseteq [n]\setminus\{i\}$, where $[n]=\{1,2,\cdots,n\}$. Let $\mathcal{U}$ denotes the indices set of unknown tuples, then $[n]=\mathcal{K}\cup\{i\}\cup\mathcal{U}$ and the dataset $\mathbf{x}=\{\mathbf{x}_{\mathcal{K}}, x_i, \mathbf{x}_{\mathcal{U}}\}$.
 An adversary $\mathcal{A}_{i,\mathcal{K}}$ is called the strongest adversary when $\mathcal{K}=[n]\setminus\{i\}$ and is called the weakest adversary when $\mathcal{K}=\emptyset$. $\mathcal{A}_{i,\mathcal{K}'}$ is called an ancestor of $\mathcal{A}_{i,\mathcal{K}}$ if $\mathcal{K}'$ is a subset of $\mathcal{K}$ and differs by only one tuple, i.e., $\mathcal{K}'=\mathcal{K}\backslash\{j\}$. More tuples in $\mathbf{x}_{\mathcal{K}}$ mean the adversary has stronger prior knowledge.

 \subsubsection{Correlation}

 To measure data correlations, we adopt the Pearson correlation coefficient, which can identify linear correlations. More importantly, it can be used to distinguish positive correlations and negative correlations. In joint distribution $\theta$, let $\rho_{ij,\mathcal{K}}$ denote the correlation coefficient of $x_i$ and $x_j$ under the condition $\mathbf{x}_{\mathcal{K}}$. In this paper, $\rho_{ij,\mathcal{K}}$ plays an important role in the analysis of how prior knowledge affects privacy leakage.

 \subsubsection{Linear Query}

 A linear query function can be represented as $f(\mathbf{x})=\sum_i a_i x_i$, where $x_i, x_j \in \mathbf{x}$ are correlated with the Pearson correlation coefficient $\rho_{ij}$. The linear query function can be transformed into a sum query $f(\mathbf{y})=\sum_{i} y_i$ on a new database $\mathbf{y}$ by letting $a_i x_i$ as $y_i$, where $y_i \in \mathbf{y}$. Then, the correlation coefficient of $y_i$ and $y_j$ should be $\rho_{ij}'=sign(a_i a_j) \rho_{ij}$. Combining our new privacy definition PDP (will be discussed in Subsection \ref{Subsection:PDP-definition}), models can deal with general correlations; therefore, we focus our analysis on the sum query without loss of generality, and the conclusion can be straightforwardly extended to general linear queries.

 \subsection{Differential Privacy}
 \label{Subsection:preliminary-DP}

 \begin{defe}\label{Def:DP}
 (Differential Privacy \cite{Dwork2006Calibrating}).
   \emph{A randomized mechanism $\mathcal{M}$ satisfies $\varepsilon$-differential privacy ($\varepsilon$-DP), if for any $S\subseteq Range\{\mathcal{M}\}$, the differential value $x_i,x_i'$
 \begin{equation}\label{Eq:DP}
   DP(\mathcal{M})=\sup_{i,\mathbf{x}_{-i},x_i,x_i',S,}\log\frac{\Pr(r\in S|x_i,\mathbf{x}_{-i})}{\Pr(r\in S|x_i',\mathbf{x}_{-i})}\leq \varepsilon.
 \end{equation}}
 \end{defe}

 Here, $\varepsilon>0$ is the distinguishable bound of all outputs on neighboring datasets $\mathbf{x}$ and $\mathbf{x}'$, where $\mathbf{x}'$ is the database $\mathbf{x}$ with $x_i$ replaced with $x_i'$. A larger $\varepsilon$ corresponds to easier distinguishability of $x_i$ and $x_i'$, which means more privacy leakage.

 For numerical data, a Laplace mechanism \cite{Dwork2006Calibrating} can be used to achieve $\varepsilon$-DP, by adding carefully calibrated noise to the query results. In particular, we draw noise from Laplace distribution $Lap(\lambda)$ with the probability density function
 $$p(z)=\frac{1}{2\lambda}\mathrm{exp}(-{|z|}/{\lambda}),$$
 in which $\lambda={GS(f)}/{\varepsilon}$. Here, $GS(f)=\sup_{\mathbf{x},\mathbf{x}'}\|f(\mathbf{x})-f(\mathbf{x}')\|_1$ is the global sensitivity of query $f(\cdot)$, and $LS_i(f)=\sup_{\mathbf{x}'}\|f(\mathbf{x})-f(\mathbf{x}')\|_1$ is the local sensitivity of $f$. Since $r=f(\mathbf{x})+z$, the probability density function of the output can be represented as
 $$p(r|\mathbf{x})=\frac{1}{2\lambda}\mathrm{exp}({-|r-f(\mathbf{x})|/{\lambda}}).$$

 \section{Prior Differential Privacy}
 \label{Sec:prior differential privacy}
 To compute the privacy leakage when considering adversaries with different prior knowledge and databases with different joint distributions, we propose a new definition in Subsection \ref{Subsection:PDP-definition}. Furthermore, we illustrate that three factors can affect privacy leakage through three numerical examples in Subsection \ref{Subsection:PDP-Influence factors}.

 \subsection{Prior Differential Privacy}
 \label{Subsection:PDP-definition}

 To evaluate privacy leakage considering adversaries have different prior knowledge, the definition BDP is proposed in \cite{Yang2015Bayesian} based on the Bayesian inference method \cite{Li2013Membership,Lee:2012:DI:2339530.2339695,Kifer:2012:RCF:2213556.2213571}. However, BDP can only be applied to positive correlations. To overcome the drawback, we propose a definition named \textbf{Prior Differential Privacy (PDP)}, which can be applied to databases with general correlations.

 \begin{defe}\label{Def:PDP}
   (Prior Differential Privacy) \emph{Let $\mathbf{x}$ be a database instance with $n$ tuples, $\mathcal{A}_{i,\mathcal{K}}$ is an adversary with the attack object $x_i$ and prior knowledge $\mathbf{x}_\mathcal{K}, \mathcal{K}\subseteq [n]\setminus\{i\}$. The joint distribution of $\mathbf{x}$ is denoted as $\theta, \theta\in\Theta$, where $\Theta$ is a set of distributions. $\mathcal{M}=\mathrm{\Pr}(r \in S|\mathbf{x})$ is a randomized perturbation mechanism, and $S$ is the output space. The privacy leakage of $\mathcal{M}$ w.r.t $\mathcal{A}_{i,\mathcal{K}}$ is the maximum logarithm function for all different values $x_i$, $x_i'$, and any output $r\in S$.
 \begin{equation}\label{Eq:definition of PDP}
    l_{\mathcal{A}_{i,\mathcal{K}}}(\theta)=\sup_{x_i,x_i',r}\log\frac{\Pr(r\in S|x_i,\mathbf{x}_{\mathcal{K}})}{\Pr(r\in S|x_i',\mathbf{x}_{\mathcal{K}})}.
  \end{equation}
 We say $\mathcal{M}$ satisfies $\varepsilon$-PDP if Eq.\;(\ref{Eq:definition of PDP}) holds for any $i\in [n]$, $\mathcal{K}\subseteq [n]\backslash\{i\}$, $\theta\in\Theta$. That is,
 \begin{align*}
   \sup_{i,\mathcal{K},\theta} l_{\mathcal{A}_{i,\mathcal{K}}}(\theta)\leq \varepsilon.
 \end{align*}   }
 \end{defe}

 In Definition \ref{Def:PDP}, $l_{\mathcal{A}_{i,\mathcal{K}}}(\theta,\mathcal{M})$ is the privacy leakage caused by the adversary $\mathcal{A}_{i,\mathcal{K}}$ under the distribution $\theta$, which represents the data correlation. $\varepsilon$ is the maximal privacy leakage caused by all adversaries with public distribution $\Theta$. Compared with BDP that only considers a single distribution, PDP considers a set of distributions $\Theta$. Thus, PDP is more reasonable because the set $\Theta$ can reflect the cognitive diversity of the aggregator and the adversaries.

 We show that PDP is in accordance with the Bayesian inference, Eq.\;(\ref{Eq:definition of PDP}) can be written as
 \begin{align}\label{Eq:respective to the Bayesian inference}
   \sup_{x_i,x_i',r}\left(\log\frac{\Pr(x_i|r,\mathbf{x}_{\mathcal{K}})}{\Pr(x_i'|r,\mathbf{x}_{\mathcal{K}})}
   -\log\frac{\Pr(x_i|\mathbf{x}_{\mathcal{K}})}{\Pr(x_i'|\mathbf{x}_{\mathcal{K}})}\right).
 \end{align}
 Eq.\;(\ref{Eq:respective to the Bayesian inference}) denotes the information gain achieved by the adversary $\mathcal{A}_{i,\mathcal{K}}$, after the adversary observes the published results $r$. In addition, the PDP bounds the maximal information gain inferred by all possible adversaries that are no larger than $\varepsilon$. The next theorem shows that prior knowledge impacts privacy leakage only when the database is correlated.

 \begin{thm}\label{TH:prior knowledge has no impact}
   Prior knowledge has no impact on privacy leakage when tuples in the database are mutually independent.
 \end{thm}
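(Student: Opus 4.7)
The plan is to exploit the shift-equivariance of the Laplace mechanism on the linear query $f(\mathbf{x})=\sum_i a_i x_i$: under mutual independence, conditioning on $\mathbf{x}_{\mathcal{K}}$ merely translates the output $r$ by the constant $f(\mathbf{x}_{\mathcal{K}})$, and since the supremum in the definition of PDP ranges over all admissible $r$, this rigid translation is absorbed. Concretely, I would aim to show $l_{\mathcal{A}_{i,\mathcal{K}}}(\theta)=l_{\mathcal{A}_{i,\emptyset}}(\theta)$ for every $\mathcal{K}\subseteq[n]\setminus\{i\}$, so in particular the leakage does not grow as the adversary acquires additional prior tuples.

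First I would expand the conditional output distribution using the law of total probability, writing $\Pr(r\mid x_i,\mathbf{x}_{\mathcal{K}})=\sum_{\mathbf{x}_{\mathcal{U}}}\Pr(r\mid x_i,\mathbf{x}_{\mathcal{K}},\mathbf{x}_{\mathcal{U}})\Pr(\mathbf{x}_{\mathcal{U}}\mid x_i,\mathbf{x}_{\mathcal{K}})$, and invoke mutual independence to collapse the second factor to $\Pr(\mathbf{x}_{\mathcal{U}})$, which no longer depends on $x_i$ or on $\mathbf{x}_{\mathcal{K}}$. Substituting the Laplace density $\frac{1}{2\lambda}\exp(-|r-a_i x_i-f(\mathbf{x}_{\mathcal{K}})-f(\mathbf{x}_{\mathcal{U}})|/\lambda)$ and performing the change of variables $\tilde r = r - f(\mathbf{x}_{\mathcal{K}})$ then yields a marginal of the form $g(\tilde r\mid x_i)$, where $g$ depends only on $x_i$ and on the marginal prior of $\mathbf{x}_{\mathcal{U}}$.

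Inserting this into the log-ratio in (\ref{Eq:definition of PDP}) and taking the supremum over $r$ is equivalent to the supremum over $\tilde r$, because the change of variables is a bijection on $\mathbb{R}$; hence $l_{\mathcal{A}_{i,\mathcal{K}}}(\theta)=\sup_{x_i,x_i',\tilde r}\log\frac{g(\tilde r\mid x_i)}{g(\tilde r\mid x_i')}$, with no residual dependence on $\mathcal{K}$ or on the realization of $\mathbf{x}_{\mathcal{K}}$. The main obstacle I anticipate is cleanly separating the two distinct roles that independence plays here: first, independence of $\mathbf{x}_{\mathcal{U}}$ from $\{x_i,\mathbf{x}_{\mathcal{K}}\}$ is what kills the conditional prior factor, and second, translation invariance of the noise is what allows $f(\mathbf{x}_{\mathcal{K}})$ to be pushed into $r$; if either property is weakened (for example, non-additive noise on a discrete bounded range, or only pairwise independence rather than mutual), the shift-and-absorb step no longer goes through and a different argument would be needed.
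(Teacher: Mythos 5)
Your shift-and-absorb step removes the dependence on the realized \emph{values} of $\mathbf{x}_{\mathcal{K}}$, but not on $\mathcal{K}$ itself, and that is where the argument has a genuine gap. After the change of variables $\tilde r=r-f(\mathbf{x}_{\mathcal{K}})$ you obtain
\[
l_{\mathcal{A}_{i,\mathcal{K}}}(\theta)=\sup_{x_i,x_i',\tilde r}\log\frac{\sum_{\mathbf{x}_{\mathcal{U}}}\Pr(\mathbf{x}_{\mathcal{U}})\,e^{-|\tilde r-a_ix_i-f(\mathbf{x}_{\mathcal{U}})|/\lambda}}{\sum_{\mathbf{x}_{\mathcal{U}}}\Pr(\mathbf{x}_{\mathcal{U}})\,e^{-|\tilde r-a_ix_i'-f(\mathbf{x}_{\mathcal{U}})|/\lambda}},
\]
and, as you yourself note, the function $g$ depends on the marginal of $\mathbf{x}_{\mathcal{U}}$; but $\mathcal{U}=[n]\setminus(\mathcal{K}\cup\{i\})$ changes with $\mathcal{K}$. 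Comparing $\mathcal{A}_{i,\mathcal{K}}$ with $\mathcal{A}_{i,\emptyset}$ therefore means comparing the suprema of two \emph{different} mixtures (one averaging over $\mathbf{x}_{\mathcal{U}}$ only, the other over all of $\mathbf{x}_{-i}$), and translation invariance of the Laplace noise says nothing about why these suprema coincide. The assertion ``no residual dependence on $\mathcal{K}$'' is exactly the content of the theorem, and your proposal states it right after the change of variables rather than proving it.

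The missing step can be supplied in two halves. (i) Because the same weights $\Pr(x_j)$ appear in numerator and denominator, the ratio of the two mixtures is bounded by the maximal componentwise ratio, so moving an independent tuple from the prior $\mathcal{K}$ into the unknown set cannot increase the leakage; this is precisely the paper's proof, which compares an adversary with its ancestor $\mathcal{K}'=\mathcal{K}\setminus\{j\}$ via the law of total probability and $\Pr(x_j\mid x_i,\mathbf{x}_{\mathcal{K}'})=\Pr(x_j)$, and which needs neither the Laplace form nor linearity of $f$ (it works for an arbitrary mechanism $\mathcal{M}$). (ii) For the exact equality you aim at, $l_{\mathcal{A}_{i,\mathcal{K}}}=l_{\mathcal{A}_{i,\emptyset}}$, one must additionally show the bound is attained, e.g.\ by taking $\tilde r$ beyond the range of the query so that every mixture component simultaneously attains the ratio $e^{a_i(x_i-x_i')/\lambda}$, forcing every adversary's leakage to equal $LS_i(f)/\lambda$; this part does use the Laplace/linear structure. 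Without (i) and (ii) the conclusion does not follow from the change of variables alone, and note that your route is in any case narrower than the paper's, being tied to additive shift-invariant noise on a linear query.
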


 \begin{proof}
   For an adversary $\mathcal{A}_{i,\mathcal{K}}$ and its ancestor $\mathcal{A}_{i,\mathcal{K}'}, \mathcal{K}'=\mathcal{K}\backslash \{j\}$, we get
   \begin{align*}
     \Pr(r\in S|x_i,\mathbf{x}_{\mathcal{K}'})&=\sum_{x_j}\Pr(x_j|x_i,\mathbf{x}_{\mathcal{K}'})\Pr(r\in S|x_i,\mathbf{x}_{\mathcal{K}})\\
     &=\sum_{x_j}\Pr(x_j)\Pr(r\in S|x_i,\mathbf{x}_{\mathcal{K}}).
   \end{align*}
   The last equality holds when the data tuples are independent, i.e., $\Pr(x_j|x_i,\mathbf{x}_{\mathcal{K}'})=\Pr(x_j)$.
   And similarly,
   $ \Pr(r\in S|x_i',\mathbf{x}_{\mathcal{K}'})=\sum_{x_j}\Pr(x_j)\Pr(r\in S|x_i',\mathbf{x}_{\mathcal{K}}).$
   If $l_{\mathcal{A}_{i,\mathcal{K}}}(\theta,\mathcal{M})\leq\varepsilon$, according to PDP, we have $\frac{\Pr(r\in S|x_i,\mathbf{x}_{\mathcal{K}})}{\Pr(r\in S|x_i',\mathbf{x}_{\mathcal{K}})}\in [e^{-\varepsilon},e^{\varepsilon}]$. Multiplying this fraction by $\Pr(x_j)$ and summing with respect to $x_j$, we obtain $l_{\mathcal{A}_{i,\mathcal{K}'}}(\theta,\mathcal{M})\leq \varepsilon$ on the basis of the definition PDP. Therefore, different prior knowledge $\mathcal{K}$ and $\mathcal{K}'$ have the same privacy leakage.
 \end{proof}

 Theorem \ref{TH:prior knowledge has no impact} is also consistent with Eq. (\ref{Eq:respective to the Bayesian inference}). If tuples are independent, then $\mathbf{x}_{\mathcal{K}}$ can be omitted in Eq.\;(\ref{Eq:respective to the Bayesian inference}). Therefore, the prior knowledge has no impact on the privacy leakage when tuples are independent.

 \noindent\textbf{Remark 1.} It is worth noting that DP and PDP are also consistent in nature. They all reflect the maximal distinguishability between distributions of perturbed output calculated on two neighboring datasets. In this paper, neighboring datasets are obtained by modifying one record in the dataset. The difference in DP and PDP is the different forms of neighboring datasets. In DP, the neighboring datasets are $\{x_i,\mathbf{x_{-i}}\}$ and $\{x_i',\mathbf{x_{-i}}\}$. However, in PDP, the neighboring datasets are $\{x_i,\mathbf{x}_{\mathcal{K}}\}$ and $\{x_i',\mathbf{x}_{\mathcal{K}}\}$. For any given $\theta$ and $\mathbf{x}_{\mathcal{K}}$, we have
 \begin{equation}\label{Eq:relationship between DP and PDP}
 \begin{split}
   \frac{\Pr(r\in S|x_i,\mathbf{x}_{\mathcal{K}})}{\Pr(r\in S|x_i',\mathbf{x}_{\mathcal{K}})}
  &=\frac{\sum_{\mathbf{x}_{\mathcal{U}}}\Pr(\mathbf{x}_{\mathcal{U}}|x_i,\mathbf{x}_{\mathcal{K}})\Pr(r|x_i,\mathbf{x}_{-i})}{\sum_{\mathbf{x}_{\mathcal{U}}}
  \Pr(\mathbf{x}_{\mathcal{U}}|x_i',\mathbf{x}_{\mathcal{K}})\Pr(r|x_i',\mathbf{x}_{-i})}\\
  &\leq \sup_{\mathbf{x}_{\mathcal{U}},\mathbf{x}_{\mathcal{U}}'}\frac{\Pr(r|x_i,\mathbf{x}_{\mathcal{U}},\mathbf{x}_{\mathcal{K}})}
  {\Pr(r|x_i',\mathbf{x}_{\mathcal{U}}',\mathbf{x}_{\mathcal{K}})}\\
  &=\sup_{\mathbf{x}_{i\cup\mathcal{U}},\mathbf{x}_{i\cup\mathcal{U}}'}\frac{\Pr(r|\mathbf{x}_{i\cup\mathcal{U}},\mathbf{x}_{\mathcal{K}})}
  {\Pr(r|\mathbf{x}_{i\cup\mathcal{U}}',\mathbf{x}_{\mathcal{K}})}.
 \end{split}
 \end{equation}
 The last equality in Eq.\;(\ref{Eq:relationship between DP and PDP}) applies DP on datasets differing at most $|\mathcal{U}|$+1 tuples. The inequality in Eq.\;(\ref{Eq:relationship between DP and PDP}) holds for the fact that
 $\frac{\sum_{i}a_i c_i}{\sum_{i}b_i d_i}\leq \max_{i,j} \frac{c_i}{d_j}$,
 if all parameters are nonnegative and $\{a_i\}$ and $\{b_i\}$ are probability simplex. Taking the logarithm and supremum of Eq.\;(\ref{Eq:relationship between DP and PDP}) over $r$, we obtain $l_{\mathcal{A}_{i,\mathcal{K}}}(\theta,\mathcal{M})\leq DP(\mathcal{M})$.
 Therefore, DP provides an upper bound of privacy leakage for PDP, i.e., we achieve a better trade-off between privacy and utility by adopting PDP than adopting DP.

\subsection{Influence Factors}
\label{Subsection:PDP-Influence factors}

 In this section, we demonstrate that three factors, prior knowledge $\mathbf{x}_{\mathcal{K}}$, joint distribution $\theta$, and local sensitivity $LS_i(f)$, impact privacy leakage through numerical Example 2 to Example 4.

 \begin{table}[htpb]
\center
\caption{Four Joint Distributions}\label{Tab:four distributions}
\subtable[Positive correlation]{
       \begin{tabular}{|c|c|c|}
       \hline
          & $x_1$=0 & $x_1$=1\\\hline
          $x_2$=0 & 0.3 & 0.2\\\hline
          $x_2$=1 & 0.2 & 0.3\\\hline
       \end{tabular}\label{Tab:four distributions-a}
}
\subtable[Negative correlation]{
       \begin{tabular}{|c|c|c|}
       \hline
          & $x_1$=0 & $x_1$=1\\\hline
          $x_2$=0 & 0.2 & 0.3\\\hline
          $x_2$=1 & 0.3 & 0.2\\\hline
       \end{tabular}\label{Tab:four distributions-b}
}
\subtable[Perfect correlation]{
       \begin{tabular}{|c|c|c|}
       \hline
          & $x_1$=0 & $x_1$=1\\\hline
          $x_2$=0 & 0.5 & 0\\\hline
          $x_2$=1 & 0 & 0.5\\\hline
       \end{tabular}\label{Tab:four distributions-c}
}
\subtable[Perfect correlation]{
       \begin{tabular}{|c|c|c|}
       \hline
          & $x_1$=0 & $x_1$=1\\\hline
          $x_2$=0 & 0.5 & 0\\\hline
          $x_2$=5 & 0 & 0.5\\\hline
       \end{tabular}\label{Tab:four distributions-d}
}
\end{table}

 As shown in Table\;\ref{Tab:four distributions}, there are four joint distributions of database $\mathbf{x}=\{x_1,x_2\}$. The first three distributions have the same domain $x_1,x_2\in\{0,1\}$ with different correlations, the third and fourth distributions have the same correlation, but $x_2$ has a different domain. Considering a sum query $f(\mathbf{x})=x_1+x_2$, set Laplace mechanism scale $\lambda=1$ for simplicity. Denote $l_{\mathcal{A}_{1,\mathcal{K}}}(a)$ as the privacy leakage caused by the adversary $\mathcal{A}_{1,\mathcal{K}}$ when the distribution of $\mathbf{x}$ is Table\;\ref{Tab:four distributions-a}.

 Example 2 (Prior Knowledge). \emph{Two adversaries $\mathcal{A}_{1,\{2\}}$ and $\mathcal{A}_{1,\emptyset}$, attempt to infer the information $x_1=0$ or $x_1=1$. $\mathcal{A}_{1,\{2\}}$ knows the information of $x_2$ (e.g., $x_2=1$), and $\mathcal{A}_{1,\emptyset}$ knows nothing about $x_2$. Based on the definition of PDP, we calculate
 $l_{\mathcal{A}_{1,\{2\}}}(a)$ and $l_{\mathcal{A}_{1,\emptyset}}(a)$. For $\mathcal{A}_{1,\{2\}}$ and $x_2=1$, we get
 \begin{align*}
    l_{\mathcal{A}_{1,\{2\}}}(a)&=\sup_{r}\log\frac{\Pr(r|x_1=0,x_2=1)}{\Pr(r|x_1=1,x_2=1)}\\
    &=\sup_{r}\log\frac{\exp(-|r-1|)}{\exp(-|r-2|)}=1.
 \end{align*}
 When $\mathcal{A}_{1,\emptyset}$ knows nothing about $x_2$, according to Eq.\;(\ref{Eq:definition of PDP}), we have
 \begin{align*}
   l_{\mathcal{A}_{1,\emptyset}}(a)&=\sup_{r}\log\frac{\Pr(r|x_1=0)}{\Pr(r|x_1=1)}\\
   &=\sup_{r}\log\frac{\sum_{x_2}\Pr(x_2|x_1=0)\exp(-|r-(0+x_2)|)}{\sum_{x_2}\Pr(x_2|x_1=1)\exp(-|r-(1+x_2)|}\\
   &\approx 1.19.
 \end{align*}
 The exponential entries are derived from the Laplace mechanism and given $x_1,x_2$. Similarly, $l_{\mathcal{A}_{1,\emptyset}}(b)\approx 0.82.$
 Therefore,
  \begin{align}
     l_{\mathcal{A}_{1,\emptyset}}(a)>l_{\mathcal{A}_{1,\{2\}}}(a),\label{Eq:prior impact under positve correlation}\\
     l_{\mathcal{A}_{1,\emptyset}}(b)<l_{\mathcal{A}_{1,\{2\}}}(b)\label{Eq:prior impact under negative correlation}.
  \end{align}}

 Example 2 shows that prior knowledge has significant influence when the correlations are different. More importantly, it answers the two problems extended from Example 1. In addition, we note that the privacy leakage of DP is 2 if we simply regard correlated tuples $x_1$, $x_2$ as a whole. Therefore, we achieve stricter privacy protection than DP under the same noise mechanism. In other words, we can introduce less noise to obtain the same privacy level.

 Example 3 (Correlation). \emph{An adversary $\mathcal{A}_{1,\emptyset}$ attempts to infer the information of $x_1$ with no prior knowledge about $x_2$. To show the impacts of the correlations, we modify $0.3\rightarrow 0.49, 0.2\rightarrow 0.01$ in Tables\;\ref{Tab:four distributions-a} and \ref{Tab:four distributions-b} to obtain two distributions (a') and (b'), in which $x_1$ and $x_2$ have stronger correlation. Computations of $l_{\mathcal{A}_{1,\emptyset}}(a')$ and $l_{\mathcal{A}_{1,\emptyset}}(b')$ are similar to $l_{\mathcal{A}_{1,\emptyset}}(a)$ in Example 2. According to Eq.\;(\ref{Eq:definition of PDP}), we obtain $l_{\mathcal{A}_{1,\{2\}}}(a')=1.95$, $l_{\mathcal{A}_{1,\{2\}}}(b')=0.05$. Therefore,
  \begin{align}
     l_{\mathcal{A}_{1,\emptyset}}(a')> l_{\mathcal{A}_{1,\emptyset}}(a),\label{Eq:positve correlation impact}\\
     l_{\mathcal{A}_{1,\emptyset}}(b')< l_{\mathcal{A}_{1,\emptyset}}(b)\label{Eq:negative correlation impact}.
  \end{align}}

 Example 3 demonstrates that different correlations have significant influences on privacy leakage. Particularly, Eq.\;(\ref{Eq:positve correlation impact}) shows that the adversary can infer more information of $x_1$ through a stronger positive correlation, and Eq.\;(\ref{Eq:negative correlation impact}) shows the opposite result when the correlation is negative.

 Example 4 (Local Sensitivity). \emph{An adversary $\mathcal{A}_{1,\emptyset}$, with no prior knowledge of $x_2$, attempts to infer $x_1$. The difference in distributions Tables\;\ref{Tab:four distributions-c} and \ref{Tab:four distributions-d} is the domain of $x_2$. Based on PDP and the similarity of computations of  $l_{\mathcal{A}_{1,\emptyset}}(a)$ in Example 2, we have $l_{\mathcal{A}_{1,\emptyset}}(c)=2$, and $l_{\mathcal{A}_{1,\emptyset}}(d)=6$. Therefore,
 \begin{align*}
   l_{\mathcal{A}_{1,\emptyset}}(c) < l_{\mathcal{A}_{1,\emptyset}}(d).
 \end{align*}}
 For the sum query on $\mathbf{x}$, the local sensitivity of $x_i$ is its own domain. In distribution Table\;\ref{Tab:four distributions-c}, $LS_2(f)/LS_1(f)=1$. In distribution Table\;\ref{Tab:four distributions-d}, $LS_2(f)/LS_1(f)=5$. Example 4 shows that the local sensitivity impacts privacy leakage and a larger sensitivity ratio can lead to higher privacy leakage.

 Examples 2-4 demonstrate that three factors impact privacy leakage, and show how to compute the privacy leakage for a database composed of two tuples. In the following sections, we will extend the numerical results to analytical results for both discrete and continuous data.

\section{General Relationship Analysis and Privacy Leakage Computation for Discrete Data}
\label{Section:WHG for discrete data}

 In this section, we analyze privacy leakage with respect to the three factors when data are discrete. Subsection \ref{Subsection:discrete-WHG} presents a weighted hierarchical graph (WHG) to model all adversaries with various prior knowledge. Subsection \ref{Subsection:discrete-edgevalue} discusses how to calculate the weight of edges in the WHG. Subsection \ref{Subsection:discrete-chainrule} formulates a chain rule to represent the privacy leakage for an adversary with arbitrary prior knowledge. Subsection \ref{Subsection:discrete-algorithm} presents a full-space-searching algorithm to compute the privacy leakage, and a fast-searching algorithm to improve the search efficiency in practice.

 \subsection{Weighted Hierarchical Graph}
 \label{Subsection:discrete-WHG}

 A hierarchical graph is used to represent adversaries with various prior knowledge. Each node $(i,\mathcal{K})$ denotes an adversary, in which tuple $i$ is the attack object, and tuples set $\mathcal{K}$ denotes the prior knowledge. For a database with $n$ tuples, there are $n$ layers in a graph. From the bottom to the top, the prior knowledge $\mathcal{K}$ decreases by one tuple for each layer, until $\mathcal{K}=\emptyset$.
 To compute the privacy leakage of adversaries, we further construct a weighted hierarchical graph (WHG) by assigning weights for the edges in the graph. We first define the value of nodes as the privacy leakage caused by corresponding adversaries. In addition, the edge connecting two nodes denotes the privacy leakage difference between two adversaries with the neighboring prior knowledge sets, i.e., $|\mathcal{K}|-|\mathcal{K}'|=1$. Then, the process of analyzing the privacy leakage is as follows. First, we construct the hierarchical graph for all possible adversaries for a given database. Second, we compute the values of all edges in the graph to obtain the WHG (discussed in Subsection \ref{Subsection:discrete-edgevalue}). Third, we compute the values of nodes in the first layer by PDP. Finally, we can obtain all nodes' values by proposing a chain rule (Theorem \ref{TH:chain rule}). Finally, the privacy leakage can be obtained by choosing the maximal node naturally.



 For example, we can obtain a WHG consisting of three layers and twelve nodes for a simple database with three tuples, as shown in Fig~\ref{Fig:WHGof three tuples}. Based on the node $(2,\{1,3\})$ and edges $e_{23,1}$ and $e_{21,3}$, we obtain the privacy leakage for nodes $(2,\{1\})$ and $(2,\{3\})$. Similarly, we can obtain the other four nodes in the second layer. For the node $(2,\emptyset)$, we compute two values based on $(2,\{1\}),e_{21,\emptyset}$ and $(2,\{3\}),e_{23,\emptyset}$, and choose the minimum as its privacy leakage. Similarly, we obtain another two nodes in the third layer. Now, the privacy leakage is the maximal node value in the graph.

 In the above process, one key problem is to compute the edge value. Therefore, we propose a formula to address the problem.

\begin{figure}
  \centering
  \includegraphics[width=8cm]{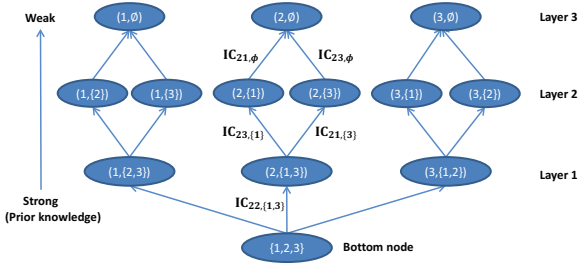}\\
  \caption{An example to show the three tuples WHG. Each node $(i,\mathcal{K})$ denotes an adversary who attempts to infer the tuple $i$ with prior knowledge $\mathcal{K}$. There are three levels composed of nodes with the same prior knowledge size. A directed edge connects the node and its ancestor from the lower layer to higher layer. Therefore, we obtain a directed graph to present all possible adversaries. 
  }\label{Fig:WHGof three tuples}
\end{figure}

 \subsection{Impacts of Correlations and Prior Knowledge}
 \label{Subsection:discrete-edgevalue}

 In this section, we mainly deduce the formula to compute the edge value, which represents the impact of privacy leakage caused by different prior knowledge. Meanwhile, we show that the edge value is closely related to data correlation.

 Note that the edge value shows the gain of privacy leakage when one tuple is removed from the prior knowledge. If the edge value is positive, then the ancestor, a weaker adversary, can cause more privacy leakage. If the edge value is negative, then the ancestor, a stronger adversary, can cause more privacy leakage.


 Given $\mathbf{x}_{\mathcal{K}}'$, $\Pr(x_i,x_j|\mathcal{K}')$ denotes the conditional distribution derived from the joint distribution $\theta$, and $\rho_{ij,\mathcal{K}'}$ is the corresponding conditional correlation coefficient. The domain of tuple $x_i$ is $\{x_{i,1},x_{i,2},\cdots,x_{i,s}\}$, in which $s$ is the domain size of $x_i$. Based on $\Pr(x_i,x_j|\mathcal{K}')$, the \underline{i}mpa\underline{c}t of $x_i$ on $x_j$, under two different values $x_{i,1},x_{i,2}$ of $x_i$, can be denoted as
 \begin{align}\label{Eq:IC}
   IC_{j,\mathcal{K}'}(x_{i,1},x_{i,2})=\log\frac{\sum_{x_j}\mathrm{Pr}(x_j|x_{i,1},\mathbf{x}_{\mathcal{K}'})e^{{-x_j}/{\lambda}}}
     {\sum_{x_j}\mathrm{Pr}(x_j|x_{i,2},\mathbf{x}_{\mathcal{K}'})e^{{-x_j}/{\lambda}}}.
 \end{align}
 Then, impacts of $x_i$ on $x_j$, under all possible pairs $x_{i,m},x_{i,n}$, can be denoted as a set
 \begin{align}\label{Eq:set of Gamma}
   \Gamma_{ij,\mathcal{K}'}=\{IC_{j,\mathcal{K}'}(x_{i,m},x_{i,n})|\forall x_{i,m},x_{i,n}\in dom{x_i},m<n\}.
 \end{align}

 Next, a theorem shows how to compute the edge value.

 \begin{thm}\label{TH:privacy of two adjacent nodes}
  \emph{Assume the privacy leakage of an adversary $\mathcal{A}_{i,\mathcal{K}}$ is $l_{\mathcal{A}_{i,\mathcal{K}}}$, then the privacy leakage of its ancestor $\mathcal{A}_{i,\mathcal{K}'}(\mathcal{K}'=\mathcal{K}\backslash\{j\})$ is
   \begin{align}\label{Eq:privacy of two adjacent nodes}
      l_{\mathcal{A}_{i,\mathcal{K}'}}=|l_{\mathcal{A}_{i,\mathcal{K}}}+IC_{ij,\mathcal{K}'}|,
   \end{align}
   where
   \begin{align*}
     IC_{ij,\mathcal{K}'}=\mathop{\arg\max}_{\gamma\in\Gamma_{ij,\mathcal{K}'}}|l_{\mathcal{A}_{i,\mathcal{K}}}+\gamma|
   \end{align*}
  is the value of the edge connecting two nodes $(i,\mathcal{K})$ and $(i,\mathcal{K}')$ in the WHG.}
 \end{thm}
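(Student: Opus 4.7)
My plan is to derive the privacy leakage of the ancestor $\mathcal{A}_{i,\mathcal{K}'}$ from that of $\mathcal{A}_{i,\mathcal{K}}$ by marginalizing out the single tuple $x_j$ that distinguishes their prior knowledge, then to extract an $IC$-like correction using the Laplace mechanism's special structure. The starting point is the PDP definition applied to $\mathcal{A}_{i,\mathcal{K}'}$ together with the identity $\mathbf{x}_{\mathcal{K}}=\mathbf{x}_{\mathcal{K}'}\cup\{x_j\}$, which via the law of total probability yields
\begin{equation*}
\Pr(r|x_i,\mathbf{x}_{\mathcal{K}'})=\sum_{x_j}\Pr(x_j|x_i,\mathbf{x}_{\mathcal{K}'})\Pr(r|x_i,\mathbf{x}_{\mathcal{K}}).
\end{equation*}

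Next I would plug in the Laplace kernel $\Pr(r|\mathbf{x})\propto e^{-|r-f(\mathbf{x})|/\lambda}$ for the sum query and analyze the $\sup_r$ in the PDP definition. The crucial observation is that the log-ratio of mixtures of Laplace densities is monotone outside the support of query values, so the supremum over $r$ is attained in one of the two asymptotic regimes $r\to+\infty$ or $r\to-\infty$. In either regime, $|r-f(\mathbf{x})|$ linearizes as $\pm(r-f(\mathbf{x}))$, which factors the kernel and exposes the contribution of $x_j$ as a multiplicative $e^{+x_j/\lambda}$ factor (for $r\to+\infty$) or $e^{-x_j/\lambda}$ factor (for $r\to-\infty$). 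Taking the log-ratio for an attack pair $(x_i,x_i')$ and cancelling the common $e^{\mp r/\lambda}$ and $\mathbf{x}_{\mathcal{K}'}$-dependent factors, I expect to obtain the additive decomposition
\begin{equation*}
\log\frac{\Pr(r|x_i,\mathbf{x}_{\mathcal{K}'})}{\Pr(r|x_i',\mathbf{x}_{\mathcal{K}'})}=\log\frac{\Pr(r|x_i,\mathbf{x}_{\mathcal{K}})}{\Pr(r|x_i',\mathbf{x}_{\mathcal{K}})}+IC_{j,\mathcal{K}'}(x_i,x_i'),
\end{equation*}
where the second summand is precisely one element of $\Gamma_{ij,\mathcal{K}'}$ defined in Eq.~(\ref{Eq:set of Gamma}).

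Then I would take the supremum on both sides. The first summand, after $\sup_{x_i,x_i',r}$, contributes $l_{\mathcal{A}_{i,\mathcal{K}}}$ (for the optimal pair at the chosen $r$-extreme), while the second summand contributes the $\gamma\in\Gamma_{ij,\mathcal{K}'}$ attached to that pair. Because swapping $x_i\leftrightarrow x_i'$ negates the full log-ratio and swapping $r\to-\infty$ versus $r\to+\infty$ toggles the sign of the $e^{\pm x_j/\lambda}$ weight, the sup over the remaining choices effectively becomes a sup of the absolute value. This yields
\begin{equation*}
l_{\mathcal{A}_{i,\mathcal{K}'}}=\max_{\gamma\in\Gamma_{ij,\mathcal{K}'}}|l_{\mathcal{A}_{i,\mathcal{K}}}+\gamma|,
\end{equation*}
which matches the theorem once we set $IC_{ij,\mathcal{K}'}=\arg\max_{\gamma}|l_{\mathcal{A}_{i,\mathcal{K}}}+\gamma|$.

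The main obstacle I anticipate is rigorously justifying that the $\sup_r$ is attained at $r\to\pm\infty$ for the mixture-of-Laplace form produced after marginalizing out both $x_j$ and the remaining unknown tuples $\mathbf{x}_{\mathcal{U}}$; once I condition on a fixed $\mathbf{x}_{\mathcal{K}}$, the $\mathbf{x}_{\mathcal{U}}$-sum must factor out cleanly so that the residual leakage from the $\mathcal{A}_{i,\mathcal{K}}$ side is isolated from the $x_j$-marginalization that becomes $IC_{j,\mathcal{K}'}$. A secondary bookkeeping hurdle is keeping the sign conventions consistent across the two asymptotic regimes so that the final expression collapses to a single $|\cdot|$ with the stated $\arg\max$ characterization, rather than to two separate signed expressions.
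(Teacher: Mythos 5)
Your opening identity is exactly the paper's first step (law of total probability, $\Pr(r|x_i,\mathbf{x}_{\mathcal{K}'})=\sum_{x_j}\Pr(x_j|x_i,\mathbf{x}_{\mathcal{K}'})\Pr(r|x_i,\mathbf{x}_{\mathcal{K}})$), but the way you then try to extract $l_{\mathcal{A}_{i,\mathcal{K}}}$ and the $IC$ term has a genuine gap. First, your ``crucial observation'' that $\sup_r$ is attained as $r\to\pm\infty$ is false in general for ratios of Laplace mixtures with different mixing weights: outside the support the log-ratio is merely \emph{constant}, and its supremum can sit strictly inside the support. For instance, if the conditional law of the unknowns given $x_i$ is concentrated while that given $x_i'$ is spread, the ratio $\frac{e^{-|r|/\lambda}}{\tfrac12 e^{-|r-c|/\lambda}+\tfrac12 e^{-|r+c|/\lambda}}$ equals $e^{c/\lambda}$ at $r=0$ but only $1/\cosh(c/\lambda)<1$ at $r\to\pm\infty$. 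Second, even in the asymptotic regime the additive split $\log\frac{\Pr(r|x_i,\mathbf{x}_{\mathcal{K}'})}{\Pr(r|x_i',\mathbf{x}_{\mathcal{K}'})}=\log\frac{\Pr(r|x_i,\mathbf{x}_{\mathcal{K}})}{\Pr(r|x_i',\mathbf{x}_{\mathcal{K}})}+IC_{j,\mathcal{K}'}(x_i,x_i')$ does not hold: as $r\to-\infty$ each $x_j$-term carries a factor $\sum_{\mathbf{x}_{\mathcal{U}}}\Pr(\mathbf{x}_{\mathcal{U}}|x_i,x_j,\mathbf{x}_{\mathcal{K}'})e^{-s_{\mathcal{U}}/\lambda}$ that depends on $x_j$ (and differs between $x_i$ and $x_i'$), so the $\mathcal{K}$-side does not factor out of the $x_j$-marginalization --- the very obstacle you labeled secondary is the one that breaks the route. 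Third, even granting the split, you would be adding $\gamma$ to the value of the $\mathcal{K}$-log-ratio at one particular $(x_i,x_i',r)$, not to its supremum $l_{\mathcal{A}_{i,\mathcal{K}}}$; since the supremum of a sum is not the sum of suprema, the stated formula would still not follow.

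The paper's proof (Appendix \ref{Proof:privacy of two nodes}) takes a different key step that avoids all three issues: by the definition of PDP, $\log\frac{\Pr(r|x_i,\mathbf{x}_{\mathcal{K}})}{\Pr(r|x_i',\mathbf{x}_{\mathcal{K}})}\in[-l_{\mathcal{A}_{i,\mathcal{K}}},l_{\mathcal{A}_{i,\mathcal{K}}}]$ \emph{pointwise} in $r$ (for each value of $x_j$), so inside the total-probability expression one may replace $\Pr(r|x_i',\mathbf{x}_{\mathcal{K}})$ in the denominator by $\Pr(r|x_i,\mathbf{x}_{\mathcal{K}})$ at the cost of an additive $\pm l_{\mathcal{A}_{i,\mathcal{K}}}$ inside an absolute value. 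After that substitution, numerator and denominator contain the \emph{same} kernels and differ only in the weights $\Pr(x_j|x_i,\mathbf{x}_{\mathcal{K}'})$ versus $\Pr(x_j|x_i',\mathbf{x}_{\mathcal{K}'})$, and only then is the Laplace form used to reduce the residual ratio to the $e^{-x_j/\lambda}$-weighted quantity of Eq.\;(\ref{Eq:IC}), i.e., to an element of $\Gamma_{ij,\mathcal{K}'}$. This pointwise bound --- not an asymptotic evaluation of $\sup_r$ --- is what makes $l_{\mathcal{A}_{i,\mathcal{K}}}$ enter as a uniform additive term inside $|\cdot|$, and it is the ingredient your sketch is missing. (Note also that the paper's own chain contains a ``$\leq$'', so strictly speaking it proves an upper bound of the form (\ref{Eq:privacy of two adjacent nodes}); a matching lower bound would need a separate argument, which neither your proposal nor the appendix provides.)
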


 \begin{proof}
   See Appendix \ref{Proof:privacy of two nodes}.
 \end{proof}

 Theorem \ref{TH:privacy of two adjacent nodes} shows the impact on privacy leakage $IC_{ij, \mathcal{K}'}$ caused by two adversaries whose prior knowledge differs by one tuple under general correlation. According to Theorem \ref{TH:privacy of two adjacent nodes}, the value $IC_{ij,\mathcal{K}'}$ is the element in the set $\Gamma_{ij,\mathcal{K}'}$ that maximizes the privacy leakage of node $(i,\mathcal{K}')$. That is, $IC_{ij,\mathcal{K}'}$ presents the maximal impact of $x_i$ on $x_j$ under the conditional distribution $\Pr(x_i,x_j|\mathbf{x}_{\mathcal{K}})$.

 To show the relationship between $IC_{j,\mathcal{K}'}(x_{i,m},x_{i,n})$ and the three factors described in Subsection \ref{Subsection:PDP-Influence factors}, we rewrite the $IC_{j,\mathcal{K}'}(x_{i,m},x_{i,n})$ as
 \begin{align}\label{Eq:IR}
  IC_{j,\mathcal{K}'}(x_{i,m},x_{i,n})=IR_{j,\mathcal{K}'}(x_{i,m},x_{i,n})\cdot\frac{LS_j(f)}{\lambda},
 \end{align}
 where
 \begin{align}\label{Eq:definition of IR}
   IR_{j,\mathcal{K}'}(x_{i,m},x_{i,n})=IC_{j,\mathcal{K}'}(x_{i,m},x_{i,n})/(LS_j(f)/\lambda)
 \end{align}
 is called the \underline{i}ncrement \underline{r}atio to denote the impact caused by correlations. $IC_{j,\mathcal{K}'}(x_{i,m},x_{i,n})$ represents the variation of privacy leakage when prior knowledge decreases. Therefore, the two components $IR_{j,\mathcal{K}'}(x_{i,m},x_{i,n})$ and ${LS_j(f)}/{\lambda}$ in Eq.\;(\ref{Eq:IR}) represent the impact of local sensitivity, and correlation, respectively.


 Now, we give the relationship between $IR_{j,\mathcal{K}'}(x_{i,m},x_{i,n})$ and conditional correlation coefficient $\rho_{ij,\mathcal{K}'}$.
 \begin{thm}\label{TH:edge and correlation}
   \emph{(1) For a database $\mathbf{x}$ with all possible joint distributions, $IR_{j,\mathcal{K}'}(x_{i,m},x_{i,n})\in[-1,1]$. (2) Under the assumption that the domain size of $x_i$ and $x_j$ are two, then $IR_{j,\mathcal{K}'}(x_{i,1},x_{i,2})$ has the following relationship with $\rho_{ij,\mathcal{K}'}$:
   \begin{enumerate}
     \item if $\rho_{ij,\mathcal{K}'}>0$, then $IR_{j,\mathcal{K}'}(x_{i,1},x_{i,2})\in(0,1]$;
     \item if $\rho_{ij,\mathcal{K}'}=0$, then $IR_{j,\mathcal{K}'}(x_{i,1},x_{i,2})=0$;
     \item if $\rho_{ij,\mathcal{K}'}<0$, then $IR_{j,\mathcal{K}'}(x_{i,1},x_{i,2})\in[-1,0)$.
   \end{enumerate}
   (3) Under the assumption that the domain size of $x_i$ is two, the domain size of $x_j$ is greater than two, meanwhile, $\lambda>GS(f)$, and then the results in Case (2) still applies.
   }
 \end{thm}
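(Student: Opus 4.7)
The plan is to handle the three parts separately, working directly from the defining formulas (\ref{Eq:IC}) and (\ref{Eq:definition of IR}). Write $LS_j(f) = x_j^{\max} - x_j^{\min}$ for the width of the domain of $x_j$, and let $p_{mn} := \Pr(x_j=x_{j,n}\mid x_i=x_{i,m},\mathbf{x}_{\mathcal{K}'})$ and $\mu_m := \sum_n p_{mn}x_{j,n}$ denote the conditional marginals and means. For part (1) the argument is purely analytic: each of the two sums in (\ref{Eq:IC}) is a convex combination of the values $\{e^{-x_{j,n}/\lambda}\}$, which all lie in $[e^{-x_j^{\max}/\lambda},\,e^{-x_j^{\min}/\lambda}]$. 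Hence the ratio of numerator to denominator lies in $[e^{-LS_j(f)/\lambda},\,e^{LS_j(f)/\lambda}]$, so taking logarithms and dividing by $LS_j(f)/\lambda$ yields $IR\in[-1,1]$ for every joint distribution.

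For part (2), both $x_i$ and $x_j$ are binary, so I would expand the $2\times 2$ joint table with entries $q_{mn}$. The standard identity $\operatorname{Cov}(x_i,x_j)=(x_{i,2}-x_{i,1})(x_{j,2}-x_{j,1})(q_{11}q_{22}-q_{12}q_{21})$ shows $\operatorname{sign}(\rho_{ij,\mathcal{K}'})=\operatorname{sign}(q_{11}q_{22}-q_{12}q_{21})$. Setting $p_m := \Pr(x_j=x_{j,1}\mid x_i=x_{i,m},\mathbf{x}_{\mathcal{K}'})$, a short computation gives $p_1-p_2 = (q_{11}q_{22}-q_{12}q_{21})/\bigl((q_{11}+q_{12})(q_{21}+q_{22})\bigr)$, so $\operatorname{sign}(p_1-p_2)=\operatorname{sign}(\rho_{ij,\mathcal{K}'})$. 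Numerator minus denominator in (\ref{Eq:IC}) equals $(p_1-p_2)(e^{-x_{j,1}/\lambda}-e^{-x_{j,2}/\lambda})$, which has the same sign as $p_1-p_2$ because $x_{j,1}<x_{j,2}$. The three ranges for $\rho_{ij,\mathcal{K}'}$ positive, zero and negative then follow by combining with part (1); the extremes $IR=\pm 1$ are achieved exactly when $\{p_1,p_2\}=\{0,1\}$, i.e., under perfect correlation.

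Part (3) is the main obstacle, because when $x_j$ takes three or more values the two conditional distributions of $x_j$ can differ in variance as well as in mean, and the convexity of $e^{-x/\lambda}$ means variance differences could in principle reverse the sign of $IC$. My plan is first to generalize the $2\times 2$ covariance identity to arbitrary discrete $x_j$ with binary $x_i$: writing $\pi := \Pr(x_i=x_{i,1}\mid\mathbf{x}_{\mathcal{K}'})$, a direct expansion of $E[x_i x_j]-E[x_i]E[x_j]$ yields $\operatorname{Cov}(x_i,x_j\mid\mathbf{x}_{\mathcal{K}'}) = \pi(1-\pi)(x_{i,2}-x_{i,1})(\mu_2-\mu_1)$, reducing the claim to showing $\operatorname{sign}(N-D)=\operatorname{sign}(\mu_2-\mu_1)$, where $N,D$ denote the numerator and denominator of (\ref{Eq:IC}). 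Since $N-D=\sum_n(p_{1n}-p_{2n})e^{-x_{j,n}/\lambda}$ and $\sum_n(p_{1n}-p_{2n})=0$, Taylor-expanding $e^{-x_{j,n}/\lambda}$ about $x_j^{\min}$ produces a leading term equal to $(\mu_2-\mu_1)/\lambda$ times a positive factor. The delicate technical work is to bound the higher-order remainder so that it cannot flip this sign; this is exactly where the hypothesis $\lambda > GS(f) \geq LS_j(f)$ enters, forcing $(x_{j,n}-x_j^{\min})/\lambda\in[0,1)$ and allowing the alternating Taylor tail to be dominated by the leading contribution. Once the sign of $N-D$ is pinned down in this way, part (1) supplies the amplitude bound $|IR|\leq 1$ and the three ranges of Case (2) transfer verbatim, with $|IR|=1$ again characterising perfect conditional correlation.
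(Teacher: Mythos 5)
Your parts (1) and (2) are correct and essentially coincide with the paper's own argument: part (1) is the same convex-combination bound, and for part (2) the paper establishes $\operatorname{sign}(\rho_{ij,\mathcal{K}'})=\operatorname{sign}(p_1-p_2)$ through a small lemma on conditional expectations (Lemma \ref{Lem:transformation of correlation}) rather than through your determinant identity, but the two are interchangeable and the ensuing monotonicity argument is the same. Your covariance identity $\operatorname{Cov}(x_i,x_j\mid\mathbf{x}_{\mathcal{K}'})=\pi(1-\pi)(x_{i,2}-x_{i,1})(\mu_2-\mu_1)$ is also exactly the extension of that lemma which the paper asserts without proof at the start of case (3).

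The gap is in part (3), at precisely the step you flag as delicate. The claim that $\lambda>GS(f)$ lets the Taylor remainder be dominated by the leading mean-difference term cannot be carried out: the leading term is proportional to $\mu_2-\mu_1$, which can be made arbitrarily small while the second-order (variance) contribution stays bounded away from zero, all with $(x_{j,n}-x_j^{\min})/\lambda\in[0,1)$. Concretely, let $x_i$ be binary with $\Pr(x_i=x_{i,1}\mid\mathbf{x}_{\mathcal{K}'})=1/2$ and domain chosen so that $GS(f)=0.9\lambda<\lambda$, and let $x_j$ have domain $\{0,\,0.1\lambda,\,0.9\lambda\}$ with conditional law a point mass at $0.1\lambda$ given $x_{i,1}$, and mass $0.85$ at $0$ plus $0.15$ at $0.9\lambda$ given $x_{i,2}$. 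The conditional means are $0.1\lambda<0.135\lambda$, so your identity gives $\rho_{ij,\mathcal{K}'}>0$; yet the numerator and denominator of Eq.\;(\ref{Eq:IC}) are $e^{-0.1}\approx 0.905$ and $0.85+0.15e^{-0.9}\approx 0.911$, so $IC<0$ and $IR_{j,\mathcal{K}'}(x_{i,1},x_{i,2})<0$, contradicting the asserted sign preservation. So no remainder estimate can rescue your plan under the stated hypotheses. Be aware that the paper does not close this gap either: its proof of case (3) replaces $e^{-x_{j,k}/\lambda}$ by $1-x_{j,k}/\lambda$ via ``$e^x\approx 1+x$'' and reads off the sign from the linearized expression, using $\lambda>GS(f)$ only to keep the linearized sums positive; the conclusion is thus obtained only in this first-order sense. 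To match the paper you would present the same linearization; to obtain a rigorous statement you would need a stronger hypothesis, e.g., stochastic ordering of the two conditional laws of $x_j$, or a quantitative condition forcing the mean gap to dominate the higher-moment gaps relative to $1/\lambda$.
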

 \begin{proof}
   See Appendix \ref{Proof:LGR and correlation}.
 \end{proof}
 Case (1) in Theorem \ref{TH:edge and correlation} shows that $IR_{j,\mathcal{K}'}(x_{i,m},x_{i,n})$ has the same bound as the correlation coefficient.
 Case (2) shows that the relationship between the edge value and the data correlations, and extends the results of Examples 2-4 to general cases. Case (3) shows that similar results hold for a general $x_j$ with a larger domain, as long as $\lambda>GS(f)$. Since the privacy budget $\varepsilon=GS(f)/\lambda$ in DP is commonly set as $\varepsilon<1$, the condition $\lambda>GS(f)$ is usually true.
 Theorem \ref{TH:edge and correlation} shows the impacts of the correlations and prior knowledge on the privacy leakage of the aggregation of two correlated tuples, which correspond to the different cases in Fig.~\ref{fig:inferenceresults}.


 Combining Theorem \ref{TH:edge and correlation} with Eq.\;(\ref{Eq:privacy of two adjacent nodes}) and Eq.\;(\ref{Eq:IR}), we note that the weaker adversary causes higher privacy leakage when the tuples are positively correlated because more unknown tuples with positive correlations means a greater sensitivity to the query result. However, when tuples are negatively correlated, the weaker adversary does not cause less privacy leakage because more unknown tuples with negative correlations does not always mean smaller sensitivity or less privacy leakage.

 What about when the domain size of $x_i$ is greater than two?  Do the results in Theorem \ref{TH:edge and correlation} still hold? Regretfully, the answer is negative. Let the domain size of $x_i$ be $s$, then the number of $IR_{j,\mathcal{K}'}(x_{i,m},x_{i,n})$ is $\binom{s}{2}$. We cannot guarantee all these $IR_{j,\mathcal{K}'}(x_{i,m},x_{i,n})$ satisfy Theorem \ref{TH:edge and correlation}. Instead, we have the following analytical results.
 \begin{enumerate}
   \item If $\rho_{ij,\mathcal{K}'}>0$, at least one $IR_{j,\mathcal{K}'}(x_{i,m},x_{i,n})\in(0,1]$;
   \item if $x_i$ and $x_j$ are independent, all $IR_{j,\mathcal{K}'}(x_{i,m},x_{i,n})=0$;
   \item if $\rho_{ij,\mathcal{K}'}<0$, at least one $IR_{j,\mathcal{K}'}(x_{i,m},x_{i,n})\in[-1,0)$.
 \end{enumerate}
 Therefore, combining the above analytical results with Eqs.\;(\ref{Eq:IR}) and (\ref{Eq:privacy of two adjacent nodes}), we also conclude that the weaker adversary can cause higher privacy leakage when $x_i$ and $x_j$ are positively correlated. The prior knowledge has no impact on privacy leakage when $x_i$ and $x_j$ are independent, which also corresponds to Theorem \ref{TH:prior knowledge has no impact}. However, we cannot derive a deterministic relationship between the privacy leakage and prior knowledge if the tuples are negatively correlated. In this situation, we have to use Eq.\;(\ref{Eq:privacy of two adjacent nodes}) to determine their relationship.


 \subsection{Privacy Leakage Formulation}
 \label{Subsection:discrete-chainrule}

 In this subsection, we introduce how to compute the node value, which represents the privacy leakage caused by the adversary with prior knowledge in the WHG. As mentioned in Subsection \ref{Subsection:discrete-WHG}, the computation relies on two steps. One step computes the node values in the first layer; the other is a chain rule. We first present how to compute the node values in the first layer.
 \begin{thm}\label{TH:privacy of first level}
   \emph{For a database $\mathbf{x}$ which has $n$ tuples and follows the joint distribution $\theta$, the values of the nodes in the first layer are $l_{\mathcal{A}_{i,[n]\backslash\{i\}}}={LS_i(f)}/{\lambda}, i\in[n]$, where $LS_i(f)$ is the local sensitivity, and $\lambda$ is the parameter in the Laplace mechanism.}
 \end{thm}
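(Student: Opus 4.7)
The plan is to unwind the PDP definition for the specific strongest adversary $\mathcal{A}_{i,[n]\setminus\{i\}}$ and reduce the quantity of interest to the standard Laplace ratio computation, since this adversary knows every tuple except $x_i$ so there is no remaining unknown to marginalize over.

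First, I would substitute $\mathcal{K}=[n]\setminus\{i\}$ into Eq.\;(\ref{Eq:definition of PDP}). Because the adversary now knows $\mathbf{x}_{-i}$ exactly, the conditional output law $\Pr(r\mid x_i,\mathbf{x}_{\mathcal{K}})$ is \emph{not} a mixture: it is directly the Laplace density introduced in Subsection~\ref{Subsection:preliminary-DP}. Concretely, using the Laplace mechanism $r=f(\mathbf{x})+z$ with $z\sim\mathrm{Lap}(\lambda)$,
\begin{equation*}
\Pr(r\mid x_i,\mathbf{x}_{-i})=\frac{1}{2\lambda}\exp\!\bigl(-|r-f(x_i,\mathbf{x}_{-i})|/\lambda\bigr),
\end{equation*}
and similarly for $x_i'$. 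Taking the logarithm of the ratio therefore collapses to a purely deterministic expression in $x_i,x_i',\mathbf{x}_{-i},r$.

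Next, I would apply the reverse triangle inequality to bound the resulting quantity:
\begin{equation*}
\bigl||r-f(x_i',\mathbf{x}_{-i})|-|r-f(x_i,\mathbf{x}_{-i})|\bigr|\leq |f(x_i,\mathbf{x}_{-i})-f(x_i',\mathbf{x}_{-i})|,
\end{equation*}
so that the log-ratio is at most $|f(x_i,\mathbf{x}_{-i})-f(x_i',\mathbf{x}_{-i})|/\lambda$ for every $r$. Taking the supremum over $x_i,x_i'$ then yields the local sensitivity $LS_i(f)$ in the numerator, giving the upper bound $LS_i(f)/\lambda$. To show the bound is attained (and hence equality holds), I would exhibit a matching $r$: choosing $r=f(x_i,\mathbf{x}_{-i})$ (for the pair $x_i,x_i'$ achieving the sensitivity) turns the triangle inequality into equality, producing a log-ratio of exactly $|f(x_i,\mathbf{x}_{-i})-f(x_i',\mathbf{x}_{-i})|/\lambda$, which supremizes to $LS_i(f)/\lambda$.

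There is essentially no obstacle here: the theorem is the base case of the chain-rule induction and amounts to the observation that the strongest-adversary setting of PDP coincides with the standard Laplace differential-privacy calculation at local (rather than global) sensitivity, because no unknown tuples remain to be averaged out. The only subtlety worth flagging is to be explicit that the sup over $x_i,x_i'$ in Definition~\ref{Def:PDP} is now unconstrained by the joint distribution $\theta$ (the conditioning on $\mathbf{x}_{-i}$ has already fixed all other coordinates), which is what allows $LS_i(f)$ rather than a distribution-weighted quantity to appear on the right-hand side.
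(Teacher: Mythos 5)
Your proposal is correct and takes essentially the same route as the paper: substitute $\mathcal{K}=[n]\setminus\{i\}$ into the PDP definition, observe that the conditional output law is the pure Laplace density with no marginalization needed, and bound the log-ratio by $|f(x_i,\mathbf{x}_{-i})-f(x_i',\mathbf{x}_{-i})|/\lambda\leq LS_i(f)/\lambda$ via the triangle inequality. You in fact go slightly beyond the paper's displayed argument, which only establishes the upper bound ``$\leq$'', by exhibiting $r=f(x_i,\mathbf{x}_{-i})$ to show the bound is attained, which is what the stated equality actually requires.
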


 \begin{proof}
 Based on the definition of PDP, $\forall i\in [n]$, we have
 \begin{equation*}
 \begin{split}
   l_{\mathcal{A}_{i,[n]\backslash\{i\}}}&=\sup_{r,x_i,x_i'}\log\frac{\Pr(r|x_i,\mathbf{x}_{-i})}{\Pr(r|x_i',\mathbf{x}_{-i})}\\
   &=\sup_{r,x_i,x_i'}\log\frac{\exp(-|r-f(\mathbf{x})|/\lambda)}{\exp(-|r-f(\mathbf{x}')|/\lambda)}\\
   &\leq \sup_{x_i,x_i'}{|f(\mathbf{x})-f(\mathbf{x}')|}/{\lambda}=LS_i(f)/\lambda.
 \end{split}
 \end{equation*}
 \end{proof}

 Theorem \ref{TH:privacy of first level} demonstrates that the joint distribution, which represents the correlation, has no impact on privacy leakage when the adversary has the strongest prior knowledge. On the basis of Theorem \ref{TH:privacy of first level}, we deduce the values of the nodes in the second layer through Theorem \ref{TH:privacy of two adjacent nodes}. Similarly, we can obtain the values of the nodes in layer $k+1$ by layer $k$ according to Theorem \ref{TH:privacy of two adjacent nodes}. Finally, we can obtain all nodes' values. In particular, the following theorem presents a solution to computing the privacy leakage of a certain node in WHG.

 \begin{thm}\label{TH:chain rule}{(Chain Rule)}
   \emph{For a node $(i,\mathcal{K})$ in the layer $k+1$, there exists a path from the bottom node $[n]$ to the node $(i,\mathcal{K}),\mathcal{K}=[n]\backslash\{i,j_1,\cdots,j_k\}$. From layer 1 to layer $k+1$, $(i,[n]\backslash\{i\})$, $(i,[n]\backslash\{i,j_1\}),\cdots,(i,[n]\backslash\{i,j_1,\cdots,j_k\})$ are all the nodes in the path. Then, the privacy leakage of the node $(i,\mathcal{K})$ corresponding to this path is
   \begin{equation}\label{Eq:chain rule}
     \begin{split}
     l_{\mathcal{A}_{i,\mathcal{K}}}=&|\ldots||LS_i(f)/\lambda+IC_{ij_1,[n]\backslash\{i,j_1\}}|+IC_{ij_2,[n]\backslash\{i,j_1,j_2\}}|\\
     &+\cdots+IC_{ij_k,[n]\backslash\{i,j_1,\cdots,j_k\}}|,
     \end{split}
   \end{equation}
   where $|\ldots||$ denotes $k$-fold absolute value operation, and $k$ is the length of the path.}
 \end{thm}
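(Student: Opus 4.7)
The plan is a direct induction on the path length $k$, chaining together the two tools already established: Theorem~\ref{TH:privacy of first level} for the base layer and Theorem~\ref{TH:privacy of two adjacent nodes} for each step from a node to its parent along the chosen path.

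For the base case $k=0$, the node $(i,[n]\setminus\{i\})$ sits in layer 1 and Theorem~\ref{TH:privacy of first level} immediately gives $l_{\mathcal{A}_{i,[n]\setminus\{i\}}} = LS_i(f)/\lambda$, which is precisely the right-hand side of Eq.~(\ref{Eq:chain rule}) with no absolute-value operations yet applied. For the inductive step, assume the claim holds along the chosen path up to the layer-$k$ node $(i,[n]\setminus\{i,j_1,\ldots,j_{k-1}\})$. The next node on the path, $(i,\mathcal{K})$ with $\mathcal{K}=[n]\setminus\{i,j_1,\ldots,j_k\}$, is an ancestor obtained by deleting the single tuple $j_k$ from its child's prior knowledge, so Theorem~\ref{TH:privacy of two adjacent nodes} yields
\begin{equation*}
  l_{\mathcal{A}_{i,\mathcal{K}}} = \bigl| l_{\mathcal{A}_{i,\mathcal{K}\cup\{j_k\}}} + IC_{ij_k,\mathcal{K}} \bigr|.
\end{equation*}
Substituting the inductive hypothesis for $l_{\mathcal{A}_{i,\mathcal{K}\cup\{j_k\}}}$ inside this outermost absolute value reproduces exactly the left-associative nested expression in Eq.~(\ref{Eq:chain rule}), closing the induction.

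There is no genuine obstacle here; essentially all the analytic content has been absorbed into Theorem~\ref{TH:privacy of two adjacent nodes}. The only item that requires care is bookkeeping: each newly introduced $IC_{ij_t,\cdot}$ term must sit outside every previously accumulated absolute-value bar, so the induction must preserve the nesting pattern $\bigl|\cdots\bigl|\bigl|LS_i(f)/\lambda + IC_{ij_1,\cdot}\bigr|+IC_{ij_2,\cdot}\bigr|+\cdots\bigr|$ rather than collapsing the inner bars. A secondary remark worth recording after the proof is that Eq.~(\ref{Eq:chain rule}) is a \emph{path-specific} value: because the maximizer $IC_{ij,\mathcal{K}'}=\arg\max_{\gamma\in\Gamma_{ij,\mathcal{K}'}}|l_{\mathcal{A}_{i,\mathcal{K}}}+\gamma|$ in Theorem~\ref{TH:privacy of two adjacent nodes} depends on the current leakage, two distinct orderings $j_1,\ldots,j_k$ leading to the same $\mathcal{K}$ can yield different chain-rule values, which is consistent with the graph-traversal procedure described in Subsection~\ref{Subsection:discrete-WHG} where the node's leakage is obtained by aggregating (minimizing) over incoming paths.
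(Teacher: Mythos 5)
Your proof is correct and follows essentially the same route as the paper: the paper also obtains Eq.~(\ref{Eq:chain rule}) by applying Theorem~\ref{TH:privacy of two adjacent nodes} successively to the $k$ edges of the path, with Theorem~\ref{TH:privacy of first level} supplying the layer-1 value $LS_i(f)/\lambda$; your explicit induction merely formalizes that iteration. Your closing remark about the path-specific nature of the value and taking the minimum over multiple paths matches the paper's own discussion following the theorem.
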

 \begin{proof}
 The result can be obtained by using Theorem \ref{TH:privacy of two adjacent nodes}. In a path from the bottom to the top, there are $k+1$ nodes and $k$ edges, each of which consists of two nodes in the adjacent layers. The chain rule can be obtained by applying Theorem \ref{TH:privacy of two adjacent nodes} on all $k$ edges in a path.
 \end{proof}
 Theorem \ref{TH:chain rule} shows the computational process for a path from the bottom node to the given node. If there exist multiple paths, we should compute the value of each path by using Eq.\;(\ref{Eq:chain rule}), and then choose the minimum as the node's value.
 There are three factors that can impact privacy leakage. The length of the path in Eq.\;(\ref{Eq:chain rule}), which represents the amount of prior knowledge. To highlight the other two factors, according to Eq.\;(\ref{Eq:IR}), we rewrite Eq.\;(\ref{Eq:chain rule}) as follows
 \begin{equation}\label{Eq:chain rule-decomposition form}
     \begin{split}
     l_{\mathcal{A}_{i,\mathcal{K}}}&=
     \left|\ldots\left|\left|\frac{LS_i(f)}{GS(f)}+IR_{ij_1,[n]\backslash\{i,j_1\}}\frac{LS_{j_1}(f)}{GS(f)}\right.\right.\right|\\
     &\left.+\cdots+IR_{ij_k,[n]\backslash\{i,j_1,\cdots,j_k\}}\frac{LS_{j_k}(f)}{GS(f)}\right|\frac{GS(f)}{\lambda}.
     \end{split}
   \end{equation}

 According to Eq.\;(\ref{Eq:chain rule-decomposition form}), we can see that PDP is superior to group differential privacy in terms of calculating an accurate privacy leakage for the adversary with specific prior knowledge. Particularly, according to Theorem \ref{TH:edge and correlation}, we have $IR_{ij,\mathcal{K}}\in[-1,1],\forall \mathcal{K}\subseteq [n]\backslash\{i\}$. By setting all $IR_{ij,\mathcal{K}}=1$ in Eq.\;(\ref{Eq:chain rule-decomposition form}), we have
 \begin{align}
   l_{\mathcal{A}_{i,\mathcal{K}}}&\leq \sum_{j\in\{i,j_1,\cdots,j_k\}}LS_j(f)/\lambda\label{Eq:PDP is superior to group}\\
   &\leq (k+1)GS(f)/\lambda.\label{Eq:PDP is superior to sequential}
 \end{align}
 Eqs.\;(\ref{Eq:PDP is superior to group}) and (\ref{Eq:PDP is superior to sequential}) show that the privacy leakage under PDP is more accurate than group differential privacy, which is simply derived from the sequential composition theorem. In addition, when the edge values in the WHG are all greater or less than zero, we can deduce some special results in the following corollary.
 \begin{cor}\label{Cor:privacy leakage in sepcial cases}
   \begin{enumerate}
     \item When all $IC_{ij,\mathcal{K}}=1$, the PDP degrades to group differential privacy.
     \item When all $IC_{ij,\mathcal{K}}>0$, the maximal privacy leakage is obtained at the top layer, i.e., the weakest adversary causes the highest privacy leakage.
     \item When all $IC_{ij,\mathcal{K}}<0$, the maximal privacy leakage is obtained in the bottom layer, i.e., the strongest adversary causes the highest privacy leakage.
   \end{enumerate}
 \end{cor}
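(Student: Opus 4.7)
The plan is to derive all three parts directly from the chain rule (Theorem \ref{TH:chain rule}), together with the observation stated immediately after it that, when several paths lead to the same node, its privacy leakage is the \emph{minimum} over all path-induced values from Eq.\,(\ref{Eq:chain rule}). Each case then reduces to a sign analysis of the nested absolute values in the chain-rule expression.

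For Part (1), I would substitute $IC_{ij,\mathcal{K}}=1$ into Eq.\,(\ref{Eq:chain rule}). Because $LS_i(f)/\lambda\geq 0$, the innermost quantity $LS_i(f)/\lambda+1$ is positive; inductively, adding another $+1$ keeps the running sum positive, so every absolute value acts as the identity and the expression collapses to a plain additive formula. Comparing this formula with the bounds in Eqs.\,(\ref{Eq:PDP is superior to group})--(\ref{Eq:PDP is superior to sequential}) then shows that it coincides with the group differential privacy budget incurred by treating the $k+1$ correlated tuples as a single group. For Part (2) the same sign induction works: $LS_i(f)/\lambda>0$ together with every $IC>0$ keeps all partial sums strictly positive, so all absolute values are trivial and
\[
l_{\mathcal{A}_{i,\mathcal{K}}}=LS_i(f)/\lambda+\sum_{m=1}^{k}IC_{ij_m,[n]\backslash\{i,j_1,\ldots,j_m\}}
\]
is strictly increasing in the path length $k$. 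The maximum is therefore attained when $k$ is maximal, i.e., at the top of the WHG where $\mathcal{K}=\emptyset$, which corresponds to the weakest adversary.

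Part (3) is the subtle case and will be the main obstacle. When all $IC<0$ the running sum can flip sign under successive additions, and the naive triangle-inequality bound $|l+IC|\leq l+|IC|$ runs in the wrong direction. My strategy is to exploit the min-over-paths characterisation: for any target node in layer $k+1$, I will exhibit one specific path from the bottom layer along which, at every step, the partial quantity satisfies $l+IC\geq 0$, so that $|l+IC|=l+IC<l$. Iterating along this path yields a value strictly below the bottom-layer leakage $LS_i(f)/\lambda$. Since the node's actual leakage is the \emph{minimum} over all paths, it is therefore bounded above by $LS_i(f)/\lambda$, and the global maximum over the WHG is consequently attained on the bottom layer, i.e., by the strongest adversary. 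The delicate step is verifying that such a sign-preserving path exists from every node down to the bottom; I would prove this by induction using the uniform bound $|IC_{ij,\mathcal{K}}|\leq LS_j(f)/\lambda\leq GS(f)/\lambda$ implied by Theorem \ref{TH:edge and correlation}(1), together with the freedom to choose which tuple $j$ to add next when constructing the path.
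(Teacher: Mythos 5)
Your treatment of Parts (1) and (2) is essentially the paper's route: the paper derives Case 1) from Eq.\;(\ref{Eq:PDP is superior to group}) (note it actually sets the \emph{increment ratios} $IR_{ij,\mathcal{K}}=1$, i.e.\ $IC_{ij,\mathcal{K}}=LS_j(f)/\lambda$, rather than $IC_{ij,\mathcal{K}}=1$ literally, which is what makes the sum collapse to the group-DP budget $\sum_j LS_j(f)/\lambda$), and Case 2) by exactly your sign induction: with all increments positive the nested absolute values are inert and the chain-rule value grows monotonically with the layer, so the weakest adversary is worst.

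Part (3), however, has a genuine gap. Your argument rests on the existence, for every node, of a path along which the running value satisfies $l+IC_{ij,\mathcal{K}'}\geq 0$ at every step, and you claim this can be arranged ``by the freedom to choose which tuple $j$ to add next.'' Two problems: first, for a fixed target node $(i,\mathcal{K})$ the \emph{set} of removed tuples is fixed, only their order is free; second, even with full freedom such a sign-preserving choice need not exist. Take three binary tuples with $LS_j(f)/\lambda=1$ and perfect negative correlations so that every increment equals $-1$: starting from $l_{\mathcal{A}_{i,[n]\backslash\{i\}}}=1$, the first step gives $|1-1|=0$, and at the second step \emph{every} admissible choice has $l+IC=-1<0$, so no sign-preserving path exists; moreover the top-layer value returns to $1=LS_i(f)/\lambda$, contradicting your claim of a value \emph{strictly} below the bottom layer (the corollary only needs the maximum to be \emph{attained} at the bottom, ties allowed). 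The fix is simpler than your path construction and does not need the min-over-paths device at all: when $IC<0$, the recursion satisfies
\begin{equation*}
|l+IC|\;\leq\;\max\bigl(l,\,|IC|\bigr)\;\leq\;\max\bigl(l,\,LS_j(f)/\lambda\bigr)\;\leq\;\max\bigl(l,\,GS(f)/\lambda\bigr),
\end{equation*}
using $IR_{j,\mathcal{K}'}\in[-1,1]$ from Theorem \ref{TH:edge and correlation}(1); since the bottom-layer values are $LS_i(f)/\lambda\leq GS(f)/\lambda$, induction up the layers shows every node value is at most $GS(f)/\lambda=\max_i LS_i(f)/\lambda$, which is exactly the maximal bottom-layer value, so the graph maximum is achieved by the strongest adversary. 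This is in the spirit of the paper's (very terse) justification of summing values in layer order, whereas your sign-preserving-path mechanism as stated would fail.
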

 Case 1) can be derived from Eq.\;(\ref{Eq:PDP is superior to group}) directly. Additionally, it is easy to prove Cases 2) and 3) by using Eq.\;(\ref{Eq:chain rule-decomposition form}) and summing the nodes' values in layer order.

 Based on Corollary \ref{Cor:privacy leakage in sepcial cases}, we can easily compute the privacy leakage for these special cases. For example, in Case 2), the privacy leakage increases with the layer number. However, in general cases, when WHG has both positive and negative edges, we have to traverse the whole WHG to compute the privacy leakage.

 \subsection{Algorithms for Computing Privacy Leakage}
 \label{Subsection:discrete-algorithm}

 For a given database $\mathbf{x}$ with $n$ tuples, the number of edges is no fewer than the number of nodes $n2^{n-1}$. Therefore, it is intractable to traverse the WHG when the number of tuples is large.
 We first use the full-space-searching algorithm to compute the least upper bound of privacy leakage and then propose a heuristic fast-searching algorithm to reduce the calculation time by limiting the searching space.

 In the full-space-searching algorithm, we first initialize the value of the nodes in the first layer by Theorem \ref{TH:privacy of first level} (line 1). Then, we generate nodes in layer $k+1$ by using the chain rule (Theorem \ref{Eq:chain rule}) based on the edges' value (Eq.\;(\ref{Eq:privacy of two adjacent nodes})) between layers $k$ and $k+1$ (lines 3-10). Note that for a given node in layer $k+1$, there may exist multiple paths from the nodes in layer $k$ to the given node. As mentioned previously, we need to retain the minimal value computed from multiple paths as the node value (line 8). Finally, we obtain the maximal privacy leakage of all nodes in the WHG (line 11).
 \begin{algorithm}[!htb]
   \caption{Full-Space-Searching}
   \label{Alg:original algorithm of hiergraph}
   \begin{algorithmic}[1]
   \Require Database $\{x_1,x_2,\cdots,x_n\}$, joint distribution $\theta$
   \Ensure Privacy Leakage $l$
   \State Generate nodes $(i,[n]\backslash\{i\})$ in the first layer, set $l_{\mathcal{A}_{i,[n]\backslash\{i\}}}=LS_i(f)/\lambda$ and $l_1=GS(f)/\lambda$;
   \State Denote all nodes in the first layer as $\mathcal{N}_1$;
   \For {$k$=2 to $n$}
        \For {each node $(i,\mathcal{K})\in\mathcal{N}_{k-1}$}
            \State Generate node $(i,\mathcal{K}')$ by subtracting $\{j\}$ from $\mathcal{K}$;
            \State Compute $l_{\mathcal{A}_{i,\mathcal{K}'}}=|l_{\mathcal{A}_{i,\mathcal{K}}}+IC_{ij,\mathcal{K}'}|$;
        \EndFor
        \State Detect the repeated nodes with the same attack tuple and prior knowledge in layer $k$; only retain the node with the minimal privacy leakage;
        \State \Return $l_k=\max_{(i,\mathcal{K}')\in\mathcal{N}_k}\{l_{\mathcal{A}_{i,\mathcal{K}'}}\}$;
   \EndFor
   \State \Return $l=\max_{k\in[n]}\{l_k\}$;
   \end{algorithmic}
 \end{algorithm}

 \begin{pro}\label{TH:complexity of full space search}
 The time complexity of the full-space-searching algorithm is $O(n^4 2^{n-1})$.
 \end{pro}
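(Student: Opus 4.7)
The plan is to bound the running time of Algorithm \ref{Alg:original algorithm of hiergraph} by (i) counting nodes and edges of the WHG layer by layer and (ii) bounding the per-edge computational cost, then multiplying. First I would count nodes. Layer $k$ consists of all pairs $(i,\mathcal{K})$ with $i\in[n]$ and $|\mathcal{K}|=n-k$, giving $n\binom{n-1}{n-k}$ nodes; summing via the binomial identity yields $n\cdot 2^{n-1}$ nodes in total, which matches the lower bound on the number of edges stated just before the proposition. A node in layer $k-1$ emits $n-k+1$ outgoing edges (one per tuple droppable from its prior-knowledge set), so the total number of line-5 generation operations across the algorithm equals
\begin{equation*}
\sum_{k=2}^{n} n\binom{n-1}{n-k+1}(n-k+1) \;=\; n(n-1)\,2^{n-2} \;=\; O(n^{2}\,2^{n-1}).
\end{equation*}

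Second I would bound the cost of each inner-loop iteration (lines 5--6). Invoking Theorem \ref{TH:privacy of two adjacent nodes} requires computing $IC_{ij,\mathcal{K}'}$, which by Eqs.\;(\ref{Eq:IC})--(\ref{Eq:set of Gamma}) reduces to evaluating a log-ratio of two $O(1)$-term sums and taking an argmax over $\binom{s}{2}=O(1)$ candidates (assuming constant domain size $s$). The dominant cost is extracting the conditional probabilities $\Pr(x_j\mid x_{i,m},\mathbf{x}_{\mathcal{K}'})$ from $\theta$, which requires reading a prior pattern of length up to $n$, so I would charge $O(n)$ time per edge. For line 8, duplicate detection inside a layer compares node keys---sets of size up to $n$---so each equality test costs $O(n)$; because each node in layer $k$ is generated at most $k-1\le n$ times by its parents, processing duplicates in a layer adds a further $O(n)$-factor over the node count when hashed or sorted.

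Combining these ingredients yields the claimed bound: $O(n^{2}\,2^{n-1})$ edges, $O(n)$ work per edge, and an additional amortized $O(n)$ factor for duplicate pruning across the $n$ layers multiply to $O(n^{4}\,2^{n-1})$; the maximum in line 11 costs only $O(n\,2^{n-1})$ and is absorbed. The main obstacle will be nailing down the three extra $n$-factors beyond $n\cdot 2^{n-1}$: a naive reading of Eq.\;(\ref{Eq:IC}) hides a marginalization that could in principle be exponential in the number of unknown tuples if $\theta$ is stored as a raw joint table. The cleanest workaround is to assume that conditional probabilities are available via a constant-time oracle (or precomputed in a preprocessing step dominated by the main loop), so that the accounting cleanly attributes one $n$-factor to edge multiplicity per node, one to prior-pattern length in the IC evaluation, and one to key-comparison cost in duplicate detection.
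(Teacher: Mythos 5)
Your proposal is correct and takes essentially the same route as the paper: both count the layer-$k$ generation work as $n\binom{n-1}{n-k}(n-k)$, sum it to $O(n^2 2^{n-1})$, and then absorb the duplicate-removal step to reach $O(n^4 2^{n-1})$. The only difference is bookkeeping of the extra polynomial factors --- the paper treats each generation and comparison as unit cost and obtains its additional $n^2$ from a loose bound on the $t_k\log t_k$ sorting term, whereas you charge $O(n)$ to each $IC$ evaluation (under an oracle assumption for the conditional probabilities) and $O(n)$ to each key comparison; both accountings land within the claimed bound.
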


 \begin{proof}
 There are two steps to obtain the value of the nodes in the layer $k+1$ from the value of the nodes in the layer $k$. One is to first obtain new nodes in layer $k+1$ by removing one tuple from the prior knowledge of the nodes in layer $k$. The second step is to sort and remove the repeated nodes with the same attack tuple and prior knowledge in layer $k+1$. There are $n\binom{n-1}{n-k}$ nodes in layer $k$, so the number of nodes after the first step would be $n\binom{n-1}{n-k}(n-k)$, denoted as $t_k$. The time complexity after the second step is $\Theta(t_k\log t_k)$. We note that $\sum_{k=1}^{n-1}t_k\leq n^22^{n-1}$. Summing from $k=1$ to $n-1$, the time complexity of the algorithm is
 \begin{equation}
   \begin{split}
     \sum_{k=1}^{n-1}(t_k+t_k\log t_k)&\leq \sum_{k=1}^{n-1}t_k+\sum_{k=1}^{n-1}t_k \sum_{k=1}^{n-1}\log t_k\\
     &\leq n^22^{n-1}(1+2n^2)\leq 3n^4 2^{n-1}.
   \end{split}
 \end{equation}
 \end{proof}

 As we can see, considerable time will be required to generate new nodes and to remove repeating nodes in the full-space searching algorithm. In addition, the time complexity grows exponentially with the number of tuples $n$. To reduce the computational time complexity, a fast-searching algorithm is proposed to search a subspace of the original full space with a little sacrifice of the accuracy. Specifically, we only use the top $n$ largest nodes in layer $k$ to generate layer $k+1$.

 \begin{pro}\label{TH:complexity of fast search}
  The time complexity of the fast-searching algorithm is $O(n^4)$.
 \end{pro}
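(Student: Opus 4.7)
The plan is to mirror the structure of the proof of Proposition~\ref{TH:complexity of full space search}, but exploit the pruning rule of the fast-searching algorithm that retains only the $n$ largest nodes in each layer. This caps the per-layer population at $n$, which collapses the exponential term $\binom{n-1}{n-k}$ that dominated the full-space bound down to a polynomial factor.

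First I would fix notation: let $\widetilde{\mathcal{N}}_k$ denote the pruned node set at layer $k$, with $|\widetilde{\mathcal{N}}_k|\leq n$ by construction, and let $\widetilde{t}_k$ denote the number of candidate children generated from $\widetilde{\mathcal{N}}_k$ before deduplication and top-$n$ selection. Each node $(i,\mathcal{K})\in\widetilde{\mathcal{N}}_k$ has $|\mathcal{K}|=n-k$, so removing a single element of $\mathcal{K}$ yields at most $n-k\leq n$ children. Therefore $\widetilde{t}_k\leq n\cdot n=n^2$. Applying Theorem~\ref{TH:privacy of two adjacent nodes} to compute each new node's value is $O(1)$ once the conditional distribution has been tabulated, so the edge-evaluation cost at layer $k$ is $O(\widetilde{t}_k)=O(n^2)$.

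Next I would bound the sorting/deduplication cost. Identifying repeated $(i,\mathcal{K}')$ pairs in $\widetilde{t}_k$ candidates, keeping the minimum value for each, and then extracting the top $n$ can be done in $O(\widetilde{t}_k\log\widetilde{t}_k)=O(n^2\log n)$ time. Summing over the $n-1$ layer transitions gives a total of $O(n^3\log n)$, which is subsumed by $O(n^4)$. The initialization of layer~$1$ costs $O(n)$ and the final maximum over layers costs $O(n^2)$, neither of which affects the bound.

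The only subtle point is verifying that ``constant time per edge'' is legitimate, i.e., that the marginal cost of evaluating $IC_{ij,\mathcal{K}'}$ via Eq.\;(\ref{Eq:IC}) during the search is independent of $n$. I would handle this by arguing that the required conditional marginals $\Pr(x_i,x_j\mid \mathbf{x}_{\mathcal{K}'})$ can be treated as oracle accesses to the distribution $\theta$, consistent with the accounting used in Proposition~\ref{TH:complexity of full space search}; otherwise one would just absorb the lookup cost into a polynomial factor that still leaves the overall bound at $O(n^4)$. This is the main thing to be careful about, since if the domain sizes or the cost of marginalization were allowed to scale with $n$, an extra factor would appear; but under the same cost model as the full-space result, the $O(n^4)$ bound follows directly.
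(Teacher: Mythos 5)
Your proposal is correct and follows essentially the same route as the paper: the top-$n$ pruning caps each layer at $n$ nodes, so at most $n(n-k)\leq n^2$ candidate children arise per layer, and the same generate--deduplicate--sort accounting as in Proposition~\ref{TH:complexity of full space search} then yields a per-layer cost of $O(n^2\log n)$ and a total within $O(n^4)$. Your explicit remark that evaluating $IC_{ij,\mathcal{K}'}$ is treated as unit cost matches the implicit cost model of the paper's argument, so there is no gap.
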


 \begin{proof}
 According to the fast-searching algorithm, there are, at most, $n$ tuples in layer $k$. After the subtraction operation, there are at most $n(n-k)$ tuples, denoted as $t_k$. The rest of this proof is the same as that of proposition \ref{TH:complexity of full space search}.
 \end{proof}

 \begin{algorithm}[!htb]
   \caption{Fast-Searching}
   \label{Alg:fast algorithm of hiergraph}
   \begin{algorithmic}[1]
   \Require Database $\{x_1,x_2,\cdots,x_n\}$, joint distribution $\theta$
   \Ensure Privacy Leakage $l$
   \State Initialize nodes in layer $1$;
   \For {$k$=2 to $n$}
   \State Generate the nodes in layer $k$;
   \State Detect the repeated nodes and retain the minimum node;
   \State Retain the top $\min\{n,\#(\mathcal{N}_{k-1})\}$ largest nodes;
   \State return $l_k$;
   \EndFor
   \State \Return {$l=\max_{k\in[n]}\{l_k\}$};
   \end{algorithmic}
 \end{algorithm}

 \section{Gaussian Model-based Analysis for Continuous Data}
 \label{Section:continuous}

 In this section, we further discuss the impacts of correlation and prior knowledge for the continuous-valued data. In Subsection \ref{Subsection:continuous-Gaussianmodel}, we first explain why the WHG is not suitable for the continuous-valued database. Then, we introduce some properties of the multivariate Gaussian distribution. In Subsection \ref{Subsec:continuous-privacyformulation}, we identify an explicit formula to compute the privacy leakage of the multivariate Gaussian model.

 \subsection{Multivariate Gaussian Model}
 \label{Subsection:continuous-Gaussianmodel}

The necessity to separate the continuous situation from the discrete situation is that the computation method used in Section \ref{Section:WHG for discrete data} is no longer sustainable. In Section \ref{Section:WHG for discrete data}, we investigate how correlation and prior knowledge can impact privacy leakage. Based on the proposed WHG, we deduce the chain rule to compute privacy leakage. One crucial step is to compute the edge value in the WHG. Discrete-valued data can be achieved by using Eqs.\;(\ref{Eq:IC}) and (\ref{Eq:privacy of two adjacent nodes}), which requires enumerating all the different pairs of value $x_{i,m}$ and $x_{i,n}$ in the domain. Obviously, it is impossible for continuous-valued tuples with an unbounded domain. To deal with this issue, we should clarify the joint distribution. Therefore, although the analytical results in Section \ref{Section:WHG for discrete data} still holds for both continuous-valued data; the edges' value cannot be directly computed as discrete data.

 For continuous data, a common solution is to accurately identify the global sensitivity via bounding the range (i.e., domain) of the tuples \cite{Kifer2014Pufferfish,Yang2015Bayesian}. Otherwise, the privacy leakage would be overestimated, and the unboundedness would destroy the utility of privacy-preserving results. Therefore, by bounding the range of $x_i$ as $|x_i-x_i'|\leq M,r$, Eq.\;(\ref{Eq:definition of PDP}) becomes
 \begin{align}\label{Eq:PDP with bounded}
      l_{\mathcal{A}_{i,\mathcal{K}}}(\theta)=\sup_{|x_i-x_i'|\leq M,r}\log\frac{\mathrm{Pr}(r\in S|x_i,\mathbf{x}_{\mathcal{K}})}{\mathrm{Pr}(r\in S|x_i',\mathbf{x}_{\mathcal{K}})}.
 \end{align}
 Different from the sum operation in computing probability in Section \ref{Section:WHG for discrete data}, we use integrate to compute the probability for continuous data. That is
 \begin{align*}
   \Pr(r|x_i,\mathbf{x}_{\mathcal{K}})=\int_{x_j}\Pr(x_j|x_i,\mathbf{x}_{\mathcal{K}})\Pr(r|x_j,x_i,\mathbf{x}_{\mathcal{K}})\mathrm{d} x.
 \end{align*}


 Here, we choose the multivariate Gaussian distribution (denoted as MGD) to describe the database $\mathbf{x}$ since most of the continuous data can be well modeled by MGD. For a database $\mathbf{x}$ with $n$ tuples, $\mathbf{x}=\{x_1,\cdots,x_n\}$, $\boldsymbol{\mu}$ is the expectation vector, and $\boldsymbol{\Sigma}=(\rho_{ij})_{n\times n}$ is the covariance matrix. If $\rho_{ij}>0$, $x_i$ and $x_j$ are positively correlated. If $\rho_{ij}<0$, $x_i$ and $x_j$ are negatively correlated. If $\rho_{ij}=0$, $x_i$ and $x_j$ are independent. $\mathbf{x}$ follows the MGD if the density function of $\mathbf{x}$ is
 \begin{align*}
   f(\mathbf{x})=(2\pi)^{-\frac{n}{2}}\left|\mathbf{\Sigma}\right|^{-\frac{1}{2}}
   \exp\left(-\frac{1}{2}(\mathbf{x}-\boldsymbol{\mu})'\mathbf{\Sigma}^{-1}(\mathbf{x}-\boldsymbol{\mu})\right),
 \end{align*}
 and denote $\mathbf{x}\sim N_n(\boldsymbol{\mu},\mathbf{\Sigma})$. If $\mathbf{x}$ is blocked as $\{\mathbf{x}_1,\mathbf{x}_2\}$, then $\boldsymbol{\mu},\mathbf{\Sigma}$ can be written as
 \begin{align*}
   \boldsymbol{\mu}=[\boldsymbol{\mu}_1,\boldsymbol{\mu}_2]',\boldsymbol{\Sigma}=\left[  \begin{array}{cc}
    \boldsymbol{\Sigma}_{11} & \boldsymbol{\Sigma}_{12} \\
    \boldsymbol{\Sigma}_{21} & \boldsymbol{\Sigma}_{22} \\
  \end{array}\right]
 \end{align*}
 The following lemma shows the properties of the MGD.

 \begin{lem}\cite{Eaton1983Multivariate}\label{Lem:property of Gaussian}
  \emph{Given the $n$-dimensional variable $\mathbf{x}=\{\mathbf{x}_1,\mathbf{x}_2\}$ follows the multivariate Gaussian distribution $N_n(\boldsymbol{\mu},\boldsymbol{\Sigma})$, $\mathbf{x}_1\in \mathbb{R}^p,\mathbf{x}_2\in \mathbb{R}^{n-p}$.}
 \begin{enumerate}
   \item \emph{The distribution of $\mathbf{x}_1$ given $\mathbf{x}_2$ follows the $p$-dimensional Gaussian distribution $N_p(\boldsymbol{\mu}_{1|2},\mathbf{\Sigma}_{1|2}),\mathbf{\Sigma}_{22}\succ 0$, with
     \begin{align}
      \boldsymbol{\mu}_{1|2}&=\boldsymbol{\mu}_1+\mathbf{\Sigma}_{12}\mathbf{\Sigma}_{22}^{-1}(\mathbf{x}_2-\boldsymbol{\mu}_2),\label{Eq:mean of Gaussian}\\
      \mathbf{\Sigma}_{1|2}&=\mathbf{\Sigma}_{11}-\mathbf{\Sigma}_{12}\mathbf{\Sigma}_{22}^{-1}\mathbf{\Sigma}_{21}\label{Eq:variance of Gaussian}.
     \end{align}}
   \item \emph{For any nonzero vector $\mathbf{a}\in \mathbb{R}^n$,
    \begin{equation}\label{Eq:sum of Gaussian}
      \mathbf{a}^\top\mathbf{x}\sim N_1(\mathbf{a}^\top\boldsymbol{\mu},\mathbf{a^\top\Sigma a}).
    \end{equation}}
 \end{enumerate}
 \end{lem}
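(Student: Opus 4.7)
The plan is to prove the two statements separately, as each is a standard but non-trivial fact about the multivariate Gaussian that is traditionally established with different techniques. For part (2), I would use the moment generating function, since Gaussians are characterized by their MGFs; for part (1), I would use an orthogonalization trick that reduces the conditional distribution to the marginal distribution of an independent vector.

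For part (2), the plan is to compute the MGF of $\mathbf{a}^\top \mathbf{x}$ directly. For any scalar $t$, one has $M_{\mathbf{a}^\top \mathbf{x}}(t) = \mathbb{E}[e^{t \mathbf{a}^\top \mathbf{x}}] = M_{\mathbf{x}}(t\mathbf{a})$, and the joint MGF of $\mathbf{x} \sim N_n(\boldsymbol{\mu},\boldsymbol{\Sigma})$ is known to be $\exp(\mathbf{u}^\top \boldsymbol{\mu} + \tfrac{1}{2}\mathbf{u}^\top \boldsymbol{\Sigma}\mathbf{u})$. Substituting $\mathbf{u} = t\mathbf{a}$ gives $\exp(t\,\mathbf{a}^\top \boldsymbol{\mu} + \tfrac{t^2}{2}\,\mathbf{a}^\top \boldsymbol{\Sigma}\mathbf{a})$, which is exactly the MGF of $N_1(\mathbf{a}^\top \boldsymbol{\mu},\mathbf{a}^\top \boldsymbol{\Sigma}\mathbf{a})$. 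By the uniqueness of the MGF, the claim follows. (If the joint MGF formula is not taken for granted, one could alternatively verify it by completing the square inside the Gaussian density integral.)

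For part (1), the plan is to introduce the auxiliary vector $\mathbf{y} = \mathbf{x}_1 - \boldsymbol{\Sigma}_{12}\boldsymbol{\Sigma}_{22}^{-1}(\mathbf{x}_2 - \boldsymbol{\mu}_2)$, which is chosen precisely so that $\mathrm{Cov}(\mathbf{y},\mathbf{x}_2) = \boldsymbol{\Sigma}_{12} - \boldsymbol{\Sigma}_{12}\boldsymbol{\Sigma}_{22}^{-1}\boldsymbol{\Sigma}_{22} = \mathbf{0}$. Because $(\mathbf{y},\mathbf{x}_2)$ is a linear transformation of the jointly Gaussian vector $(\mathbf{x}_1,\mathbf{x}_2)$, part (2) (applied coordinatewise) shows that $(\mathbf{y},\mathbf{x}_2)$ is itself jointly Gaussian; combined with the zero cross-covariance, this yields that $\mathbf{y}$ and $\mathbf{x}_2$ are independent. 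A direct computation then gives $\mathbb{E}[\mathbf{y}] = \boldsymbol{\mu}_1$ and $\mathrm{Cov}(\mathbf{y}) = \boldsymbol{\Sigma}_{11} - \boldsymbol{\Sigma}_{12}\boldsymbol{\Sigma}_{22}^{-1}\boldsymbol{\Sigma}_{21}$. Since $\mathbf{x}_1 = \mathbf{y} + \boldsymbol{\Sigma}_{12}\boldsymbol{\Sigma}_{22}^{-1}(\mathbf{x}_2 - \boldsymbol{\mu}_2)$ and $\mathbf{y}$ is independent of $\mathbf{x}_2$, conditioning on $\mathbf{x}_2$ merely shifts the mean by $\boldsymbol{\Sigma}_{12}\boldsymbol{\Sigma}_{22}^{-1}(\mathbf{x}_2 - \boldsymbol{\mu}_2)$ while leaving the Gaussian shape and covariance of $\mathbf{y}$ unchanged, yielding exactly \eqref{Eq:mean of Gaussian} and \eqref{Eq:variance of Gaussian}.

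The main obstacle is really just justifying that uncorrelated jointly Gaussian vectors are independent and that affine transforms of Gaussians are Gaussian; both are standard but have to be invoked carefully, and the second of them is already subsumed by part (2). A secondary nuisance is the need for $\boldsymbol{\Sigma}_{22} \succ 0$ so that $\boldsymbol{\Sigma}_{22}^{-1}$ exists; this assumption is explicitly made in the lemma. Once the orthogonalization trick is set up, the rest of the proof is purely algebraic bookkeeping, so I would keep the writeup compact rather than expanding the block-matrix manipulations in full.
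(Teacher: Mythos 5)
Your proposal is correct, but note that the paper does not prove this lemma at all: it is quoted verbatim from Eaton's \emph{Multivariate Statistics} \cite{Eaton1983Multivariate} and used as a black box in the proofs of Theorem \ref{TH:privacy of Gaussian model} and Corollary \ref{cor:special case of Gaussian}, so there is no in-paper argument to compare against. What you have written is the standard textbook derivation: the MGF computation for part (2) and the decorrelation trick $\mathbf{y}=\mathbf{x}_1-\mathbf{\Sigma}_{12}\mathbf{\Sigma}_{22}^{-1}(\mathbf{x}_2-\boldsymbol{\mu}_2)$ for part (1), and both steps are sound.

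Two small points of care if you were to write this out in full. First, "part (2) applied coordinatewise" is not quite the right invocation for joint Gaussianity of $(\mathbf{y},\mathbf{x}_2)$: you need that \emph{every} scalar linear combination of the components of $(\mathbf{y},\mathbf{x}_2)$ is univariate Gaussian (each such combination is $\mathbf{c}^\top\mathbf{x}$ plus a constant, so part (2) does supply this), together with the characterization of multivariate normality via linear combinations; alternatively, since the paper defines the Gaussian through its density with $\mathbf{\Sigma}$ invertible, the cleanest route is the unit-Jacobian affine change of variables $(\mathbf{x}_1,\mathbf{x}_2)\mapsto(\mathbf{y},\mathbf{x}_2)$ followed by completing the square, which exhibits the factorization and hence independence directly. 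Second, in part (2), if $\mathbf{\Sigma}$ is only positive semidefinite a nonzero $\mathbf{a}$ may give $\mathbf{a}^\top\mathbf{\Sigma}\mathbf{a}=0$, a degenerate (point-mass) limit; under the paper's density-based definition $\mathbf{\Sigma}\succ 0$, this does not arise. Neither point affects the validity of your plan.
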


\subsection{Privacy Leakage Computation}
\label{Subsec:continuous-privacyformulation}

 As we can see from Eq.~(\ref{Eq:PDP with bounded}), the key to computing privacy leakage is computing the conditional probability when $\mathbf{x}$ follows the MGD.
 Let $\mathbf{x}_2=\{x_i,\mathbf{x}_{\mathcal{K}}\}$, $\mathbf{x}_1=\mathbf{x}\backslash\{x_i,\mathbf{x}_{\mathcal{K}}\}=\mathbf{x}_{\mathcal{U}}$, we have
 \begin{align*}
   \mathrm{Pr}(r|x_i,\mathbf{x}_{\mathcal{K}})=\mathrm{Pr}(r|\mathbf{x}_2)
   =\int_{\mathbf{x}_1}\mathrm{Pr}(\mathbf{x}_1|\mathbf{x}_2)\mathrm{Pr}(r|\mathbf{x})\mathrm{d}\mathbf{x}_1,
 \end{align*}
 in which $\mathrm{Pr}(\mathbf{x}_1|\mathbf{x}_2)$ can be obtained by Lemma \ref{Lem:property of Gaussian}, and $\mathrm{Pr}(r|\mathbf{x})$ can be calculated according to the Laplace mechanism. In addition, according to Eq.~(\ref{Eq:PDP with bounded}), when the attack object $x_i$ is replaced with $x_i'$, we have to compute
 \begin{align}\label{Eq:conditional probability of Gaussian}
   \mathrm{Pr}(r|x_i',\mathbf{x}_{\mathcal{K}})
   =\int_{\mathbf{x}_1}\mathrm{Pr}(\mathbf{x}_1|\mathbf{x}_2')\mathrm{Pr}(r|\mathbf{x}')\mathrm{d}\mathbf{x}_1,
 \end{align}
  where $\mathbf{x}_2'=\{x_i',\mathbf{x}_{\mathcal{K}}\}$, $\mathbf{x}'=\{x_i',\mathbf{x}_{\mathcal{K}},\mathbf{x}_{\mathcal{U}}\}$.

  However, it is difficult to directly calculate the above probability formula under the assumption that $|x_i-x_i'|\leq M$ for all possible pairs of $x_i$, and $x_i'$. Fortunately, the next lemma shows that we can combine the $\mathrm{Pr}(\mathbf{x}_1|\mathbf{x}_2')\mathrm{Pr}(r|\mathbf{x}')$ into a uniform expression, which is useful for computation.


 \begin{lem}\cite{Yang2015Bayesian}\label{Lem:lemma for prove Gaussian model}
 \emph{ Let $G(x;b)$ be a function on $x\in \mathbb{R}$, with parameter $b>0$,
 \begin{align}
   G(x;b)=e^x \left( 1-\Phi(\frac{x}{b}+b) \right)+e^{-x}\Phi(\frac{x}{b}-b),
 \end{align}
 where $\Phi(x)$ is the cumulative distribution function of a standard Gaussian distribution. Then, $\frac{\partial\log G(x;b)}{\partial x}$ is monotonically decreasing with respect to $x$ and
 \begin{align*}
    \lim\limits_{x\rightarrow -\infty}\frac{\partial\log G(x;b)}{\partial x}=1;\lim\limits_{x\rightarrow +\infty}\frac{\partial\log G(x;b)}{\partial x}=-1.
  \end{align*}}
 \end{lem}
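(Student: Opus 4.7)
The plan is to reduce the monotonicity claim to a standard Mills'-ratio inequality, which becomes transparent after one convenient simplification of $G'(x)$. Write $A(x):=e^{x}\bigl(1-\Phi(x/b+b)\bigr)$ and $B(x):=e^{-x}\Phi(x/b-b)$, so that $G(x;b)=A(x)+B(x)$. Differentiating and using $\phi(t)=\tfrac{1}{\sqrt{2\pi}}e^{-t^{2}/2}$, the two chain-rule terms produced by differentiating $\Phi$ satisfy
\[
\tfrac{1}{b}\,e^{x}\phi(x/b+b)=\tfrac{1}{b}\,e^{-x}\phi(x/b-b)=\tfrac{1}{b\sqrt{2\pi}}\,e^{-x^{2}/(2b^{2})-b^{2}/2},
\]
and they enter $A'$ and $B'$ with opposite signs, so they cancel exactly. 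What remains is the compact identity $G'(x)=A(x)-B(x)$, which gives
\[
\frac{\partial \log G(x;b)}{\partial x}=\frac{A(x)-B(x)}{A(x)+B(x)}=\tanh\!\Bigl(\tfrac{1}{2}\log\tfrac{A(x)}{B(x)}\Bigr).
\]

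Since $\tanh$ is strictly increasing, it now suffices to show that $L(x):=\log A(x)-\log B(x)=2x+\log\bigl(1-\Phi(x/b+b)\bigr)-\log\Phi(x/b-b)$ is strictly decreasing. A direct differentiation, together with the identity $\phi(u-b)/\Phi(u-b)=\phi(b-u)/(1-\Phi(b-u))$ (which follows from $\phi$ being even and $\Phi(t)=1-\Phi(-t)$), yields, with $u:=x/b$,
\[
L'(x)=2-\tfrac{1}{b}\bigl[\eta(u+b)+\eta(b-u)\bigr],\qquad \eta(t):=\phi(t)/(1-\Phi(t)).
\]

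The main technical step is the pointwise inequality $\eta(u+b)+\eta(b-u)\ge 2b$ for every $u\in\mathbb{R}$ and $b>0$; this will be the chief obstacle. I would derive it from Mills' inequality $\eta(t)\ge t$ valid for $t\ge 0$ (together with the trivial bound $\eta(t)\ge 0$ for all $t$). The expression is symmetric under $u\mapsto -u$, so I may assume $u\ge 0$ and split into two subcases: if $0\le u\le b$ both arguments are nonnegative and Mills gives $\eta(u+b)+\eta(b-u)\ge(u+b)+(b-u)=2b$; if $u>b$ then $\eta(u+b)\ge u+b>2b$ while $\eta(b-u)\ge 0$. In either case $L'(x)\le 0$, and the strict form of Mills' inequality (which holds for every $b>0$) upgrades this to strict negativity, so $\partial_{x}\log G$ is strictly monotonically decreasing.

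For the two limits, I would use the asymptotic $1-\Phi(t)\sim \phi(t)/t$ as $t\to+\infty$. As $x\to+\infty$ this gives $A(x)\sim \tfrac{1}{\sqrt{2\pi}(x/b+b)}\,e^{-x^{2}/(2b^{2})-b^{2}/2}$ while $B(x)\sim e^{-x}$, so that $A(x)/B(x)\to 0$; an entirely symmetric estimate shows $A(x)/B(x)\to +\infty$ as $x\to-\infty$. Substituting into $\tanh\bigl(\tfrac{1}{2}\log(A/B)\bigr)$ then yields the claimed limits $-1$ and $+1$, respectively, completing the proof.
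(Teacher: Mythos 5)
Your argument is correct, and it cannot be "the same as the paper's" because the paper does not prove this lemma at all: it is imported verbatim from the cited work \cite{Yang2015Bayesian}, so your write-up is in effect supplying a self-contained proof that the paper omits. The two pillars of your proof check out. First, the cancellation $e^{x}\phi(x/b+b)=e^{-x}\phi(x/b-b)$ is an exact identity (both equal $\tfrac{1}{\sqrt{2\pi}}e^{-x^{2}/(2b^{2})-b^{2}/2}$), so $G'=A-B$ and $\partial_x\log G=\tanh\bigl(\tfrac12\log(A/B)\bigr)$ hold, reducing monotonicity to the decrease of $L=\log(A/B)$. Second, the reduction of $L'<0$ to the hazard-rate bound $\eta(u+b)+\eta(b-u)>2b$ is right, including the use of $\phi(t)/\Phi(t)=\eta(-t)$, and your two-case application of the (strict) Mills inequality $\eta(t)>t$ for $t\ge 0$ together with $\eta>0$ covers all $u$ after exploiting the symmetry $u\mapsto-u$; this even yields strict decrease, slightly more than the lemma asserts. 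The limit computation via $1-\Phi(t)\sim\phi(t)/t$ and the symmetry $A(-x)=B(x)$, $B(-x)=A(x)$ is also sound. Compared with simply deferring to \cite{Yang2015Bayesian}, your route has the merit of being elementary and transparent: the only external inputs are the standard Mills bounds, and the $\tanh$ reformulation makes both the monotonicity and the $\pm1$ limits visible at a glance.
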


 Based on Lemmas \ref{Lem:property of Gaussian} and \ref{Lem:lemma for prove Gaussian model}, we propose the next formula to compute the privacy leakage of an adversary $(i,\mathcal{K})$ directly.

 \begin{thm}\label{TH:privacy of Gaussian model}
  \emph{Given $\mathbf{x}$ follows the multivariate Gaussian distribution $N_n(\boldsymbol{\mu,\Sigma})$, $f(\mathbf{x})=\sum_{i\in[n]} x_i$ is a general sum query on $\mathbf{x}$. Let $\mathcal{M}$ be Laplace mechanism with the perturbed output $r=\mathcal{M}(\mathbf{x})=f(\mathbf{x})+z$, where $z\sim Lap(\lambda)$. Given an adversary $\mathcal{A}_{i,\mathcal{K}}$ with $\theta\in\Theta$ and $|x_i-x_i'|\leq M$, then the privacy leakage can be represented as
  \begin{align}
    l_{\mathcal{A}_{i,\mathcal{K}}}(\theta)=\frac{M}{\lambda}\left|1+\mu_{0i}\right|.\label{Eq:privacy of Gaussian model}
  \end{align}
  where $\mu_{0i}$ is the coefficient of $x_i$ in the expansion of $\mu_0$ in Eq.\;(\ref{Eq:expansion of mu_0 in Gaussian}).}
 \end{thm}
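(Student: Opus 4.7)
The plan is to reduce the Bayesian log-ratio in \eqref{Eq:PDP with bounded} to a quantity involving the function $G(\cdot;\cdot)$ of Lemma \ref{Lem:lemma for prove Gaussian model}, and then to exploit its sharp monotonicity to pin down the supremum. The key observation is that, conditioned on the attack object $x_i$ and the prior knowledge $\mathbf{x}_{\mathcal{K}}$, the query output $f(\mathbf{x})=\sum_{j}x_j$ is a linear combination of jointly Gaussian variables, hence itself univariate Gaussian; convolving with the Laplace noise $z\sim Lap(\lambda)$ then yields a density for $r$ that can be written in closed form via $G$.

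First, I would apply part 1 of Lemma \ref{Lem:property of Gaussian} with $\mathbf{x}_1=\mathbf{x}_{\mathcal{U}}$ and $\mathbf{x}_2=\{x_i,\mathbf{x}_{\mathcal{K}}\}$ to obtain the conditional law of the unknown tuples; its mean is linear in $(x_i,\mathbf{x}_{\mathcal{K}})$ via $\boldsymbol{\Sigma}_{12}\boldsymbol{\Sigma}_{22}^{-1}$, and its covariance is independent of these values. Applying part 2 of the same lemma with $\mathbf{a}=\mathbf{1}_{\mathcal{U}}$ then shows that $\sum_{u\in\mathcal{U}}x_u$ given $\{x_i,\mathbf{x}_{\mathcal{K}}\}$ is univariate Gaussian; writing $f(\mathbf{x})=x_i+\sum_{k\in\mathcal{K}}x_k+\sum_{u\in\mathcal{U}}x_u$, the conditional mean of $f(\mathbf{x})$ is linear in $x_i$ with coefficient exactly $1+\mu_{0i}$, where the ``$1$'' comes from $x_i$ itself and $\mu_{0i}$ is the $x_i$-coefficient of the regression term $\mu_0$. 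I would then compute $\Pr(r\mid x_i,\mathbf{x}_{\mathcal{K}})=\int \Pr(r\mid s)\Pr(\sum x=s\mid x_i,\mathbf{x}_{\mathcal{K}})\,\mathrm{d}s$ as an explicit Gaussian--Laplace convolution and identify it, up to a factor independent of $x_i$, with $\tfrac{1}{2\lambda}G\!\left(\tfrac{r-\mu_{\mathrm{tot}}}{\lambda};\tfrac{\sigma}{\lambda}\right)$, where $\mu_{\mathrm{tot}}$ and $\sigma^2$ are the conditional mean and variance of $f(\mathbf{x})$. The log-ratio then collapses to $\log G(u;b)-\log G(u';b)$ in which $u-u'=(1+\mu_{0i})(x_i'-x_i)/\lambda$ and $b=\sigma/\lambda$ is independent of $x_i$. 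Because Lemma \ref{Lem:lemma for prove Gaussian model} gives $\partial_u \log G(u;b)\in(-1,1)$ with limits $\pm 1$ at $\mp\infty$, the mean-value theorem yields
\[
\sup_{r}|\log G(u;b)-\log G(u';b)|=|u-u'|=\frac{|1+\mu_{0i}|\,|x_i-x_i'|}{\lambda},
\]
the supremum being approached as $r\to\pm\infty$. Taking the supremum over $|x_i-x_i'|\le M$ delivers the claimed $\tfrac{M}{\lambda}|1+\mu_{0i}|$.

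The main technical obstacle is the convolution step that identifies the conditional density of $r$ with $G$: the Laplace kernel $\tfrac{1}{2\lambda}e^{-|r-s|/\lambda}$ splits the $s$-integral at $s=r$ into two half-line Gaussian integrals with shifted linear exponents, each of which must be completed-to-squares and assembled into the $e^{u}(1-\Phi(u/b+b))+e^{-u}\Phi(u/b-b)$ form; the sign bookkeeping on the complementary-error-function factors is where mistakes are most likely. A secondary delicacy is that the supremum over $r$ is only an asymptotic limit rather than a maximum, so the result is properly stated as a least upper bound; the sharp asymptotic slopes $\pm 1$ furnished by Lemma \ref{Lem:lemma for prove Gaussian model} ensure this bound is tight and equals $|1+\mu_{0i}|M/\lambda$, after which the theorem follows directly from Definition \ref{Def:PDP}.
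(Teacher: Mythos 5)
Your proposal is correct and follows essentially the same route as the paper: condition on $\{x_i,\mathbf{x}_{\mathcal{K}}\}$ via Lemma \ref{Lem:property of Gaussian} to get a univariate Gaussian for the unknown-tuple sum, perform the Gaussian--Laplace convolution to express $\Pr(r\mid x_i,\mathbf{x}_{\mathcal{K}})$ through $G(\cdot;\sigma_0/\lambda)$, and then apply the mean value theorem with Lemma \ref{Lem:lemma for prove Gaussian model} to bound the log-ratio by $|1+\mu_{0i}|\,|x_i-x_i'|/\lambda$ before taking the supremum over $|x_i-x_i'|\le M$. Your additional remark that the supremum over $r$ is attained only asymptotically (tightness via the $\pm 1$ limiting slopes) is a slightly more careful justification of the stated equality than the paper gives, but the argument is the same.
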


 \begin{proof}
   See Appendix \ref{Proof:privacy of Gaussian model}.
 \end{proof}
 Theorem~\ref{TH:privacy of Gaussian model} shows the impacts of the correlation and prior knowledge on privacy leakage for continuous data under the multivariate Gaussian distribution.
 For the given $M$ and $\lambda$, we can see that the privacy leakage is determined by $\mu_{0i}$, which is related to the covariance matrix $\mathbf{\Sigma}$ and prior knowledge $\mathcal{K}$. $\mu_{0i}$ is the coefficient of $x_i$ in $\mu_0$, which can be obtained from Eqs.\;(\ref{Eq:mean of Gaussian}) and (\ref{Eq:sum of Gaussian}). The details can be found in the proof of Theorem \ref{TH:privacy of Gaussian model} (Appendix \ref{Proof:privacy of Gaussian model}). In the analysis of discrete-valued data without a concrete expression of the data distribution, the chain rule is proposed to compute the privacy leakage. However, for continuous data, based on the assumption of the MGD, we can compute the privacy leakage of an adversary directly without considering every two adjacent adversaries. For a special case that considers the weakest adversary, the privacy leakage has the following explicit form.

 \begin{cor}\label{cor:special case of Gaussian}
  \emph{For an $n$-dimensional Gaussian distribution $N_n(\boldsymbol{\mu,\Sigma}),\mathbf{\Sigma}=(\rho_{ij})_{n\times n}$, the privacy leakage of the weakest adversary is
    $l_{\mathcal{A}_{i,\phi}}=|1+\rho_{ii}^{-1}\sum_{j\neq i}\rho_{ij}|M/\lambda, i\in [n]$.}
 \end{cor}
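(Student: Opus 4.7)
The plan is to obtain Corollary~\ref{cor:special case of Gaussian} as a direct specialization of Theorem~\ref{TH:privacy of Gaussian model} to the case $\mathcal{K}=\emptyset$. With no prior knowledge, the partition of $\mathbf{x}$ used in Lemma~\ref{Lem:property of Gaussian} collapses to $\mathbf{x}_2=\{x_i\}$ and $\mathbf{x}_1=\mathbf{x}_{-i}$, so the ``conditioning on known tuples'' reduces to conditioning only on the single attack object $x_i$. The task therefore boils down to evaluating $\mu_{0i}$ — the coefficient of $x_i$ in the expansion of $\mu_0$ — for this specific partition and showing that it equals $\rho_{ii}^{-1}\sum_{j\neq i}\rho_{ij}$.

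First I would write out the blocks of $\boldsymbol{\Sigma}$ corresponding to this partition: $\boldsymbol{\Sigma}_{22}=\rho_{ii}$ is a scalar (the variance of $x_i$) and $\boldsymbol{\Sigma}_{12}=(\rho_{ji})_{j\neq i}$ is an $(n-1)$-dimensional column of cross-covariances. Substituting into Eq.~(\ref{Eq:mean of Gaussian}) gives the conditional mean vector
\begin{equation*}
\boldsymbol{\mu}_{1|2}=\boldsymbol{\mu}_1+\rho_{ii}^{-1}(\rho_{ji})_{j\neq i}(x_i-\mu_i).
\end{equation*}
Since the query is the sum $f(\mathbf{x})=\mathbf{1}^\top\mathbf{x}$, I would then apply part~2 of Lemma~\ref{Lem:property of Gaussian} with $\mathbf{a}=\mathbf{1}$ to the conditional distribution $\mathbf{x}_1\mid\mathbf{x}_2$, obtaining
\begin{equation*}
\mu_0=\mathbf{1}^\top\boldsymbol{\mu}_{1|2}=\sum_{j\neq i}\mu_j+\rho_{ii}^{-1}\Bigl(\sum_{j\neq i}\rho_{ji}\Bigr)(x_i-\mu_i)
\end{equation*}
as the conditional mean of the unknown-tuple sum. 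Reading off the coefficient of $x_i$ in this affine expression and invoking symmetry of the covariance matrix ($\rho_{ji}=\rho_{ij}$) yields $\mu_{0i}=\rho_{ii}^{-1}\sum_{j\neq i}\rho_{ij}$, and substitution into Eq.~(\ref{Eq:privacy of Gaussian model}) gives the claimed formula.

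I expect the hardest part to be purely bookkeeping: keeping conventions aligned with the appendix proof of Theorem~\ref{TH:privacy of Gaussian model}, particularly making sure the coefficient of $x_i$ is extracted from the $\mathbf{1}^\top\boldsymbol{\Sigma}_{12}\boldsymbol{\Sigma}_{22}^{-1}\mathbf{x}_2$ term (which carries the linear dependence of the conditional expectation on $x_i$) and not double-counted with the explicit $x_i$ contribution that the theorem has already absorbed into the ``$1+\mu_{0i}$'' factor. No fresh Bayesian inference, Laplace-convolution analysis, or use of the bound $|x_i-x_i'|\le M$ is required — all of the heavy lifting is inherited directly from Theorem~\ref{TH:privacy of Gaussian model}, and only a one-line block-matrix calculation and an appeal to symmetry of $\boldsymbol{\Sigma}$ are needed to close the corollary.
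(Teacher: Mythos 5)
Your proposal is correct and follows essentially the same route as the paper's own proof: specialize Theorem~\ref{TH:privacy of Gaussian model} with $\mathbf{x}_2=\{x_i\}$, $\mathbf{x}_1=\mathbf{x}_{-i}$, compute $\boldsymbol{\mu}_{1|2}$ from Lemma~\ref{Lem:property of Gaussian}, read off the coefficient of $x_i$ in $\mu_0=\mathbf{1}^\top\boldsymbol{\mu}_{1|2}$ as $\rho_{ii}^{-1}\sum_{j\neq i}\rho_{ij}$, and substitute into Eq.~(\ref{Eq:privacy of Gaussian model}). No gaps; your block-matrix bookkeeping is, if anything, slightly cleaner than the paper's displayed expansion of $\mu_0$.
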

 \begin{proof}
 In such case, the $\mathbf{x}_2=\{x_i\}$, and $\mathbf{x}_1=\mathbf{x}_{-i}$. According to Lemma \ref{Lem:property of Gaussian}, $\boldsymbol{\mu}_{1|2}=(\mu_1,\cdots,\mu_{i-1},\mu_{i+1},\cdots,\mu_n)^\top+(\rho_{i1}+\cdots+\rho_{i,j-1},
   \rho_{i,j+1},\cdots,\rho_{in})^\top\cdot\rho_{ii}^{-1}\cdot(x_i-\mu_i).$
 By the definition of $\mu_0$, we have
 \begin{align*}
     \mu_0=\mathbf{1}^\top\boldsymbol{\mu}_{1|2}=\sum_{j\neq i}\mu_j-\mu_i\sum_{j\neq i}\rho_{ij}+x_i\rho_{ii}^{-1}\sum_{j\neq i}\rho_{ij}.
 \end{align*}
 Here, $\mu_{0i}$, the coefficient of $x_i$ in $\mu_0$, is $\rho_{ii}^{-1}\sum_{j\neq i}\rho_{ij}$.
 Then, we complete the proof by applying Theorem \ref{TH:privacy of Gaussian model}.
 \end{proof}

 The proof of Corollary \ref{cor:special case of Gaussian} demonstrates a special case of how to compute the coefficient $\mu_{0i}$ for the weakest adversary in Eq.\;(\ref{Eq:privacy of Gaussian model}). As specific cases of Theorem \ref{TH:privacy of Gaussian model}, Corollary \ref{cor:special case of Gaussian} above demonstrates that privacy leakage of the weakest adversary has an explicit relationship to the data correlation. That is, privacy leakage of the weakest adversary depends on the summation of all covariances connecting to $x_i$, which represents the data correlations. Next, the example shows the impacts of the correlations and prior knowledge for tuples that follow a two-dimensional Gaussian distribution.

 Example 5 \textit{Consider a continuous-valued database $\mathbf{x}=\{x_1,x_2\}$; the expectation vector and variance matrix of $\mathbf{x}$ are $\boldsymbol{\mu}=[0,0]$, $\boldsymbol{\Sigma}=
 \left(
 \begin{array}{cc}
      1 & \rho_{12} \\
      \rho_{12} & 1 \\
    \end{array}
  \right)$, respectively.
 $\rho_{12}\in[-1,1]$ is the correlation coefficient of $x_1$ and $x_2$. From Corollary \ref{cor:special case of Gaussian}, $l_{\mathcal{A}_{1,\emptyset}}=|1+\rho_{12}|M/\lambda$. From the definition of PDP, we have $l_{\mathcal{A}_{1,\{2\}}}=M/\lambda$. If $\rho_{12}>0$, then $l_{\mathcal{A}_{1,\emptyset}}>l_{\mathcal{A}_{1,\{2\}}}$. This means that when the correlation is positive, a weak adversary has more privacy leakage gain than a strong adversary. If $\rho_{12}<0$, then $l_{\mathcal{A}_{1,\emptyset}}<l_{\mathcal{A}_{1,\{2\}}}$. This means that when the correlation is negative, the strong adversary has more privacy gain. These results are also consistent with Examples 2 and 3, which are discrete-valued data.}



\section{Numerical Simulations}
\label{Sec:Experiments}

 In this section, we conducted extensive experiments to demonstrate the impact of prior knowledge and data correlations on privacy leakage, and validate the effectiveness and efficiency of our proposed algorithms for computing privacy leakage.

 \subsection{Simulations Setting}

 We synthesized a database with 15 tuples\footnote{As described in Subsection \ref{Subsection:preliminary-notations}, a tuple refers to an attribute in the database.}, in which the average Pearson correlation coefficient changes from $-0.8$ to $0.8$. For the discrete-valued database, we generated a corresponding WHG by assigning beta-distributed edges' value. For the continuous-valued database, we generated the covariance matrix with covariance $Cov(x_i,x_j)=1, (i\neq j)$ for a positive correlation, and $Cov(x_i,x_j)=-1, (i\neq j)$ for a negative correlation. We adjusted the principal diagonal element $Cov(x_i,x_i)$ to control the correlation coefficient.

 In our experiments, we considered an adversary who can infer information from a Laplace-mechanism-based privacy-preserving sum query on the database. The noise scale of the Laplace mechanism was set as $\lambda=1$, and the domain size of all tuples was set as 1. In all simulations, the prior knowledge was measured by the number of tuples compromised by the adversary, ranging from $14$ to $0$. Then, the privacy leakage the adversary caused was calculated according to our analytical results (Theorems \ref{TH:privacy of two adjacent nodes}, \ref{TH:chain rule}, \ref{TH:privacy of Gaussian model}) in Sections \ref{Section:WHG for discrete data} and \ref{Section:continuous}.

 \subsection{Simulation Results}
 For simplicity, let averCorr denote the average value of the edges in the WHG, and averCoeff denote the average value of the correlation coefficient in the MGD. averCorr and averCoef represent the correlation degree for discrete-valued and continuous-valued data, respectively. According to the structure of the WHG, the layer number in a WHG represents the number of unknown tuples for an adversary. Therefore, a smaller layer number means a stronger adversary with more prior knowledge and vice versa. Fig.\;\ref{Fig:privacy discrite positive} and Fig.\;\ref{Fig:privacy discrite negative} shows the privacy leakage of discrete-valued data. Fig.\;\ref{Fig:privacy continuous positive} and Fig.\;\ref{Fig:privacy continuous negative} shows the privacy leakage of continuous-valued data.


 \subsubsection{Privacy Leakage vs Correlation}
 \label{Subsec:numerical simulation-privacy vs correlation}
 This subsection investigates the impacts of data correlations on privacy leakage when the prior knowledge is fixed.

 Figs.\;\ref{Fig:privacy discrite positive}-\ref{Fig:privacy continuous negative} show that the privacy leakage remains unchanged with averCorr and averCoeff when the adversary has the strongest prior knowledge (layer number=1, i.e., fewest unknown tuples). This is because the uncertainty only occurs from the attack object and no information gain can be obtained from the correlations, which corresponds to our analysis in Theorem \ref{TH:privacy of first level}.


 Fig.\;\ref{Fig:privacy discrite positive} shows that the privacy leakage generally increases with averCorr when averCorr is positive, for discrete-valued data. The main reason is that with the increase in positive correlations, tuples are more likely to show the similar trends and the difference of the sum aggregation becomes much larger, from which the adversary could obtain more information gain of unknown tuples based on his prior knowledge (Theorem \ref{TH:edge and correlation}). Fig.\;\ref{Fig:privacy discrite negative} shows the similar results when averCorr is negative. Fig.\;\ref{Fig:privacy continuous positive} and Fig.\;\ref{Fig:privacy continuous negative} show similar results in the continuous data.


\begin{figure*}[htbp]
\begin{flushleft}
\subfigure[Positive correlation]{
\noindent\begin{minipage}[t]{0.23\linewidth}
\centering
\includegraphics[width=0.225\paperwidth,height=1.62in]{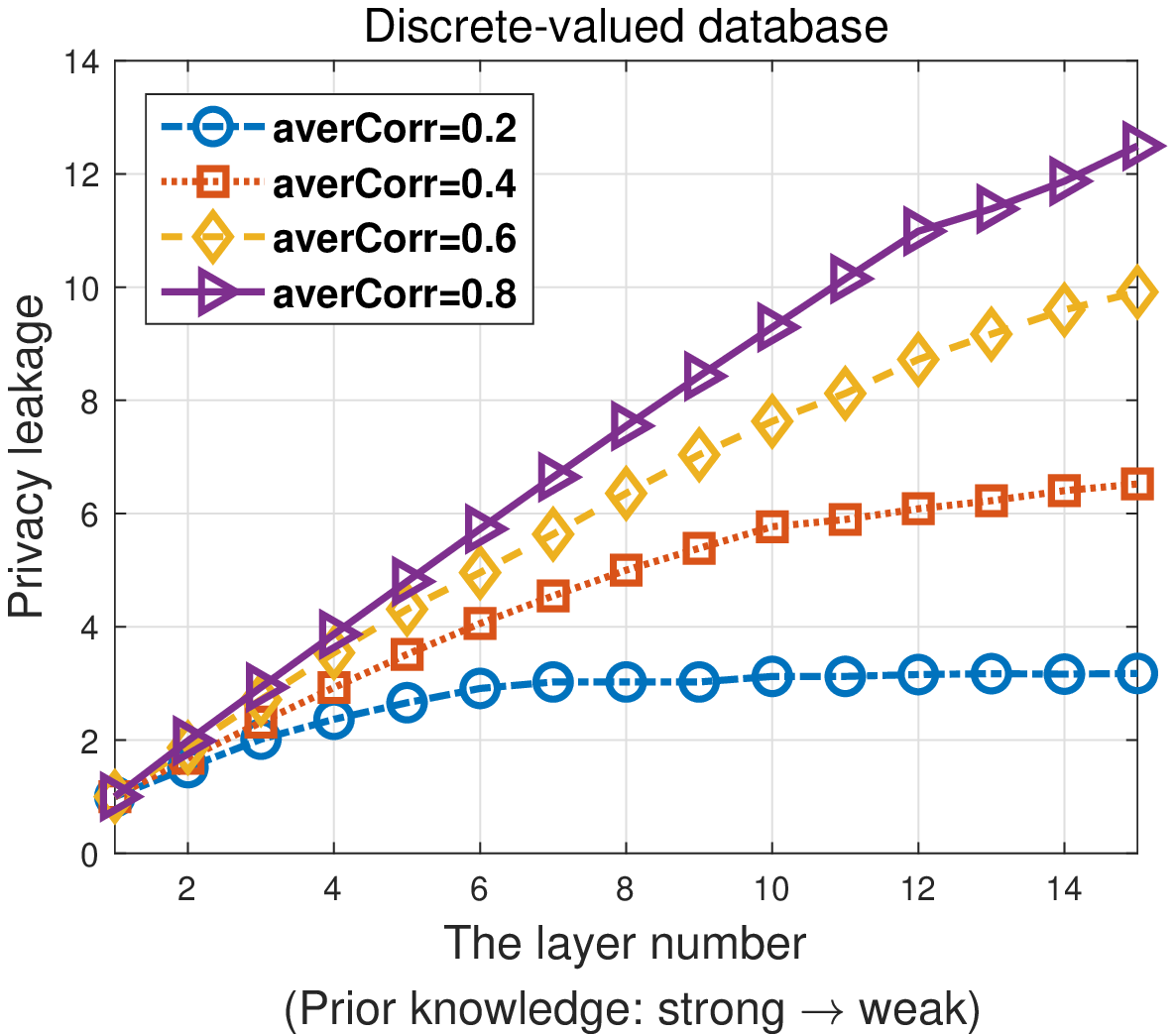}
\end{minipage}
\label{Fig:privacy discrite positive}
}
\subfigure[Negative correlation]{
\begin{minipage}[t]{0.23\linewidth}
\centering
\includegraphics[width=0.225\paperwidth,height=1.62in]{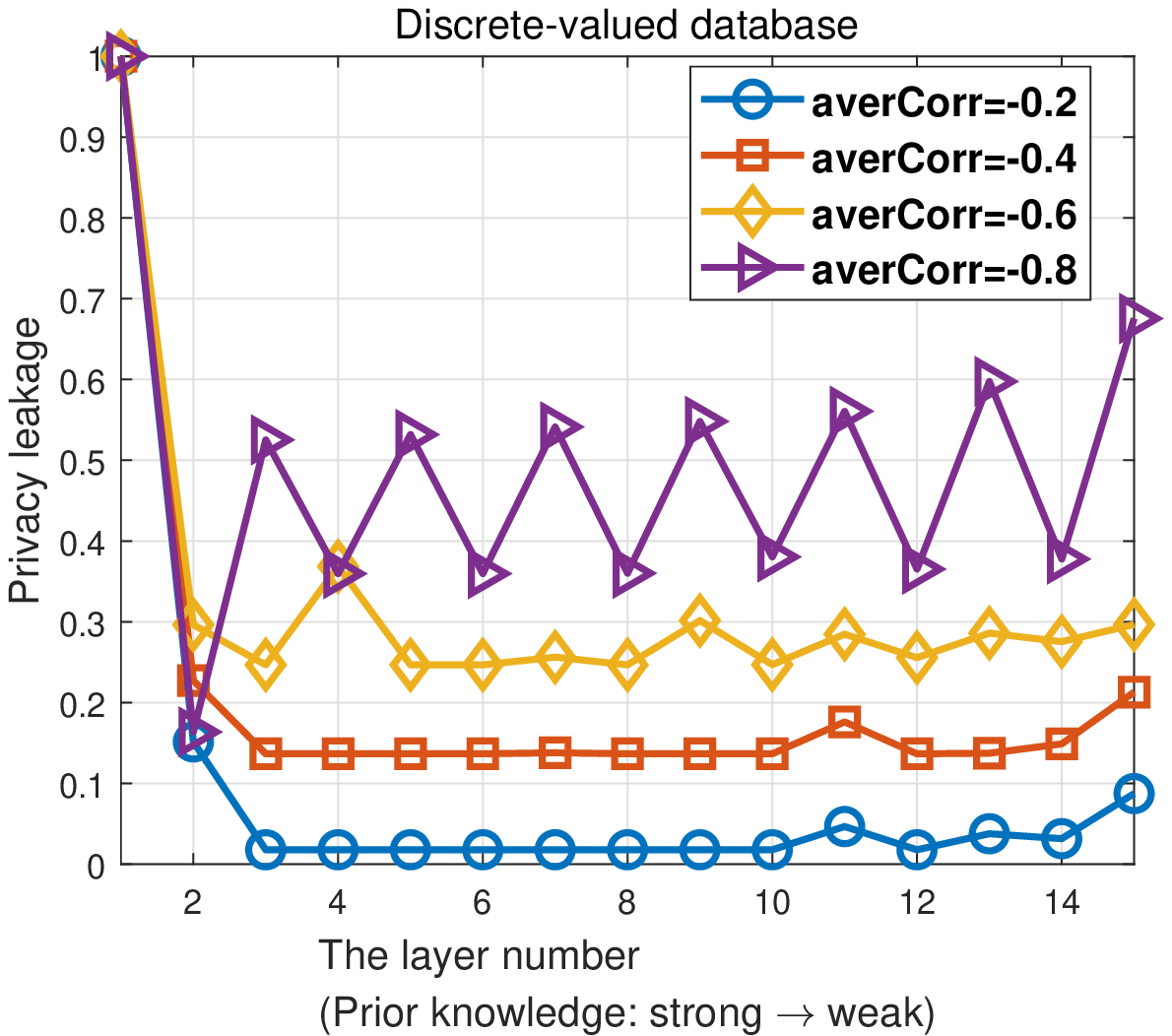}
\end{minipage}
\label{Fig:privacy discrite negative}
}
\subfigure[Positive correlation]{
\begin{minipage}[t]{0.23\linewidth}
\centering
\includegraphics[width=0.225\paperwidth,height=1.62in]{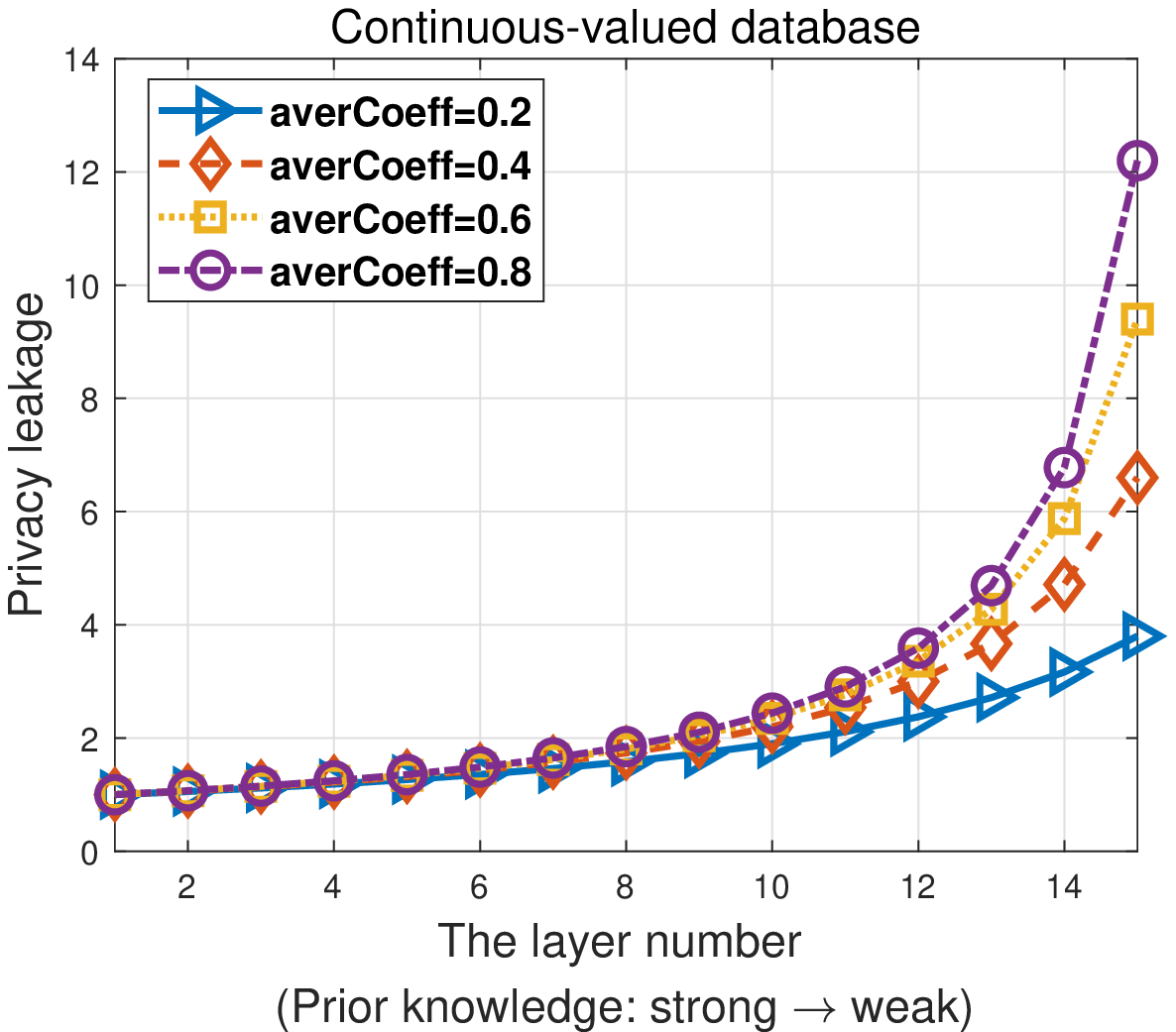}
\end{minipage}
\label{Fig:privacy continuous positive}
}
\subfigure[Negative correlation]{
\begin{minipage}[t]{0.23\linewidth}
\centering
\includegraphics[width=0.225\paperwidth,height=1.62in]{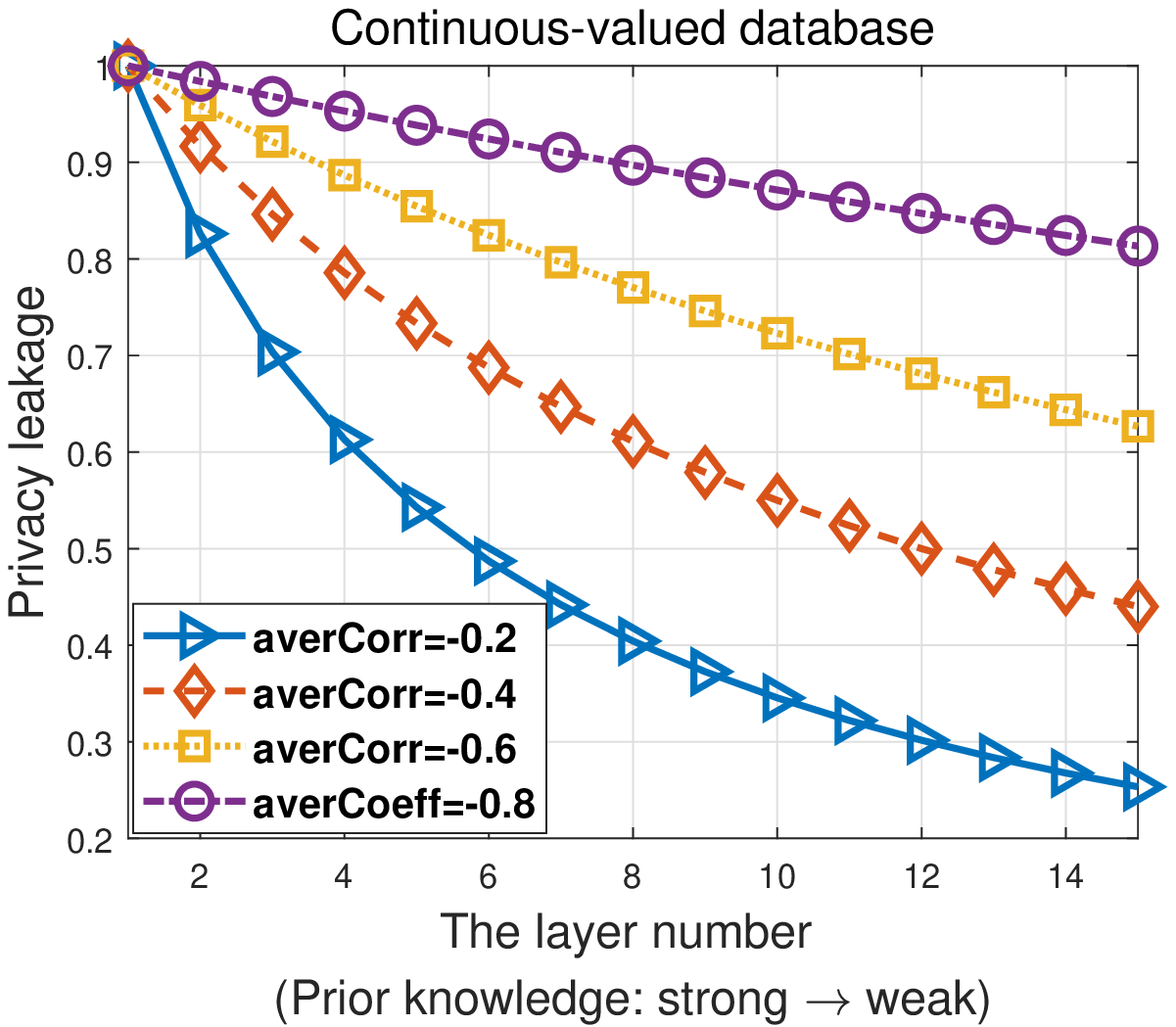}
\end{minipage}
\label{Fig:privacy continuous negative}
}
\caption{Privacy leakage vs. prior knowledge and data correlations. Figs.\;\ref{Fig:privacy discrite positive} and \ref{Fig:privacy discrite negative} show the results of the discrete-valued database. Figs.\;\ref{Fig:privacy continuous positive} and \ref{Fig:privacy continuous negative} show the results of the continuous-valued database. }
\end{flushleft}
\label{fig:privacy leakage of different database}
\end{figure*}

 \begin{figure*}[htbp]
\begin{flushleft}
\subfigure[averCorr=0.2]{
\begin{minipage}[t]{0.23\linewidth}
\centering
\includegraphics[width=0.225\paperwidth]{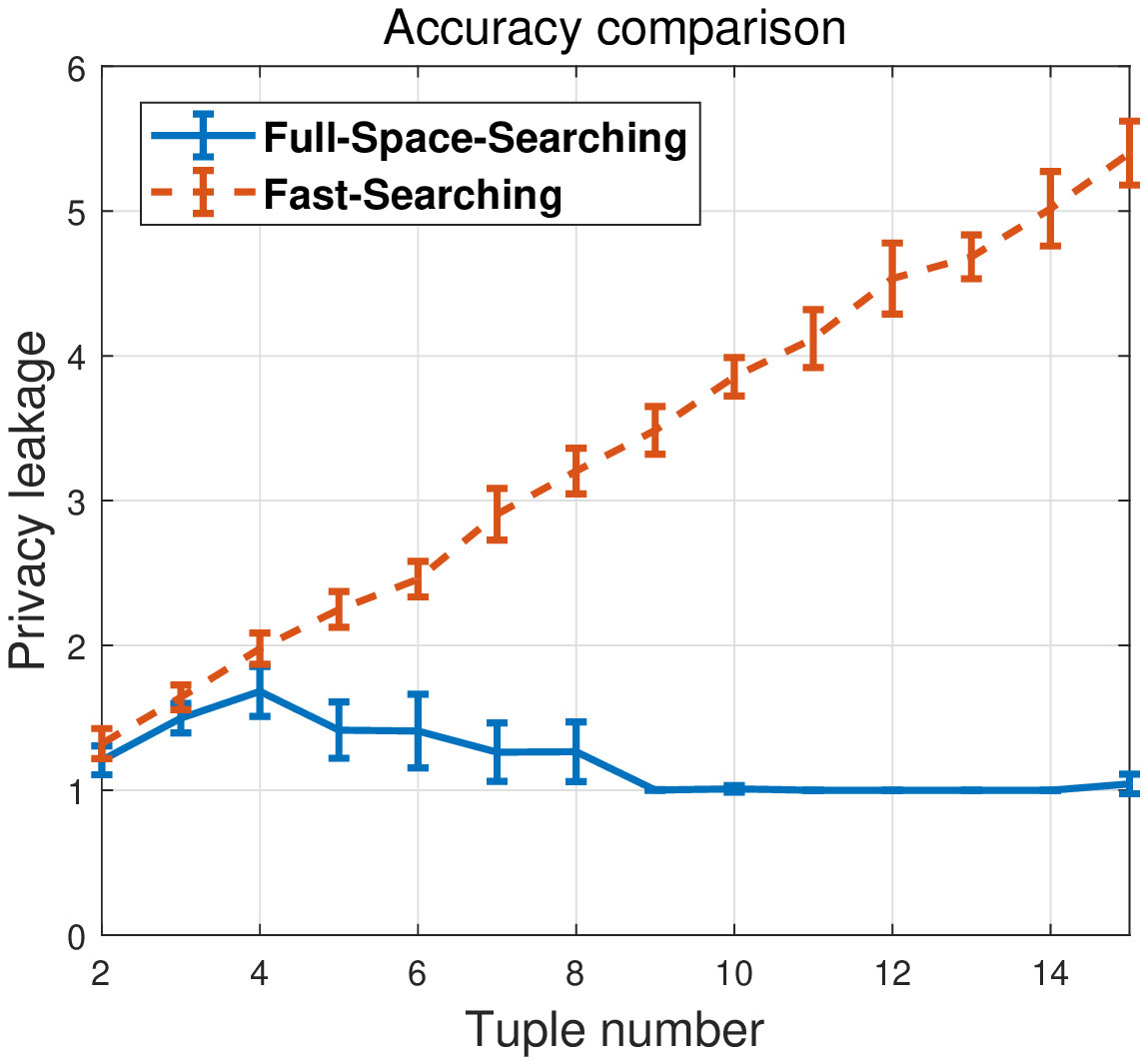}
\end{minipage}
\label{Fig:leakagecomparisonaverCorr0.2}
}
\subfigure[averCorr=0.5]{
\begin{minipage}[t]{0.23\linewidth}
\centering
\includegraphics[width=0.225\paperwidth]{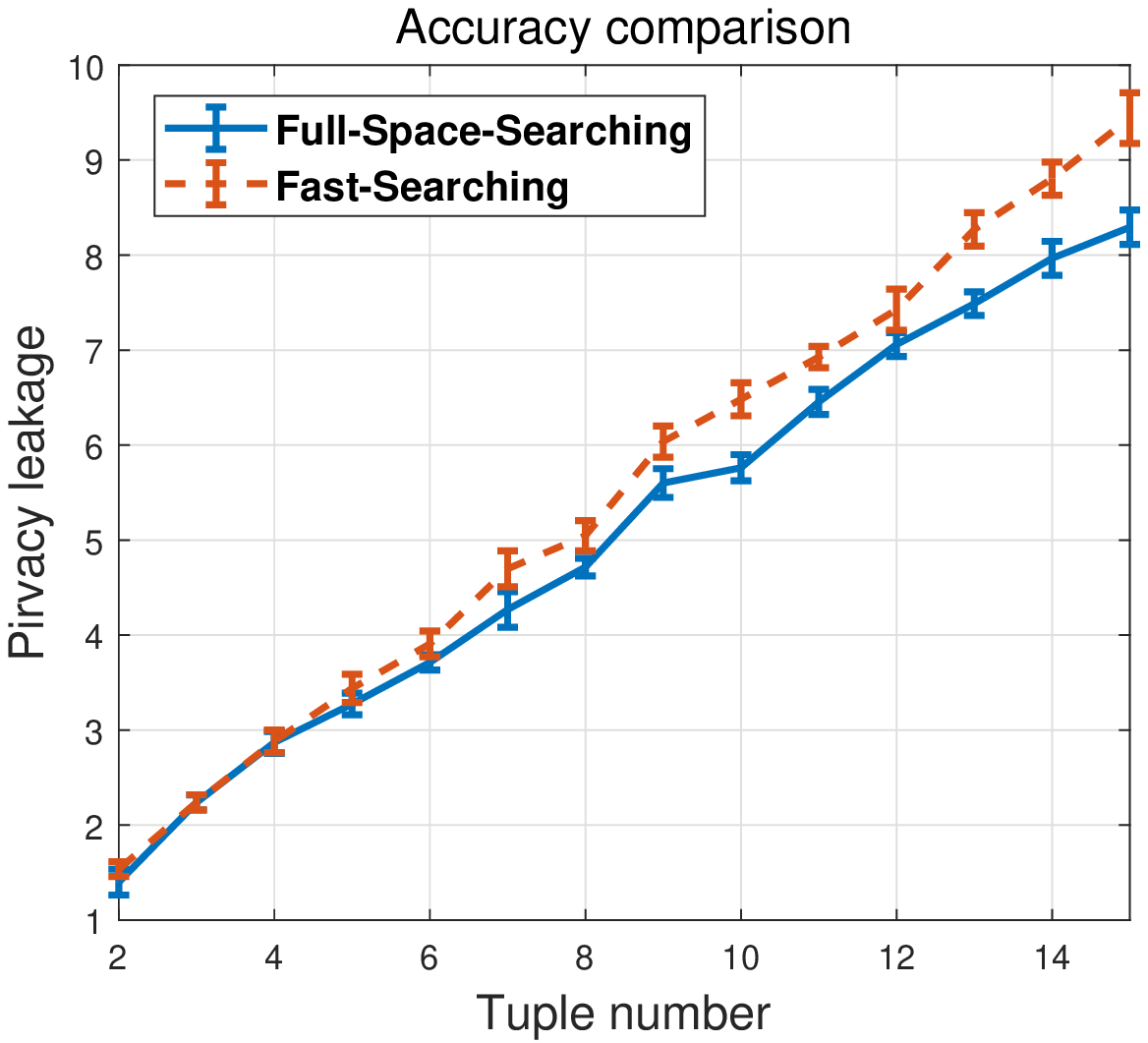}
\end{minipage}
\label{Fig:leakagecomparisonaverCorr0.5}
}
\subfigure[averCorr=0.8]{
\begin{minipage}[t]{0.23\linewidth}
\centering
\includegraphics[width=0.225\paperwidth]{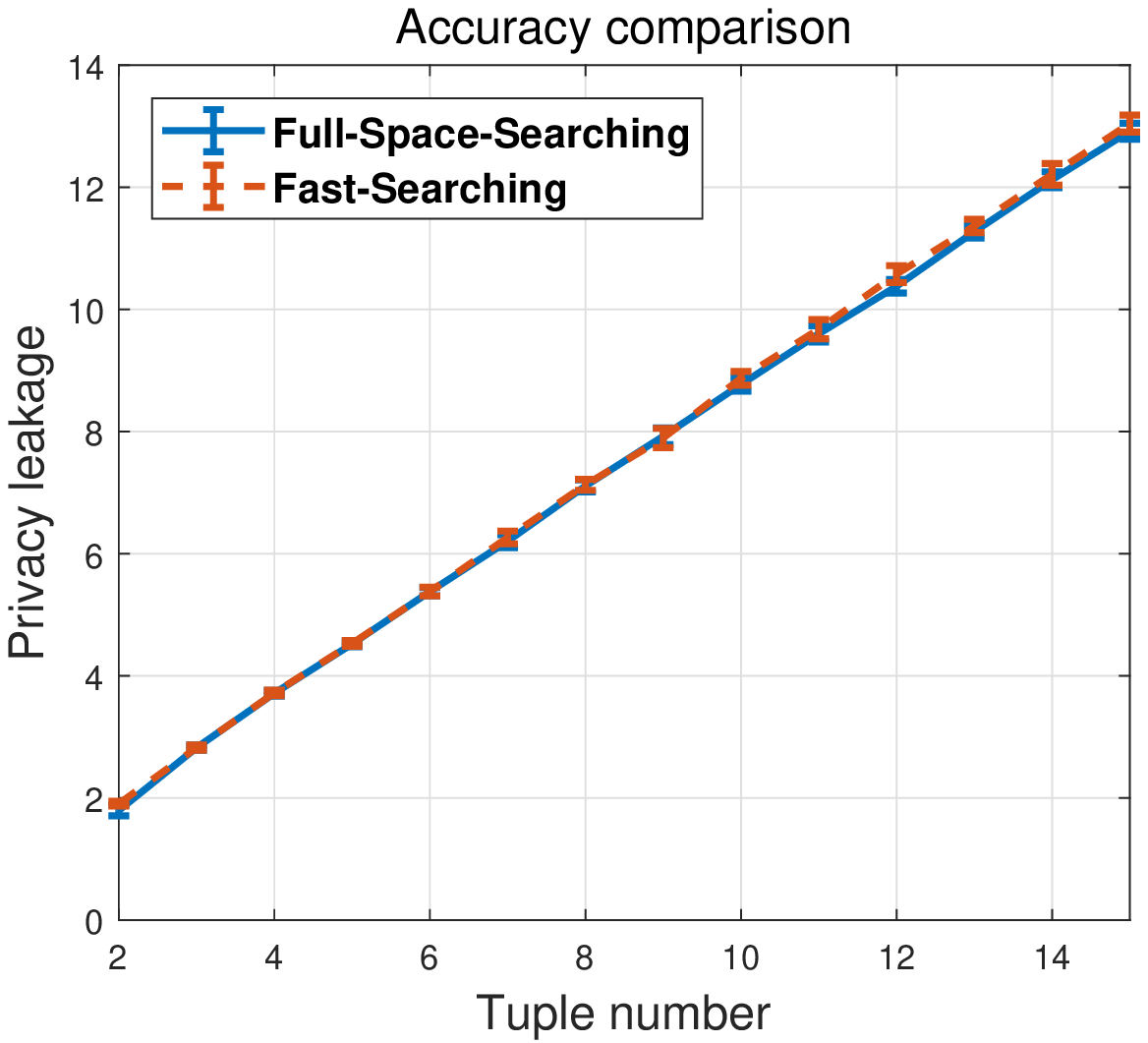}
\end{minipage}
\label{Fig:leakagecomparisonaverCorr0.8}
}
\subfigure[Average computation time]{
\begin{minipage}[t]{0.23\linewidth}
\centering
\includegraphics[width=0.225\paperwidth]{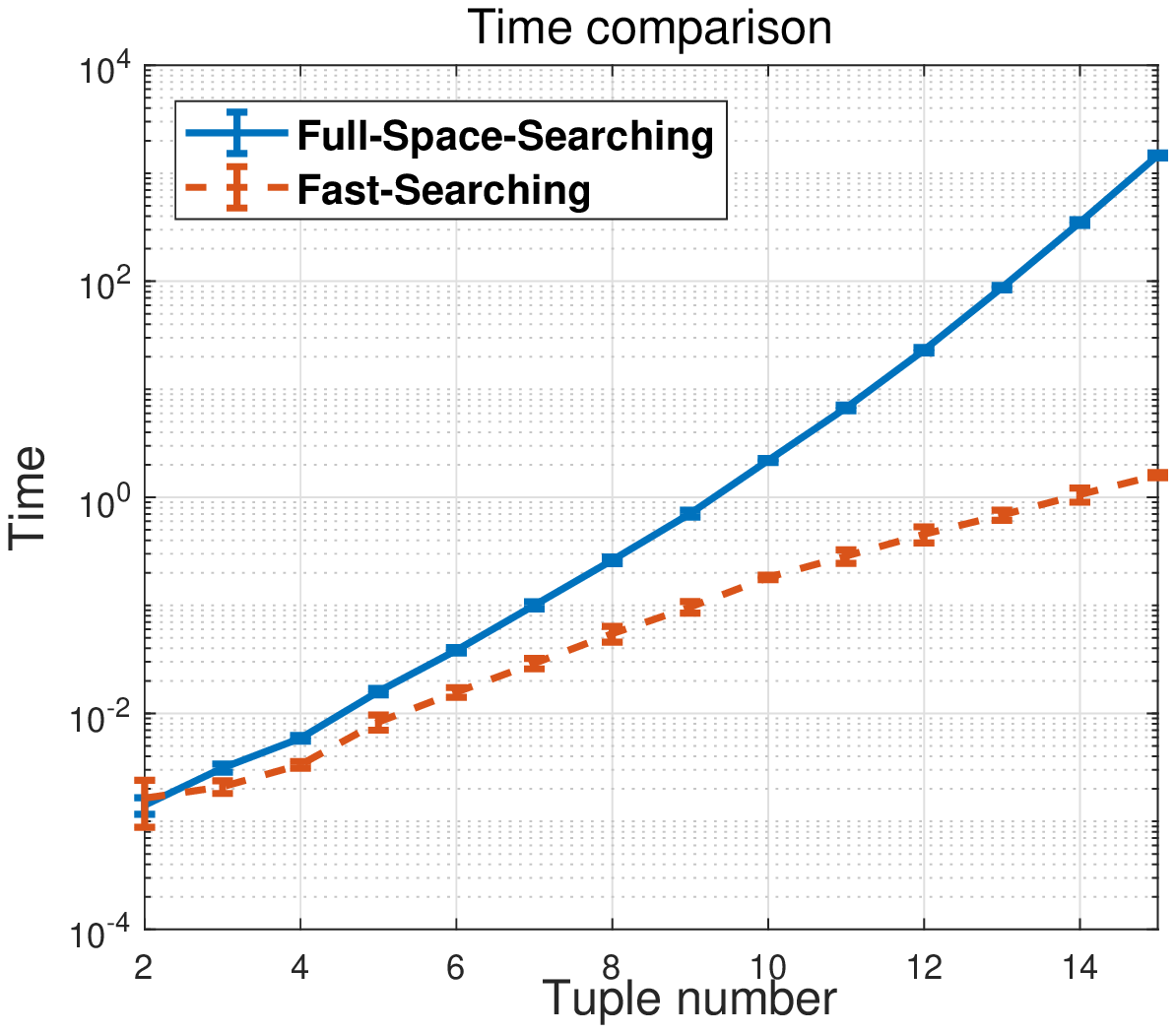}
\end{minipage}
\label{Fig:timecomparison}
}
\caption{The comparison of the full-space-searching algorithm and the fast-searching algorithm in terms of privacy leakage and computation time. Figs.\;\ref{Fig:leakagecomparisonaverCorr0.2}, \ref{Fig:leakagecomparisonaverCorr0.5} and \ref{Fig:leakagecomparisonaverCorr0.8} show the privacy leakage of both algorithms when the averCorr is 0.2, 0.5, and 0.8. Figs.\;\ref{Fig:timecomparison} shows the average computation time of both algorithms.}
\end{flushleft}
\label{fig:accuracyandtimecomparison}
\end{figure*}

 \subsubsection{Privacy Leakage vs Prior knowledge}

 This subsection investigates the impacts of prior knowledge on privacy leakage when the correlation is fixed.

 Figs.\;\ref{Fig:privacy discrite positive} and \ref{Fig:privacy continuous positive} show that the privacy leakage increases with the layer number for discrete data. That is, the privacy leakage decreases with the prior knowledge. This is because, given the positive correlation, more unknown tuples can cause a larger aggregation difference (Theorem \ref{TH:chain rule}) and less uncertainty for adversaries from the privacy-preserving results. In particular, the weakest adversary with the least prior knowledge will obtain the largest privacy gain, and thus leading to the highest privacy leakage (Corollary \ref{Cor:privacy leakage in sepcial cases}).

 Fig.\;\ref{Fig:privacy discrite negative} shows that there are no monotone trends between the privacy leakage and layer number when the data correlation is negative for discrete-valued data because tuples with mutually negative correlations will cancel each other out and show no general trend in the aggregation result, which makes it difficult for any adversary to achieve privacy gain. This corresponds to our analysis that the privacy leakage computed by the chain rule (Theorem \ref{TH:chain rule}) does not decrease with $IC_{ij,\mathcal{K}'}$ when $IC_{ij,\mathcal{K}'}<0$.
 Additionally, as we can see, the highest privacy leakage is achieved when the layer number is $1$ (the strongest prior knowledge), which is consistent with Corollary \ref{Cor:privacy leakage in sepcial cases}.

 Fig.\;\ref{Fig:privacy continuous negative} shows that privacy leakage decreases with the amount of prior knowledge for continuous data because the tuples with a mutually negative correlation will cancel each other out in the aggregation, which reduces the uncertainty of aggregation and makes it difficult for the adversary to infer an individual tuple. Specifically, based on the multivariate Gaussian model, more unknown tuples will lead to a stronger ``canceling'' effect and less privacy leakage for a weaker adversary.



 \subsubsection{Accuracy and Time Complexity}
 This subsection investigates the accuracy and efficiency of the fast-searching algorithm compared to the full-space-searching algorithm. In the simulation, we set the tuple number ranging from $1$ to $15$, and the average correlation from $0.2$ to $0.8$. Then, we computed the corresponding maximal privacy leakage in each case. 
 Each simulation was run 30 times; both the average privacy leakage and its variance were reported.

 Figs.\;\ref{Fig:leakagecomparisonaverCorr0.2}, \ref{Fig:leakagecomparisonaverCorr0.5}, and \ref{Fig:leakagecomparisonaverCorr0.8} compare the privacy leakage when averCorr equals 0.2, 0.5, and 0.8 respectively. The privacy leakage computed with the fast-searching algorithm was generally larger than that of the full-space-searching algorithm and led to overestimating the privacy since the search space of the fast-searching algorithm is a subset of that of the full-space-searching algorithm. However, when the average correlation was stronger (e.g., averCorr=0.5, 0.8), the privacy leakage computed with the fast-searching algorithm was very close to the accurate privacy leakage computed with the full-space-searching algorithm. However, the fast-searching algorithm was far more efficient than the full-space-searching algorithm. In particular, Fig.\;\ref{Fig:timecomparison} shows the comparison result of the average computational time for both algorithms. As we can see, the fast-searching algorithm with the time complexity of $O(n^4)$, required much less computational time than the full-space-searching algorithm with the time complexity of $O(n^42^{n-1})$.

 \section{Conclusion}
 \label{Sec:Conclusion}

In this paper, we present a unified analysis to investigate the impacts of general (positive, negative, and hybrid) data correlations and arbitrary prior knowledge possessed by adversaries on privacy leakage.
For continuous data, we obtain a closed-form expression of privacy leakage as a function of general data correlation and prior knowledge by using multivariate Gaussian distributions. For discrete data, a chain rule is derived to represent the privacy leakage, by using a WHG that can model the adversaries with arbitrary prior knowledge. All our analytical results are obtained by strictly mathematical proofs and hold for general linear queries. Numerical simulations validate our theoretical analysis. Future work will extend our analysis to nonlinear quires.


\bibliographystyle{IEEEtran}
\bibliography{conf}

\appendices

\section{Proof of Theorem \ref{TH:privacy of two adjacent nodes}}
\label{Proof:privacy of two nodes}

 \begin{proof}
   For two adversaries $\mathcal{A}_{i,\mathcal{K}}$ and its ancestor $\mathcal{A}_{i,\mathcal{K}'}$, $\mathcal{K}'=\mathcal{K}\backslash\{j\}$, by the law of total probability, we have
   \begin{align*}
     \mathrm{Pr}(r|x_i,\mathbf{x}_{\mathcal{K}'})=\sum_{x_j}\mathrm{Pr}(x_j|x_i,\mathbf{x}_{\mathcal{K}'})\mathrm{Pr}(r|x_i,\mathbf{x}_{\mathcal{K}}).
   \end{align*}
    Let $l_{\mathcal{A}_{i,\mathcal{K}}}$ denote the value of the node $(i,\mathcal{K})$. By the definition of PDP,
   $\sup_{x_i,x_i'}\log\frac{\mathrm{Pr}(r|x_i,\mathbf{x}_{\mathcal{K}})}{\mathrm{Pr}(r|x_i',\mathbf{x}_{\mathcal{K}})}\in [-l_{\mathcal{A}_{i,\mathcal{K}}},l_{\mathcal{A}_{i,\mathcal{K}}}]$. Therefore
   \begin{align}
     &l_{\mathcal{A}_{i,\mathcal{K}'}}=\sup_{x_i,x_i',r}\log\frac{\mathrm{Pr}(r|x_i,\mathbf{x}_{\mathcal{K}'})}{\mathrm{Pr}(r|x_i',\mathbf{x}_{\mathcal{K}'})}\notag\\
     &=\sup_{x_i,x_i',r}\log\frac{\sum_{x_j}\mathrm{Pr}(x_j|x_i,\mathbf{x}_{\mathcal{K}'})\mathrm{Pr}(r|x_i,\mathbf{x}_{\mathcal{K}})}
     {\sum_{x_j}\mathrm{Pr}(x_j|x_i',\mathbf{x}_{\mathcal{K}'})\mathrm{Pr}(r|x_i',\mathbf{x}_{\mathcal{K}})}\notag\\
     &\leq \sup_{x_i<x_i',r}\left|l_{\mathcal{A}_{i,\mathcal{K}}}+\log\frac{\sum_{x_j}\mathrm{Pr}(x_j|x_i,\mathbf{x}_{\mathcal{K}'})\mathrm{Pr}(r|x_i,\mathbf{x}_{\mathcal{K}})}
     {\sum_{x_j}\mathrm{Pr}(x_j|x_i',\mathbf{x}_{\mathcal{K}'})\mathrm{Pr}(r|x_i,\mathbf{x}_{\mathcal{K}})}\right|\notag\\
     & =\sup_{x_i<x_i'}\left|l_{\mathcal{A}_{i,\mathcal{K}}}+\log\frac{\sum_{x_j}\mathrm{Pr}(x_j|x_i,\mathbf{x}_{\mathcal{K}'})e^{{-x_j}/{\lambda}}}
     {\sum_{x_j}\mathrm{Pr}(x_j|x_i',\mathbf{x}_{\mathcal{K}'})e^{{-x_j}/{\lambda}}}\right|\label{Eq:laplace in twonodes proof}\\
     & =\left|l_{\mathcal{A}_{i,\mathcal{K}}}+IC_{ij,\mathcal{K}'}\right|.\label{Eq:LGR in twonodes proof}
   \end{align}
   Eq.\;(\ref{Eq:laplace in twonodes proof}) uses the Laplace mechanism and Eq.\;(\ref{Eq:LGR in twonodes proof}) is the definition of $IC_{ij,\mathcal{K}'}$.
 \end{proof}

\section{Proof of Theorem \ref{TH:edge and correlation}}
\label{Proof:LGR and correlation}

 To prove Theorem \ref{TH:edge and correlation}, we propose the next lemma, which is used to express the correlation by its conditional distribution.

 \begin{lem}\label{Lem:transformation of correlation}
  For a database $\mathbf{x}$ with two tuples $x_1=\{x_{1,1},x_{1,2}\}$, and $x_2=\{x_{2,1},x_{2,2}\}$. Let $y_i=\mathbb{E}(x_2|x_1=x_{1,i}),i=1,2,$ is the conditional expectation of $x_1$. The joint distribution is $p_{ij}=\Pr(x_1=x_{1,i},x_2=x_{2,j}), p_{i\cdot}=p_{i1}+p_{i2},i,j\in\{1,2\}$. Then, we have the next equivalent conditions of Pearson correlation coefficient of $x_1$ and $x_2$, denoted as $\rho_{12}$.
   \begin{equation}\label{Eq:relationship of rho and p}
     \begin{split}
       \rho_{12}>0\Leftrightarrow y_1< y_2\Leftrightarrow\frac{p_{11}}{p_{1\cdot}}>\frac{p_{21}}{p_{2\cdot}},\\
       \rho_{12}<0\Leftrightarrow y_1> y_2\Leftrightarrow\frac{p_{11}}{p_{1\cdot}}<\frac{p_{21}}{p_{2\cdot}},\\
       \rho_{12}=0\Leftrightarrow y_1= y_2\Leftrightarrow\frac{p_{11}}{p_{1\cdot}}=\frac{p_{21}}{p_{2\cdot}}.
     \end{split}
   \end{equation}
 \end{lem}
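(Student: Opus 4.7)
The plan is to reduce each of the three trichotomies to the sign of a single algebraic quantity. The first step is to use iterated expectation to rewrite
$$\mathrm{Cov}(x_1, x_2) = \mathbb{E}\!\left[x_1\,\mathbb{E}(x_2\mid x_1)\right] - \mu_1\mu_2 = \mathrm{Cov}\!\left(x_1,\, g(x_1)\right),$$
where $g(x_1) := \mathbb{E}(x_2 \mid x_1)$ takes only the two values $y_1$ and $y_2$. Because $x_1$ is binary with marginals $p_{1\cdot}, p_{2\cdot}$, a short two-point covariance computation yields
$$\mathrm{Cov}(x_1, g(x_1)) = p_{1\cdot}\,p_{2\cdot}\,(x_{1,2} - x_{1,1})(y_2 - y_1),$$
which, under the natural ordering convention $x_{1,1} < x_{1,2}$ and nonzero marginals, identifies $\mathrm{sgn}(\rho_{12})$ with $\mathrm{sgn}(y_2 - y_1)$. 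That establishes the first equivalence in each of the three lines.

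For the second equivalence I would expand $y_i = (p_{i1} x_{2,1} + p_{i2} x_{2,2})/p_{i\cdot}$, apply $p_{i2}/p_{i\cdot} = 1 - p_{i1}/p_{i\cdot}$, and collect terms to obtain
$$y_2 - y_1 = (x_{2,1} - x_{2,2})\left(\frac{p_{21}}{p_{2\cdot}} - \frac{p_{11}}{p_{1\cdot}}\right).$$
Under the ordering $x_{2,1} < x_{2,2}$ the leading factor is negative, so $y_2 > y_1$ is equivalent to $p_{11}/p_{1\cdot} > p_{21}/p_{2\cdot}$, and analogously for equality and the reverse strict inequality. Chaining this identification with the previous step closes all three lines of the lemma by trichotomy.

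The only real subtlety is book-keeping of sign conventions: both of the sign identifications above flip if one relabels $x_{1,1} \leftrightarrow x_{1,2}$ or $x_{2,1} \leftrightarrow x_{2,2}$, so the lemma as stated implicitly fixes an ordering on the two-element supports. I would make this convention explicit at the beginning of the proof and also note that the marginals $p_{1\cdot}, p_{2\cdot}$ must be strictly positive so that the conditional expectations $y_i$ and the ratios $p_{i1}/p_{i\cdot}$ are well defined. Aside from these bookkeeping remarks, the argument is a direct chain of elementary identities and presents no substantive obstacle; the only piece worth handling carefully is confirming that the equality case $\rho_{12} = 0$ really comes out of $y_2 - y_1 = 0$ together with $p_{1\cdot} p_{2\cdot}(x_{1,2} - x_{1,1}) \ne 0$, which the same identity delivers at once.
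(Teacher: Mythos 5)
Your proposal is correct and follows essentially the same route as the paper: compute $\mathrm{Cov}(x_1,x_2)$ via the tower property to get the factorization $p_{1\cdot}p_{2\cdot}(x_{1,2}-x_{1,1})(y_2-y_1)$, then expand the conditional expectations to link $y_2-y_1$ with $p_{11}/p_{1\cdot}-p_{21}/p_{2\cdot}$. Your version is in fact slightly more careful than the paper's, since you retain the (sign-irrelevant) factor $p_{1\cdot}p_{2\cdot}$ that the paper's displayed covariance omits and you state the ordering and nondegeneracy conventions explicitly.
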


 \begin{proof}
 Based on the definition of the Pearson correlation coefficient, the plus-minus sign of $\rho_{12}$ is determined by its covariance $Cov(x_1,x_2)$. Using the properties of conditional expectation, we have
  \begin{align*}
 \mathbb{E}(x_1x_2)&=\mathbb{E}\{\mathbb{E}(x_1x_2|x_1)\}=x_{1,1} y_1p_{1\cdot}+x_{1,2} y_2p_{2\cdot},\\
 \mathbb{E}(x_2)&=\mathbb{E}\{\mathbb{E}(x_2|x_1)\}=y_1p_{1\cdot}+y_2p_{2\cdot}.
 \end{align*}
 Therefore, the covariance $Cov(x_1,x_2)$ can be written as
 \begin{align*}
   Cov(x_1,x_2)&=\mathbb{E}(x_1x_2)-\mathbb{E}(x_1)\mathbb{E}(x_2)\\
   &=x_{1,1} y_1p_{1\cdot}+x_{1,2} y_2p_{2\cdot}\\
   &\quad-(x_{1,1}p_{1\cdot}+x_{1,2}p_{2\cdot})\cdot(y_1p_{1\cdot}+y_2p_{2\cdot})\\
   &=(x_{1,2}-x_{1,1})(y_2-y_1).
 \end{align*}
 The last equation uses the fact that $p_{1\cdot}+p_{2\cdot}=1$. Note that the plus-minus sign of $\rho_{12}$ is equivalent to the sign of $Cov(x_1,x_2)$, then we prove the left half of Eq.\;(\ref{Eq:relationship of rho and p}) by setting $x_{1,2}>x_{1,1}$ as usual.

 Next, we prove the right half.
 Based on the definition of the conditional expectation of $y_i$, we have
 \begin{align}\label{Eq:left half relationship of rho and p}
  y_2-y_1= & x_{2,1}\frac{p_{21}}{p_{2\cdot}}+x_{2,2}\frac{p_{22}}{p_{2\cdot}}-x_{2,1}\frac{p_{11}}{p_{1\cdot}}-x_{2,2}\frac{p_{12}}{p_{1\cdot}}\notag\\
  =&(x_{2,2}-x_{2,1})\left(\frac{p_{11}}{p_{1\cdot}}-\frac{p_{21}}{p_{2\cdot}}\right).
 \end{align}
 Eq.\;(\ref{Eq:left half relationship of rho and p}) uses the facts that $\frac{p_{11}}{p_{1\cdot}}+\frac{p_{12}}{p_{1\cdot}}=1$, and $\frac{p_{21}}{p_{2\cdot}}+\frac{p_{22}}{p_{2\cdot}}=1$. Set $x_{2,2}>x_{2,1}$, then we complete the proof of the right half.
\end{proof}

 Proof of Theorem \ref{TH:edge and correlation}
 \begin{proof}
 (1) We prove that for any database $\mathbf{x}$, the value of $IR_{j,\mathcal{K}'}(x_{i,m},x_{i,n})$ is bounded in $[-1,1]$.

 Based on $\sum_{x_j}\mathrm{Pr}(x_j|x_i,\mathbf{x}_{\mathcal{K}'})=1$, we have
   \begin{align}\label{Eq:boundness of simplix of exp(-xj)}
    \min_{k} e^{\frac{-x_{j,k}}{\lambda}} \leq \sum_{x_j}\mathrm{Pr}(x_j|x_i,\mathbf{x}_{\mathcal{K}'})e^{\frac{-x_j}{\lambda}}\leq \max_{k} e^{\frac{-x_{j,k}}{\lambda}}.
   \end{align}
 Eq.\;(\ref{Eq:boundness of simplix of exp(-xj)}) holds for all $x_i$. We replace $x_i$ with two different values, $x_{i,m}$ and $x_{i,n}$ and have the following inequalities.
   \begin{align*}
    -\frac{LS_j(f)}{\lambda}\leq \log\frac{\sum_{x_j}\mathrm{Pr}(x_j|x_{i,m},\mathbf{x}_{\mathcal{K}'})e^{-{x_j}/{\lambda}}}{\sum_{x_j}\mathrm{Pr}(x_j|x_{i,n},\mathbf{x}_{\mathcal{K}'})e^{-{x_j}/{\lambda}}}
    \leq\frac{LS_j(f)}{\lambda}.
   \end{align*}
 Therefore, according to Eq.\;(\ref{Eq:definition of IR}), the definition of $IR_{j,\mathcal{K}'}(x_{i,m},x_{i,n})$, we have $IR_{j,\mathcal{K}'}(x_{i,m},x_{i,n})\in[-1,1]$.

 \vspace{1 ex}
 \noindent(2) For a database $\mathbf{x}$, two tuples among which are $x_i=\{x_{i,1},x_{i,2}\}$, and $x_j=\{x_{j,1},x_{j,2}\}$. The conditional joint distribution of $x_i$ and $x_j$ under $\mathbf{x}_{\mathcal{K}'}$ is $\Pr(x_i,x_j|\mathbf{x}_{\mathcal{K}'})$. We will prove that the correlations have a direct relation to $IR_{j,\mathcal{K}'}(x_{1,1},x_{1,2})$.
 According to Eq.\;(\ref{Eq:definition of IR}),
  \begin{align*}
    &IR_{j,\mathcal{K}'}(x_{i,1},x_{i,2})\\
    &=\log\frac{\sum_{k=1,2}\mathrm{Pr}(x_{j,k}|x_{i,1},\mathbf{x}_{\mathcal{K}'})e^{{-x_{j,k}}/{\lambda}}}
    {\sum_{k=1,2}\mathrm{Pr}(x_{j,k}|x_{i,2},\mathbf{x}_{\mathcal{K}'})e^{{-x_{j,k}}/{\lambda}}}\left/\frac{LS_j(f)}{\lambda}\right.
  \end{align*}
 Let
 \begin{align*}
   \mu_1=\frac{\Pr(x_{j,1}|x_{i,1},\mathbf{x}_{\mathcal{K}'})}{\sum_{k}\Pr(x_{j,k}|x_{i,1},\mathbf{x}_{\mathcal{K}'})},
   \nu_1=\frac{\Pr(x_{j,1}|x_{i,2},\mathbf{x}_{\mathcal{K}'})}{\sum_{k}\Pr(x_{j,k}|x_{i,2},\mathbf{x}_{\mathcal{K}'})}.
 \end{align*}
 Obviously, $\mu_1,\mu_2\in[0,1]$, and define the next function.
 \begin{align}
  f(\mu_1,\nu_1)=\log\frac{\mu_1e^{-{x_{j,1}}/{\lambda}}+(1-\mu_1)e^{-{x_{j,2}}/{\lambda}}}{\nu_1e^{-{x_{j,1}}/{\lambda}}
  +(1-\nu_1)e^{-{x_{j,2}}/{\lambda}}},
 \end{align}
 where the numerator and denominator are monotonically increasing with respect to $\mu_1$, and $\mu_2$, respectively.
 Based on these, we prove the three cases in Theorem \ref{TH:edge and correlation} by using Lemma \ref{Lem:transformation of correlation}.

 \noindent 1) If $\rho_{ij,\mathcal{K}'}>0$, by Lemma \ref{Lem:transformation of correlation}, $1\geq\mu_1>\nu_1\geq 0$. Therefore
 \begin{align*}
   \max{f(\mu_1,\nu_1)}&=f(1,0)=LS_j(f)/\lambda,\\
   \min{f(\mu_1,\nu_1)}&> f(a,a)=0, \forall a\in(0,1).
 \end{align*}
   So, $f(\mu_1,\nu_1)\in\left(0,LS_j(f)/{\lambda}\right]$, and $IR_{j,\mathcal{K}'}(x_{i,1},x_{i,2})\in(0,1]$.

 \noindent 2) If $\rho_{ij,\mathcal{K}'}<0$, by Lemma \ref{Lem:transformation of correlation}, $0\leq\mu_1<\nu_1\leq 1$. Therefore
  \begin{align*}
   \max{f(\mu_1,\nu_1)}&<f(a,a)=0,\forall a\in(0,1),\\
   \min{f(\mu_1,\nu_1)}&= f(0,1)=-LS_j(f)/\lambda.
 \end{align*}
  So, $f(\mu_1,\nu_1)\in\left[-LS_j(f)/{\lambda},0\right)$, and $IR_{j,\mathcal{K}'}(x_{i,1},x_{i,2})\in [-1,0).$

 \noindent 3) If $\rho_{ij,\mathcal{K}'}=0$, by Lemma \ref{Lem:transformation of correlation}, $\mu_1=\nu_1\in [0,1]$. Therefore, $f(\mu_1,\nu_1)\equiv 0$, and $IR_{j,\mathcal{K}'}(x_{i,1},x_{i,2})=0$.

\vspace{1 ex}
 \noindent (3) The conditions are the same as Case (2) except that $x_j=\{x_{j,1},\cdots,x_{j,s}\},s\geq3.$ Let $y_m=\mathbb{E}(x_j|x_{i,m},\mathbf{x}_\mathcal{K}'),m=1,2.$ We claim that the left half of Lemma \ref{Lem:transformation of correlation} holds without presenting the similar proof.

 Next, we prove Case (3). According to Eq.\;(\ref{Eq:definition of IR}),
  \begin{align*}
    &IR_{j,\mathcal{K}'}(x_{i,1},x_{i,2})\\
    &=\log\frac{\sum_{k=1}^s\mathrm{Pr}(x_{j,k}|x_{i,1},\mathbf{x}_{\mathcal{K}'})e^{{-x_{j,k}}/{\lambda}}}
    {\sum_{k=1}^s\mathrm{Pr}(x_{j,k}|x_{i,2},\mathbf{x}_{\mathcal{K}'})e^{{-x_{j,k}}/{\lambda}}}\left/\frac{LS_j(f)}{\lambda}\right..
  \end{align*}
 For $k=1,2,\cdots,s$, let
 \begin{align*}
   \mu_k=\frac{\Pr(x_{j,k}|x_{i,1},\mathbf{x}_{\mathcal{K}'})}{\sum_{k}\Pr(x_{j,k}|x_{i,1},\mathbf{x}_{\mathcal{K}'})},
   \nu_k=\frac{\Pr(x_{j,k}|x_{i,2},\mathbf{x}_{\mathcal{K}'})}{\sum_{k}\Pr(x_{j,k}|x_{i,2},\mathbf{x}_{\mathcal{K}'})}.
 \end{align*}
 Then, we have
  \begin{align*}
    &\sum_{k=1}^s\mathrm{Pr}(x_{j,k}|x_{i,1},\mathbf{x}_{\mathcal{K}'})e^{-\frac{x_{j,k}}{\lambda}}\\
    &=\sum_{k=1}^s \mu_k e^{-\frac{x_{j,k}}{\lambda}}\approx 1-\sum_{k=1}^s\mu_k \frac{x_{j,k}}{\lambda}.
  \end{align*}
   The last approximation is obtained by using $e^x\approx 1+x$ and the fact $\sum_{k}\mu_k=1$. Then, we get
  \begin{align}
    IR_{j,\mathcal{K}'}(x_{i,1},x_{i,2})\approx \left.\log\frac{1-\sum_{k}\mu_k{x_{j,k}}/{\lambda}}{1-\sum_{k}\nu_k{x_{j,k}}/{\lambda}}\right/\frac{LS_j(f)}{\lambda}.
  \end{align}
 With the additional condition $\lambda>GS(f)$, then we have $x_{j,k}/\lambda< 1,\forall k\in[s]$. Combining $\sum_{k}\mu_k=\sum_{k}\nu_k=1$,
 we obtain $\sum_{k}\mu_k{x_{j,k}}/{\lambda}<1$, and $\sum_{k}\nu_k{x_{j,k}}/{\lambda}<1$.
 Based on the extended expression of the left half of Lemma \ref{Lem:transformation of correlation}.
 We get
 \begin{equation*}
 \begin{split}
   \rho_{ij,\mathcal{K}'}>0 &\Leftrightarrow IR_{j,\mathcal{K}'}(x_{i,1},x_{i,2})\in(0,1],\\
   \rho_{ij,\mathcal{K}'}<0 &\Leftrightarrow IR_{j,\mathcal{K}'}(x_{i,1},x_{i,2})\in[-1,0),\\
   \rho_{ij,\mathcal{K}'}=0 &\Leftrightarrow IR_{j,\mathcal{K}'}(x_{i,1},x_{i,2}) =0.
 \end{split}
 \end{equation*}
 \end{proof}

\section{Proof of Theorem \ref{TH:privacy of Gaussian model}}
\label{Proof:privacy of Gaussian model}

Proof of Theorem \ref{TH:privacy of Gaussian model}

 \begin{proof}
 According to the PDP for a continuous-valued database, we compute the following
  \begin{align}\label{Eq:PDP with integration form}
    \frac{Pr(r|x_i,\mathbf{x}_{\mathcal{K}})}{Pr(r|x_i',\mathbf{x}_{\mathcal{K}})}
    =\frac{\int_{\mathbf{x}_\mathcal{U}}Pr(\mathbf{x}_{\mathcal{U}}|x_i,\mathbf{x}_{\mathcal{K}})Pr(r|s)\mathrm{d}\mathbf{x}_{\mathcal{U}}}
    {\int_{\mathbf{x}_\mathcal{U}}Pr(\mathbf{x}_{\mathcal{U}}|x_i',\mathbf{x}_{\mathcal{K}})Pr(r|s')\mathrm{d}\mathbf{x}_{\mathcal{U}}}
  \end{align}
 for any $\theta,r,|x_i-x_i'|\leq M$, where $s'=s_{\mathcal{U}}+x_i'+s_{\mathcal{K}}$.
 In accordance with Lemma \ref{Lem:property of Gaussian}, set $\mathbf{x}_u=\mathbf{x}_1|\mathbf{x}_2$, where $\mathbf{x}_1=\mathbf{x}_{\mathcal{U}}$, $\mathbf{x}_2=\{x_i,\mathbf{x}_{\mathcal{K}}\}$. Here, $u$ is the number of variables in $\mathbf{x}_{\mathcal{U}}$. According to Lemma \ref{Lem:property of Gaussian}, $\mathbf{x}_u$ follows $u$-dimensional Gaussian distribution, with the density function
    \begin{align}\label{Eq:density of Gaussian}
    f(\mathbf{x}_{u})=A\exp(-\frac{1}{2}(\mathbf{x}_{\mathcal{U}}-\boldsymbol{\mu}_{1|2})^\top\mathbf{\Sigma}_{1|2}^{-1}(\mathbf{x}_{\mathcal{U}}-\boldsymbol{\mu}_{1|2})),
  \end{align}
 where $A=(2\pi)^{-{u}/{2}}\left|\mathbf{\Sigma}_{1|2}\right|^{-{1}/{2}}$,
$\boldsymbol{\mu}_{1|2}=\boldsymbol{\mu}_1+\mathbf{\Sigma}_{12}\mathbf{\Sigma}_{22}^{-1}(\mathbf{x}_2-\boldsymbol{\mu}_2),
      \mathbf{\Sigma}_{1|2}=\mathbf{\Sigma}_{11}-\mathbf{\Sigma}_{12}\mathbf{\Sigma}_{22}^{-1}\mathbf{\Sigma}_{21}.$
 Because we adopt the Laplace mechanism,
 \begin{equation}\label{Eq:laplace in proof Gaussian}
  Pr(r|s)=\frac{1}{2\lambda}e^{-\frac{|r-s|}{\lambda}},
 \end{equation}
 where $s=s_\mathcal{U}+x_i+s_\mathcal{K}$ denotes the sum of unknown tuples, attack object tuple and known tuples.
 According to Lemma \ref{Lem:property of Gaussian}, $s_{\mathcal{U}}=\sum_{k\in\mathcal{U}}x_k$ follows the Gaussian distribution, i.e.,
 \begin{align}\label{Eq:sum U in proof of Gaussian}
  s_{\mathcal{U}}\sim N_1(\mu_0,\sigma_0^2)
 \end{align}
 where $\mu_0=\mathbf{1}^\top\cdot \boldsymbol{\mu}_{1|2}, \sigma_0^2=\mathbf{1}^\top\cdot \mathbf{\Sigma}_{1|2}\cdot \mathbf{1}.$

 By Eq.\;(\ref{Eq:variance of Gaussian}), $\sigma_0^2$ is a constant independent of $x_i$. By Eq.\;(\ref{Eq:mean of Gaussian}), $\mu_0$ has relation to $x_i$ and $\mathbf{x}_{\mathcal{K}}$. To analyze the influence of $x_i$, we should extract the item including $x_i$. Therefore, we expand $\mu_0$ and get
 \begin{equation}\label{Eq:expansion of mu_0 in Gaussian}
  \mu_0=\mu_{00}+\mu_{0i}x_i+\sum_{k\in\mathcal{K}}\mu_{0k}x_k,
 \end{equation}
 where $\mu_{00}$ is a symbol to represent that all items have no relation to $x_i,x_k$. Therefore, $\mu_0$ is only dependent on $x_i$ for given $x_k,k\in\mathcal{K}$. Combining Eq.\;(\ref{Eq:sum U in proof of Gaussian}) and Eq.\;(\ref{Eq:expansion of mu_0 in Gaussian}), the density function of $s_{\mathcal{U}}$ is
 \begin{align}
    f(s_{\mathcal{U}})&=\frac{1}{\sqrt{2\pi}\sigma_0}e^{-\frac{(s_{\mathcal{U}}-\mu_0)^2}{2\sigma_0^2}}\notag\\
    &=\frac{1}{\sqrt{2\pi}\sigma_0}e^{-\frac{(s_{\mathcal{U}}-\mu_{00}-\mu_{0i}x_i-\sum_{k\in\mathcal{K}}\mu_{0k}x_k)^2}{2\sigma_0^2}}.
    \label{Eq:density of sum X_U}
 \end{align}
 Let $z=r-s, t=r-s_{\mathcal{K}}-\mu_{00}-\sum_{k\in\mathcal{K}}\mu_{0k}x_k-(1+\mu_{0i})x_i$. Substituting Eq.\;(\ref{Eq:laplace in proof Gaussian}) and Eq.\;(\ref{Eq:density of sum X_U}) into Eq.\;(\ref{Eq:PDP with integration form}), we have
 \begin{align*}
   &\int_{\mathbf{x}_\mathcal{U}}Pr(\mathbf{x}_{\mathcal{U}}|x_i,\mathbf{x}_{\mathcal{K}})Pr(r|s)\mathrm{d}\mathbf{x}_{\mathcal{U}}=\\
   &\int_{z}\frac{1}{\sqrt{2\pi}\sigma_0}e^{-\frac{(t-z)^2}{2\sigma_0^2}}\frac{1}{2\lambda}e^{-\frac{|z|}{\lambda}}\mathrm{d}z
   =\frac{1}{2\lambda}e^{\frac{\sigma_0^2}{2\lambda^2}}G(\frac{t}{\lambda};\frac{\sigma_0}{\lambda}),\\
   &\int_{\mathbf{x}_\mathcal{U}}Pr(\mathbf{x}_{\mathcal{U}}|x_i',\mathbf{x}_{\mathcal{K}})Pr(r|s)\mathrm{d}\mathbf{x}_{\mathcal{U}}=\\
   &\int_{z}\frac{1}{\sqrt{2\pi}\sigma_0}e^{-\frac{(t'-z)^2}{2\sigma_0^2}}\frac{1}{2\lambda}e^{-\frac{|z|}{\lambda}}\mathrm{d}z
   =\frac{1}{2\lambda}e^{\frac{\sigma_0^2}{2\lambda^2}}G(\frac{t'}{\lambda};\frac{\sigma_0}{\lambda}),
 \end{align*}
 where $t'=r-s_{\mathcal{K}}-\mu_{00}-\sum_{k\in\mathcal{K}}\mu_{0k}x_k-(1+\mu_{0i})x_i'$. So
 \begin{align}
   \log\frac{Pr(r|x_i,\mathbf{x}_{\mathcal{K}})}{Pr(r|x_i',\mathbf{x}_{\mathcal{K}})}=\log G(\frac{t}{\lambda};\frac{\sigma_0}{\lambda})-\log G(\frac{t'}{\lambda};\frac{\sigma_0}{\lambda}).
 \end{align}
 By the mean value theorem and Lemma \ref{Lem:lemma for prove Gaussian model}, we further have
 \begin{align*}
   \log\frac{Pr(r|x_i,\mathbf{x}_{\mathcal{K}})}{Pr(r|x_i',\mathbf{x}_{\mathcal{K}})}&=\frac{\partial \log G(\xi)}{\partial (t/\lambda)}\cdot(\frac{t}{\lambda}-\frac{t'}{\lambda})\\
   &\leq \left| {t}/{\lambda}- {t'}/{\lambda}\right|=|1+\mu_{0i}|\cdot {|x_i-x_i'|}/{\lambda}.
 \end{align*}
 Under the assumption $|x_i-x_i'|\leq M$, the privacy leakage is
 \begin{align*}
   l_{\mathcal{A}_{i,\mathcal{K}}}(\theta)=|1+\mu_{0i}| {M}/{\lambda}.
 \end{align*}
\end{proof}

\ifCLASSOPTIONcaptionsoff
  \newpage
\fi

\end{document}